\newcommand{\nat}{\mathbb{N}}
\newcommand{\erre}{\mathbb{R}}
\begin{document}

\title{An image structure model for exact edge detection %\thanks{Grants or other notes
%about the article that should go on the front page should be
%placed here. General acknowledgments should be placed at the end of the article.}
}
%\subtitle{Do you have a subtitle?\\ If so, write it here}

%\titlerunning{Short form of title}        % if too long for running head

\author{Alessandro Dal Pal\`u}

%\authorrunning{Short form of author list} % if too long for running head

\institute{Alessandro Dal Pal\`u \at
              University of Parma,
              Parco Area delle Scienze 53/A
              43124 Parma, Italy \\
              Tel.: +39-0521-906962 
              Fax: +39-0521-906950\\
              \email{alessandro.dalpalu@unipr.it}\\
              ORCID:0000-0003-0353-158X
                         %  \\
%             \emph{Present address:} of F. Author  %  if needed
}

\date{Received: date / Accepted: date}
% The correct dates will be entered by the editor

\maketitle
%\tableofcontents

\begin{abstract}
The paper presents a new model for single channel images low-level interpretation. The
image is decomposed into a graph which captures a complete set of structural features. 
The description allows to accurately identify every edge location and its correct connectivity. 
The key features of the method are: vector description of the edges, subpixel precision,
and parallelism of the underlying algorithm.
The methodology outperforms classical and state of the art edge detectors at both conceptual and experimental levels. It
also enables graph based algorithms for higher-level feature extraction. Any image processing pipeline can benefit from such results: e.g., controlled denoising, edge preserving filtering, upsampling, compression, vector and graph based pattern matching, neural network training.

\keywords{Edge Detection, Contour, Image structure, Computer Vision}
% \PACS{PACS code1 \and PACS code2 \and more}
% \subclass{MSC code1 \and MSC code2 \and more}
\end{abstract}

\section{Introduction}
\label{intro}

Edge detection is one of the most active research fields in computer vision. It represents the first step of image features abstraction and interpretation process. Edges are related to significant changes in image values that witness the presence of higher level properties (object boundaries, change in reflectance, reflections, etc.). The quality of any image processing pipeline relies on the underlying edge detector and on its effectiveness in removing false positives while minimizing false negatives. The 
presence of noise, due to acquisition, compression and filtering, affects the detection. Commonly, a smoothing filter preprocesses the image, in order to reduce the effect of the noise, while preserving the main features. Local edge detection is strictly related to a differential or gradient operator that highlights the presence of strong changes in the image along a specific direction. Eventually, the detection combines them into a set of lines/list of pixels/raster that describes a coherent disposition of each edge according to the maximal change regions of the image. 

In  literature, several surveys on edge detectors cover last decades of research directions~\cite{arbelaez2011contour,papari2011edge,ziou1998edge,shapiro2001computer}. We refer the reader to such references for further details. Let us shortly cover some approaches that are related to our method. The primordial convolutional masks approaches (Roberts~\cite{roberts1963machine}, Prewitt~\cite{prewitt1970object} and Sobel~\cite{duda1973pattern}) computed 
first-order derivatives with kernel convolutions. The maximal intensity pixels in the resulting image indicate the presence of an edge. Such output contains only raster information, which is visually pleasing but of limited use for automatic processing, since no edge labeling and actual interpretation is performed. Canny's edge detector~\cite{canny1987computational} builds on such (smoothed) output, selects local maxima pixels along gradient directions that correspond to edges and connects them through a chaining process. Such detector and its legacy over the last 30 years represent a landmark for edge detection. Canny's detector is optimal with respect to three criteria: it provides a low error rate, a minimal error in predicted position w.r.t. the actual edge and a single response to a single edge.
Many approaches in literature provide as output of edge detection a raster matrix, where edges are identified by pixels, usually at the same resolution of the original image. %Pixels can be linked and for a discrete polyline.

\begin{figure*}[ht]
\begin{center}
\includegraphics[width=0.32\textwidth]{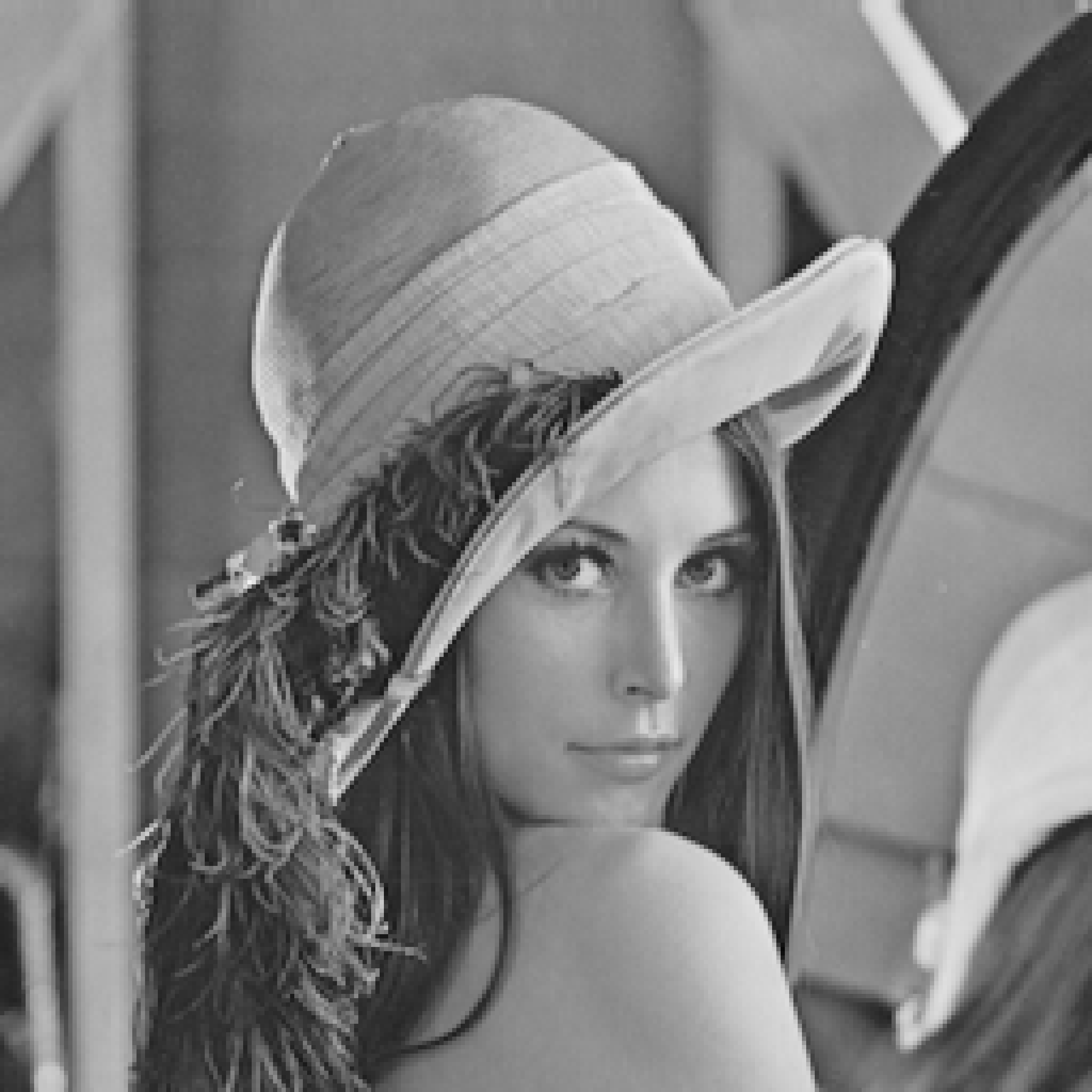}
\includegraphics[width=0.32\textwidth]{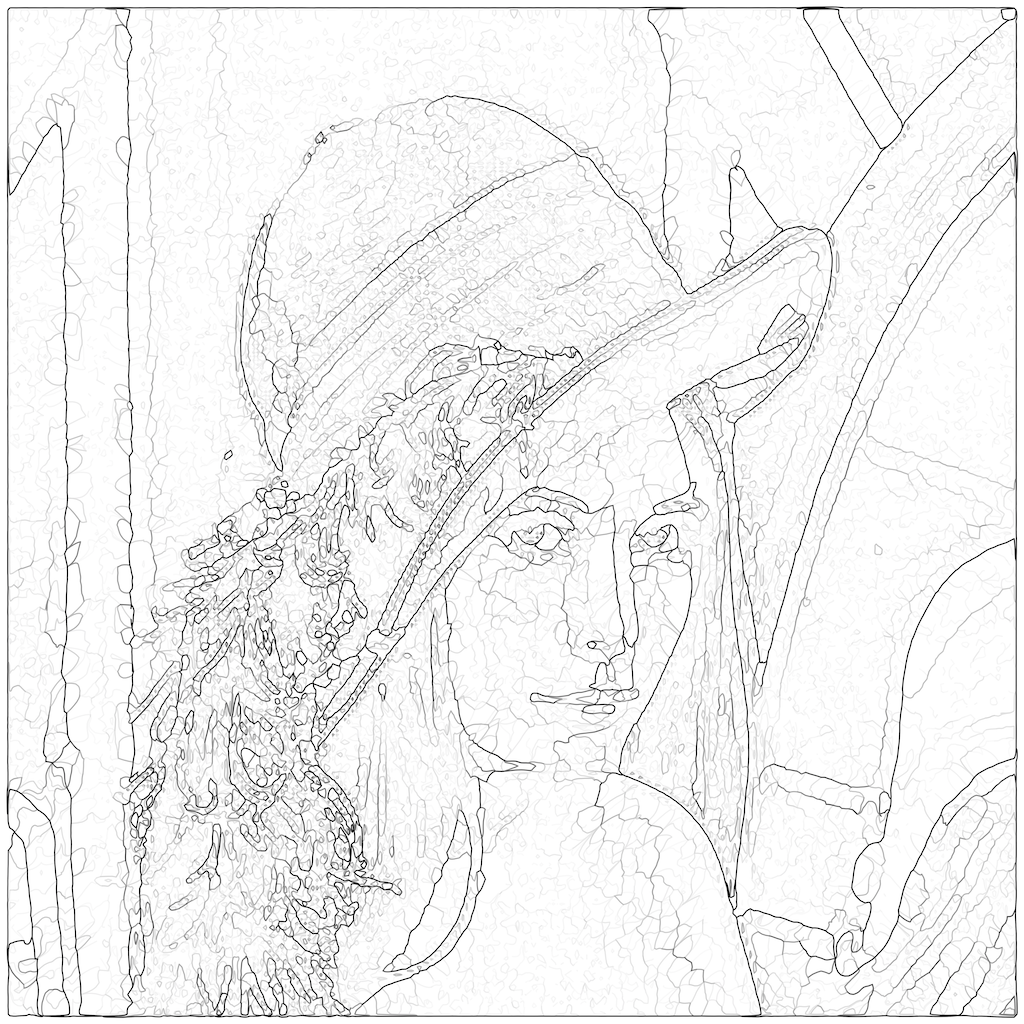}
\includegraphics[width=0.32\textwidth]{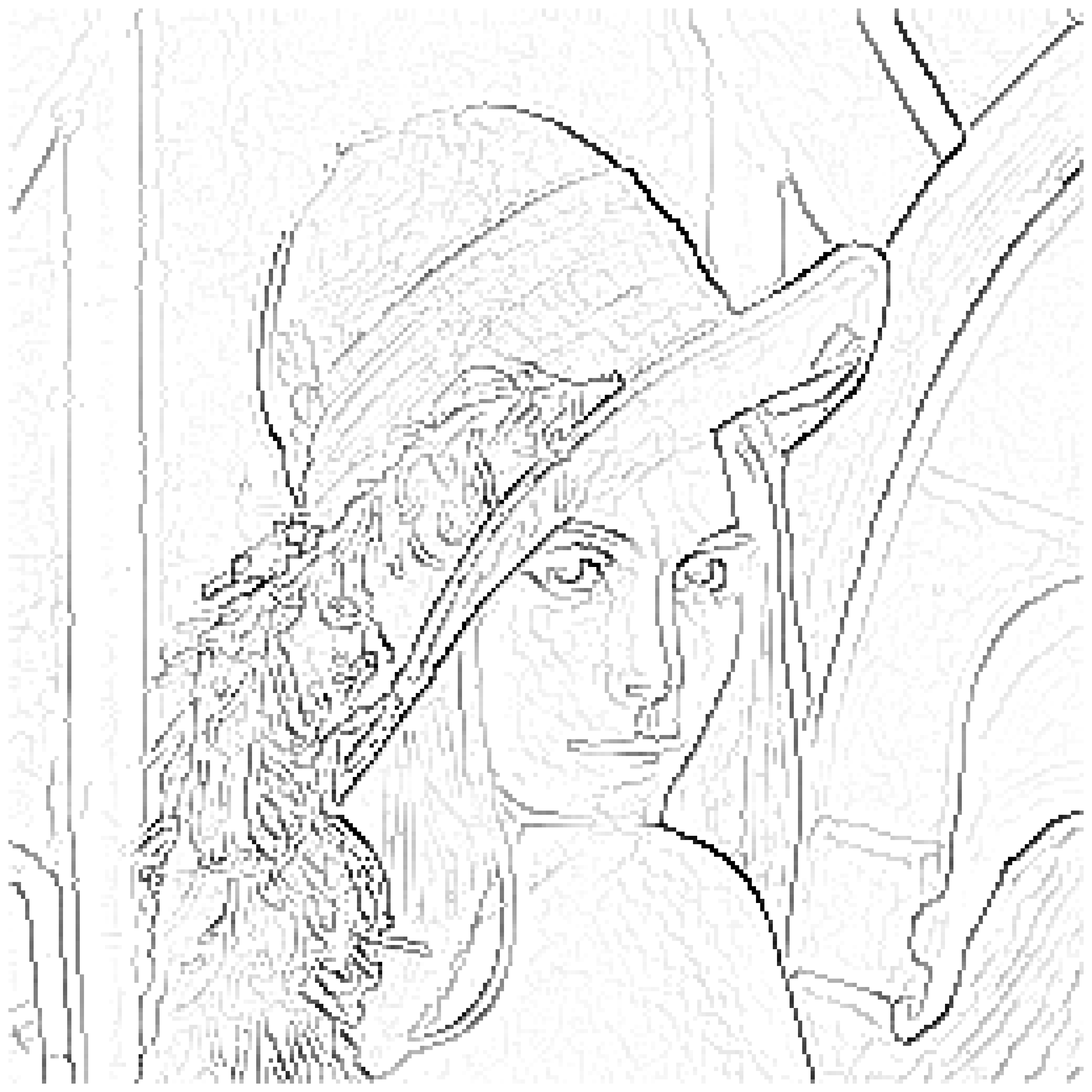}

\caption{256x256 Lena. Original image (left), proposed edge detection (center), Canny's detection (right) \label{fig:test3}}
\end{center}
\end{figure*}

Color and texture information~\cite{cremers2007review,malik2001contour} are considered in the context of image segmentation. Subpixel precision~\cite{hermosilla2008non,jensen1995subpixel} has also been included in the detection process. Global interpretations~\cite{shotton2008multiscale,catte1992image} and statistical approaches~\cite{konishi2003statistical} have been presented. In recent years, the advent of new parallel architectures (GPGPU) allowed a rise of Neural Network based methods~\cite{xie2015holistically,dollar2013structured}, where a training with manually annotated edges is performed. 
Global minimization algorithms (active snakes~\cite{snakes}) have been proposed, where the output is a vector polyline that converges to the edge on a continuous domain. %The only limitation is that in the presence of edge bifurcation, the model is not capable of automatically tracking a complex landscape.

%%%%%%% junctions
Junctions of edges (the location where multiple edge lines intersect and/or sharply bend)
are usually not computed by the low-level edge detection process.
Starting from Harris' corner detector~\cite{harris1988combined}, where abrupt changes in gradient directions are identified, 
significant work aimed at including local and global information for accurate positioning and detection of intersections has been carried on~\cite{ziou1998edge}.
The task is difficult because high quality edge information is required for a correct interpretation of the junction. Moreover, edge strength and gradients become weak at junctions. Contextual information and higher level interpretation~\cite{mcdermott2004psychophysics} have been proposed to increase accuracy.
Finally, perceptual detection of junctions have been proposed by means of ground truth neural network training~\cite{maire2008using}
A comprehensive model for edge junctions did not emerge yet, since it combines perceptual interpretation of the image and context information. As already suggested by~\cite{harris1988combined}, edge and junction detection should be performed by a single detector that takes advantage of both kinds of information.

This paper proposes a new formalization of edges and their connectivity with the goal of overcoming the two common challenges of state-of-the-art detectors: edge tracking and edge connectivity. Edge tracking is the process of identifying all maximal gradient magnitude points (or second derivative zeroes) and linking them in order to follow the edge line across the image. This task is impaired by noise and can easily lead to misinterpretation. Moreover, if performed at discrete level, it is heavily affected by spatial quantization of pixels and gradients. To the best of our knowledge, no global relationships between edges and junctions are modelled in a graph like structure.

The proposed model is based on a new point of view:
 an edge is modelled as a \emph{closed} and \emph{oriented} line that traverses the image and splits and merges into other edges. A single connectivity graph contains all possible edges (or cycles) and describes the correct topology of edges. Rather than tracking the accurate disposition of each edge throughout the image, the focus is to identify a set of key points that must be traversed by the edges (associated to the nodes of the graph) and to deduce the topology of connections by means of the analysis of the relationships between neighbor nodes. Image edges can be drawn, without tracking, as a rendering of graph edges and lines interpolation.
The proposed edge model natively combines edge lines localization and connectivity detection into a single process. 
It also decouples signal raw information from high-level interpretation.
%The image is processed to output edge information and edge graph from signal analysis point of view. 
Higher level interpretations can be built on such graph: e.g., perceptual faithfulness, corner detection, basic geometry (as opposed to Hough transform), classification of edge types, segmentation.

The main features offered by the proposed detector are:

\begin{itemize}
\item{\bf accuracy} Every edge, its position and strenght are detected. 
%, and also the width of the edge (namely how far it takes to the maximal gradient to find its next local minima on both sides). 

\item{\bf connectivity} The detector outputs an edge graph, where
nodes are junctions points and graph edges represent parts of image edge lines. % has no leaves and any edge form a cycle in the graph. 

\item{\bf no preprocessing} No image pre-filtering is required. The edge extraction directly descends  from the image structure and topology.

\item{\bf no thresholds nor parameters} The model is independent on image based thresholds, since it recovers structural properties. %The edge data structure can be further post-processed depending on the perceptual model in use.

\item {\bf vectorial output} The edge features are in vector form, in order to adapt to the finest details of the image. Vectors are oriented, so that directional graph-based algorithms can be used.

\item {\bf subpixel precision} Vector spatial precision should be unlimited.

\item {\bf no prior knowledge} The detector is not informed  on domain related knowledge (i.e. trained neural networks, registration and calibration). 

\item {\bf global} The algorithm performs a global analysis. Local information is not able to capture the potential edge relations with the context. Any application of the algorithm on a subimage may lead to poorer results.

\item {\bf small images} The algorithm is able to accurately process any image size, starting from 2x2 pixel images.

\item {\bf arbitrary image depth} The algorithm accepts any pixel channel depth. In particular, even if continuous values (modeled as floating point) are used, no loss of information (e.g., quantization) is caused be the algorithm.

\item {\bf spatial domain} The algorithm deals with original image information: no transformed spaces are used in the process.

\item {\bf efficiency and parallelism} The detector is compute efficient and the algorithm is suitable for parallel execution.

\item {\bf perceptual correspondence} Edges are drawn by perceptually coherent lines.

\end{itemize}

The paper is organized as follows. In Section~\ref{sec:not} some notation is provided. Section~\ref{sec:structure} presents the structural model that defines image features and
Section~\ref{sec:em} shows how to extract edge information from it. In Section~\ref{sec:res} we present some edge detection examples for a set of significant synthetic and natural images. In Section~\ref{sec:appl} we highlight some of the potential benefits of such method for most demanding applications.
Finally in Section~\ref{sec:conc} we draw the conclusions and final remarks.

\section{Notation and definitions}
\label{sec:not}
A single-channel two-dimensional image is composed of pixels (squares of unit size) tiled on a regular grid on a cartesian space. Each pixel is associated to a value, which usually represents the average measurement of the phenomena of interest, projected among the pixel surface.

Formally, a discrete image is  a function $I: [0,n-1]\subset\nat \times [0,m-1]\subset\nat \longrightarrow \erre^+$.
The codomain values can be either quantized gray levels (e.g. 1, 8 or 16 bit representations) or a continuous value (e.g. represented by a floating point value). In the paper no assumptions about value type is made. The syntax $I(i,j)=v$ (or $I(p)=v$), where I is the image and $v\in \erre$, describes the value $v$ at position $p=(i,j)$ in the image. The \emph{pixel} $(i,j)$, with value $I(i,j)$, is the square defined by the corners $(i+i',j+j'), \mbox{with } i',j'\in\{0,1\}$. Practically, pixels are arranged on a Cartesian space and each pixel value is stored in its lower-left corner.
%Without loss of generality we assume that $I(i',j)=I(i,j')=0$, with $i'\in{0,n-1},j'\in{0,m-1}$. Informally, we assume to have a border of width 1 pixel around the image with minimal value. If the image does not have this property, it can be easily be extended consequently.

If not stated differently, indices named $i,j,k,..$ will range over $\nat$, while variables $x,y,z,w,..$  will range over $\erre$. In particular, $i\in[0,n-1]\subset\nat$, $j\in[0,m-1]\subset\nat$, $x\in[0,n]\subset\erre$, $j\in[0,m]\subset\erre$. When clear from the context, also image coordinates will inherit corresponding ranges.

Let us introduce a continuous function $R$ that is built from the image function $I$ through bilinear interpolation.
The image $R: [0,n]\subset\erre\times[0,m]\subset\erre\longrightarrow \erre^+$, where $R(i,j)=I(j,i)$ and the values in $R$ inside a pixel are defined by the bilinear interpolation: given $(x,y)=(i+z,j+w)$, where $z,w\in[0,1]\subset\erre$ are the offsets within the pixel, $a=I(i,j), b=I(i+1,j), c=(i,j+1), d=(i+1,j+1)$ and 

\noindent $R(x,y)=\sum_{a=0}^1\sum_{b=0}^1 v_{a,b}z^aw^b$, where $v_{0,0}=a, v_{1,0}=b-a, v_{0,1}=c-a$ and $ v_{1,1}=d+a-b-c$. 
Note that $R$ is continuous.

The image can visualized as a surface defined by the set of three-dimensional points $(x,y,R(x,y))$.
In practice, pixel values define the altitude of a landscape, composed of ridges, valleys, saddle points, local maxima and minima that play a central role in edge detection.

Given an integer coordinate $(i,j)$, let us define the set of 8-neighbors $n8(i,j)=\{(i',j')$.$(i,j)\neq(i',j'), |i'-i|\leq 1, |j'-j|\leq 1\}$. The set of 4-neighbors for $(i,j)$ is $n4(i,j)=\{(i',j')$.$|i'-i|+|j'-j|= 1\}$. In some cases discussed below, we characterize a generic neighborhood $n(i,j)$ as a subset of the 8-neighborhood, $n(i,j)\subseteq n8(i,j)$. 

Constant regions of the image are avoided, since they are not relevant for edge characterization and they introduce many technicalities in formal proofs and algorithms.
Rather than altering the image values, comparisons between pixels are redefined to break ties. It is sufficient to compare $n8()$ neighbors by using lexicographical order of coordinates. Given a point $p=(i,j)$ and two points $p_1=(i+i_1,j+j_1), p_2=(i+i_2,j+j_2)$,$p_1,p_2\in n(p)$, we define
 $I(p_1)\prec I(p_2)$, iff $I(p_1)<I(p_2)$ or $I(p_1)=I(p_2) \wedge 2i_1+3j_1<2i_2+3j_2$. Note that $i_1,j_1,i_2,j_2\in \{-1,0,1\}$.
W.l.o.g, the upper left value is the most favourite in case of equal pixel values. From now on, we assume that the image $I$ has all distinct values and/or refer to the operator $\prec$.

An \emph{isoline} is the intersection of the image surface with a constant value $v$ plane parallel to x-y plane. Formally, it is the locus of points $(x,y)$ that are solution to the equation $R(x,y)=v$. Within the function's domain, each isoline is a closed curve, possibly self-intersecting, whose points share the same value in $R(x,y)$. The line can be associated to an orientation: while moving along the line we assume to have greater values on the left.

There is a degenerate case when an extended region of the function is constant: according to the definition, the isoline contains an area rather than a line. In the context of bilinear interpolation, it easy to notice that the smallest case happens when four pairwise adjacent pixels have equal values. Let us assume that such cases are excluded and rely on the $\prec$ operator that implicitly introduces an infinitesimal slope to the constant plane. 
%These cases will be handled in the design of the actual algorithm and ideally are handled by introducing an infinitesimal slope that guarantees the absence of constant regions and does not alter the image structure.

The image \emph{gradient} $\nabla R$ is the vector made of the two partial derivatives along the $x$ and $y$ axis respectively, i.e. $[ \frac{\partial R}{\partial x},\frac{\partial R}{\partial y}]^T$. The gradient magnitude is defined as $\sqrt{\frac{\partial R}{\partial x}^2+\frac{\partial R}{\partial y}^2}$ and the gradient direction is the normalized gradient.

Focussing on a pixel with lower left corner in $(0,0)$ and 
recalling the bilinear interpolation definition given above,
we have that, given $x,y\in [0,1]$, $R(x,y)=v_{0,0}+v_{1,0}x+v_{0,1}y+v_{1,1}xy$. If follows that $\nabla R(x,y)=[ v_{1,0} + v_{1,1}y, v_{0,1} + v_{1,1}x]^T$. %If $v_{1,1}\neq 0$ $R$ is not a plane and it may present a saddle.

%The Hessian of $R$ is the 2x2 matrix containing second-order partial derivatives $H(R)=[\frac{\partial^2 R}{\partial x^2 }, \frac{\partial^2 R}{\partial x\partial y}; \frac{\partial^2 R}{\partial y\partial x} \frac{\partial^2 R}{\partial y^2}] = [0, v_{1,1}; v_{1,1}, 0]$. 

Function $\nabla R$ is piecewise continuous. Along pixel sides and corners, discontinuities of direction and magnitude may be encountered. 

%formula isoline, steepest line

%For each point belonging to an isoline, it follows that the gradient direction is orthogonal to the line (proof: along the isoline tangent the function is constant, therefore decomposing the gradient direction along the tangent and its normal, gives only normal....). 
%\hl{prooof..} inside a pixel -> gradient direction continuous, gradient magnitude function is continuous (no sopra punto split)
%(futuro edge splitting solo su bordi!)

\section{Image structure analysis}
\label{sec:structure}

We present an informal outlook of the proposed edge detection method.
The image model is based on a continuous surface (as in the plot of $R$). From a computational point of view, however, it is desirable to work at discrete level, i.e. with the original image $I$. 
The paper presents the theoretical results on $R$ and maps them to an acceptable algorithmical approximation on $I$.
%A surface contains peaks and valleys (local extrema) and saddles: 
%the relations among such points are crucial for the correct retrieval of edge structure.

%%%%%%%% da dire sotto
%Every graph edge represents a segment of an edge which is involved in at least one cycle in the graph.
%Edges are supported by isolines (lines that cover equally image valued points),  . A group of isolines  
%edges have an interval of values (support)

Figure~\ref{fig:edge-isolines} depicts a discrete image $I$ and shows some isolines associated to the corresponding $R$ function. Note that isolines partition the image into a set of areas. Traditionally, an edge is positioned where local maxima in gradient magnitude arise along the local gradient direction. This is a local property that should hold for every point along an edge line. If we map the property to the context of isolines, it means that an edge should be drawn at any place where the isolines get closer, since they show a gradient magnitude increase (compare to non-maxima suppression in Canny's detection).  The relationship between edges and isolines is actually deeper: an edge point is locally \emph{supported} by a set of isolines. 

Let us focus, in the figure, on the image section $A$, that connects a local (blue) minimum point to a local (red) maximum point by means of a white line that traverses the domain. Such line is a \emph{monotonic path}, meaning that it crosses an isoline at most once. In the paper, we will also consider \emph{steepest path}s that include the requirement that the path is orthogonal to isolines. 
Such paths contribute to classify isolines and to define the presence of edges.

The steepest path A intersects a set of isolines depicted with orange, green and cyan colors. This set of isolines concur in the identification of the purple point, namely the local maxima along the steepest path. We assume that an edge (depicted as a green dashed line) must be traversing such point. In case of several maxima are present along the path A, multiple edges will be traversing the path. The support of an edge will be partly defined by the set of isolines values that are included by the two local minima neighbor to the purple maxima along the path.
In principle, this idea could be applied to any steepest path in the image, as in Canny's non-maxima suppression. In addition,  while retrieving edges positions, the fundamental information about which isolines are related to a specific edge is computed. 

In particular, the same isolines that support the traversal of the edge through the path A, extend across the image and
form contiguous bundles (or a continuous range of image values), that sometimes separate and split in different sub-bundles. For example, cyan isolines loop around the image local maxima (the red dot), while the orange ones loop around the local minima (in blue). However, the green isolines cover a longer distance and eventually return at path A (due to $R$ image continuity).
This behaviour is caused by the presence of image \emph{saddles} (visualized by the green dot) or points where the gradient vanishes but second order derivatives along two orthogonal directions have opposite signs. Practically, at a saddle point isolines travel towards different image areas, according to their value. In the picture cyan and purple isolines cover higher values than the saddle point, while green isolines have smaller values. The set of green + cyan isolines passing through path A get divided into two sub bundles (resp. green and cyan) by the saddle point.
The path of an isoline provide information about edges at different locations. Green isolines, for example, contribute to the support of an edge at path A, but at the same time they reach path B. This fact is at the basis of the edge connectivity detection. An isoline that interacts with different paths represents a possible connection between the supported edges.
Note that at path B, the relative edge is also supported by blue and red isolines that are totally unrelated to isolines of path A.
We can infer that a bundle of isolines contribute to the definition of edge properties only if it is completely covered by a path that joins two local extrema in the image. In this way, all local maxima across the path are completely captured, but, more subtly, all isolines that support an edge are identified, so that they can be related to other paths.

The next important question is if and how the hypothetical edge passing through both paths A and B are related. In presence of a shared support (i.e. the green isolines), this set of isolines that connects the two paths suggests the presence of  an edge that connects the two purple dots. This property has a significant impact on edge detection, since there is no need to identify and connect \emph{all} gradient magnitude local maxima across the area between two paths, in order to discover the presence of a (possibly long) edge. 
The structural property derived from steepest paths and isolines guarantees the presence of an edge, even in the absence of a complete analysis of all pixels. 
A careful selection of the appropriate set of steepest paths will be discussed as well as a fast method to accurately draw the edges between paths. 

Connectivity plays a central role in the model and Figure~\ref{fig:outlook} presents a first simple example. 
Ideal edges are marked by green lines, isolines by black lines, local maxima by red circles, local minima by blue and the saddle points by green circles. The purple points represent the local maxima in gradient magnitude along each depicted steepest path.
The bold green line highlights the edge that connects the paths A and B in the previous figure.

\begin{figure}[htb]
\begin{center}
\includegraphics[width=0.40\textwidth]{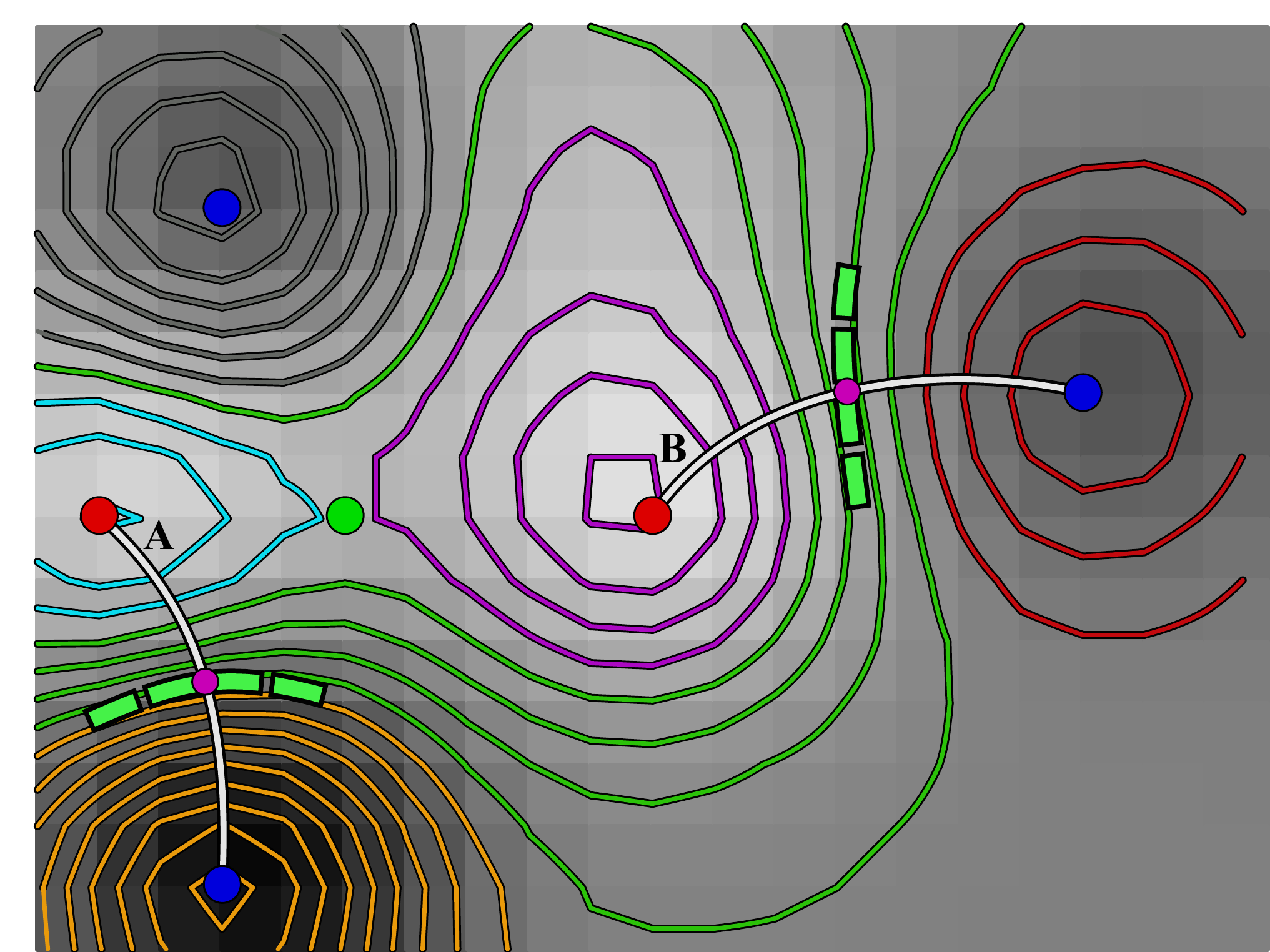}
\caption{Image isolines\label{fig:edge-isolines}}
\includegraphics[width=0.40\textwidth]{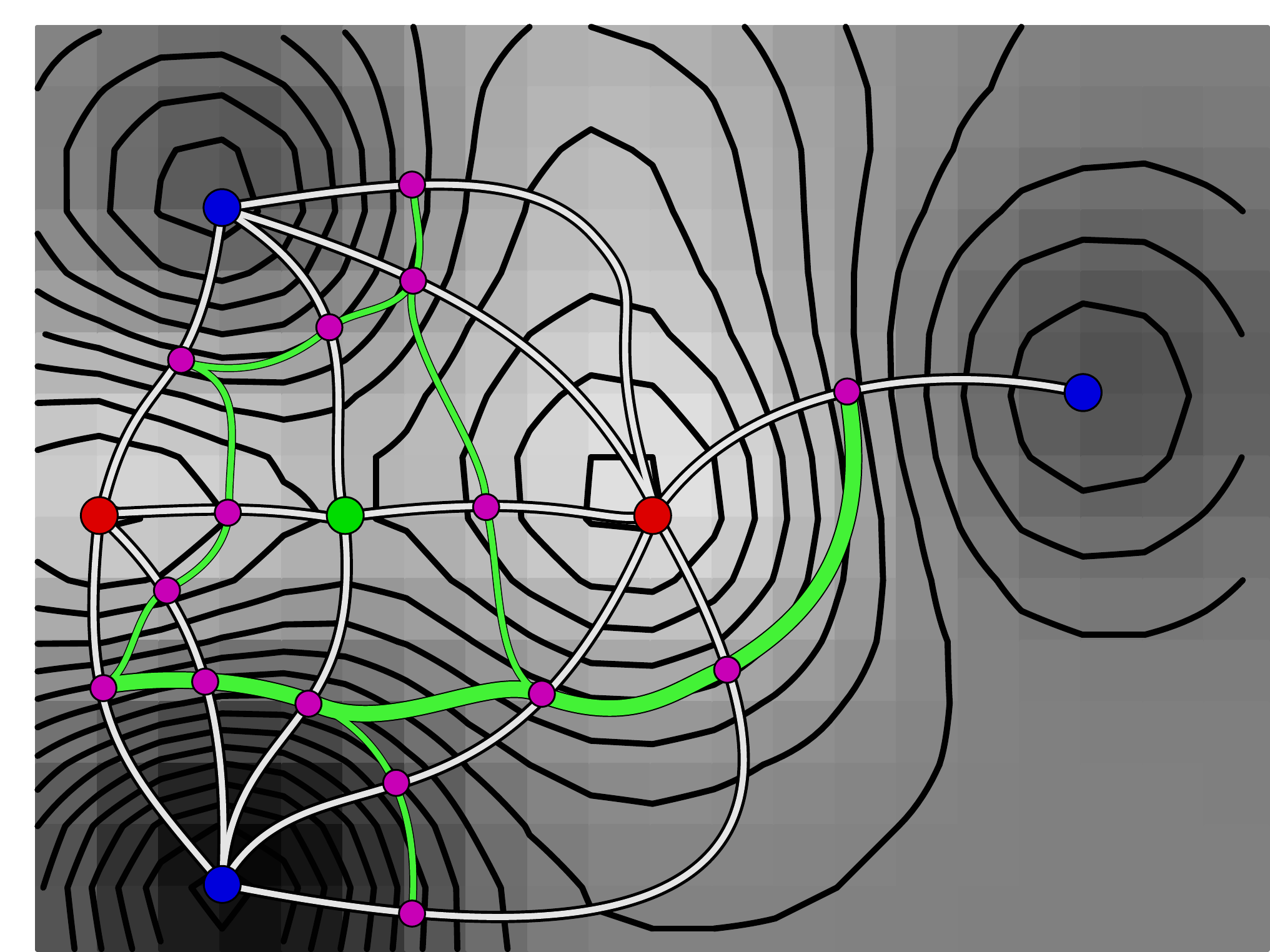}
\caption{Isolines, steepest paths and edges\label{fig:outlook}}
\end{center}
\end{figure}

Our connectivity model produces an edge graph where nodes are the purple points and graph edges are associated
to an image edge that links two nodes. Each maxima along a path is supported by a bundle of isolines which divides into other bundles crossing other paths, because of the presence of saddles and/or  maxima on different paths. The bundle evolution on paths allows to follow corresponding edge connectivity.

In this work we give more importance to the detection of edge branching, rather than its accurate localization. This is related to the fact that judging the actual edge branching point can be subjective and, if necessary, it can be post-processed with the preferred perceptual model, provided the correct connectivity graph is available with the information about such branching. Compare Figure~\ref{fig:y}, where correct connectivity is detected (left) and post processed into a more visually pleasing version, where the junction becomes translated to create a smoother line joining (right). 

The detection process: (1) identifies all steepest paths that section isoline bundles; (2) it defines the regions of interest delimited by steepest paths, which result in an image partition; (3) analyzes the regions for graph construction, where graph edges cross exactly one partition; (4) it draws edges without tracking. 
%We will show that the graph based on all steepest paths is planar and each path can be associated to at least an existing edge.
%Moreover the edges belonging to two adjacent regions smoothly join on their common path, since they pass through the same local maxima in gradient magnitude. 
Each region can be processed independently and in parallel.
%The procedure is feasible and robust, as opposed to direct detection methods or edge thinning and tracking (like in Canny's approach).

Isolines and steepest paths, however, need  a special treatment in the presence of critical points (saddles). In fact, 
isolines can intersect themselves at critical points as well as steepest paths can bifurcate and merge,
giving rise to a complex network of paths joining local extrema (depicted by white lines in Figure~\ref{fig:outlook}). 
%The graph, whose nodes are local extrema and saddles and edges are the steepest paths, is planar. This property guarantees that each region, delimited by two paths and two local extrema, intersect a set of non intersecting isolines and hosts at least one edge. 

\subsection{Caveats}

This plan sketched above hides a number of issues that need to be tackled.
The first consideration is that there is an infinite number of steepest paths, since they can be drawn starting from any point in the continuous 2d-domain. In this scenario, the proposed detection would resemble the Canny's procedure assuming an infinite resolution image: non-maxima suppression applies along every steepest path and infinitesimally close paths are connected by their maxima. Since this is not computationally feasible (and also unnecessary), the paths are clustered to a finite set of representatives that retain sufficient information for edge localization and correct connectivity. Redundant paths are characterized by the following property: two paths connect the same extrema and they contain the same (possibly empty) set of saddle point. This allows to limit the number of paths by a finite upper bound (the number of saddles). In Figure~\ref{fig:outlook}, for example, the two lowest leftmost paths are redundant as well as the two lowest rightmost.

It is preferable to design a core procedure that is based on $I$, since dealing with continuous coordinates is computationally expensive. At the same time, though, final edge information should be provided as vectors with continuous coordinates. For example, the bilinear interpolation possesses a closed formula for steepest paths. Nevertheless,  this would introduce additional floating point arithmetics that can be avoided and approximated by the introduction of \emph{discrete increasing steepest paths}. 
It is worth noting that even on $R$, as opposed to any $C^1$ function, due to the piecewise pixel interpolation, an \emph{increasing} steepest path can be different from a \emph{decreasing} steepest path. Since the issue must be tackled also for image $R$, we propose a solution that also works with $I$ image. 

Another drawback of bilinear interpolation is the presence of saddle-like points at corners (see Section~\ref{sec:saddles}), where $\nabla R$ is often discontinuous. Discrete paths are defined to correctly handle such cases.

%The approximated and discrete version of the connectivity graph is planar and it connects all extrema and saddles/saddles-like points of $R$.

%A region partitioned by the graph has two paths that delimits it. We impose that such paths intersect only at the extrema, in order to avoid that %a region gets fragmented and separated . 
%In case of a n4 neighborhood for building the paths, this may not happen. To avoid this we consider 

%Edge bifurcation (4) and related edge connectivity is taken care by analyzing the partitions and their surrounding paths. In particular, the actual bifurcation point is not directly calculated, in order to avoid expensive edge tracking. Instead, a correct model for connectivity is presented and the edge is assumed to bifurcate across the region, while the exact splitting location is not exactly determined.

Since each region defined by the steepest paths does not contain any saddles, the edge drawing is based on isoline interpolation within the region (see Section~\ref{sec:edgeDrawing}), in order to avoid tracking of the maximal gradient magnitude.

\subsection{Saddles}
\label{sec:saddles}
%The section characterizes saddle points and defines a discrete graph that captures their connectivity.
%We base our discussion on the $R$ version of the image, with the ultimate goal of dropping almost all continuous models and directly work on $I$.

Saddles determine isolines and steepest paths behaviours.
We first introduce the \emph{Continuous Monotonic Increasing Path} in order to provide a general definition of saddle point, due to the fact that $R$ is not a $C^1$ function.

\begin{definition}[Continuous Monotonic Increasing Path]
A continuous monotonic increasing path is a continuous line $L(t)=(x_t,y_t)$ such that if $t<t'$ then $R(x_t,y_t)<R(x_{t'},y_{t'})$.
\end{definition}

Starting from a local minimum $L(0)$, such path identifies a line on the surface, such that, while proceeding, it increases its altitude. Since  we assume a function without constant pixels and sides, there is always a next candidate except for local maxima.
%A key property is that not every peak in $R$ is reachable from a specific local minimum. 
%Saddle points impose some constraints on the monotonic paths and they play a central role in the reachability of extrema.

\begin{definition}[Saddle]\label{def:saddle}
Consider two distinct curves $\ell_1$ and $\ell_2$ such that they intersect at the point $p$. $\ell_1$ and $\ell_2$ are pairwise delimited by points $p_1,p_2$ and $p_3,p_4$ respectively. The $p_i$ points and $p$ are all different. If the sub-curves delimited by $(p,p_1)$,$(p,p_2)$,$(p_3,p)$, $(p_4,p)$ are all monotonic increasing paths and $p_1, p_4, p_2, p_3$ appear in clockwise order around $p$, the point $p$ is a saddle.
\end{definition}

Definition~\ref{def:saddle} is slightly more general than the usual one, since it accounts for possible discontinuities of  $\nabla R$ at pixels borders. In such cases, the classical definition ($\nabla R(p)=\vec 0$, but at the same time $p$ is not a local maximum nor a minimum) would not hold anymore.

\begin{figure}[ht]
\begin{center}
\includegraphics[width=0.35\textwidth]{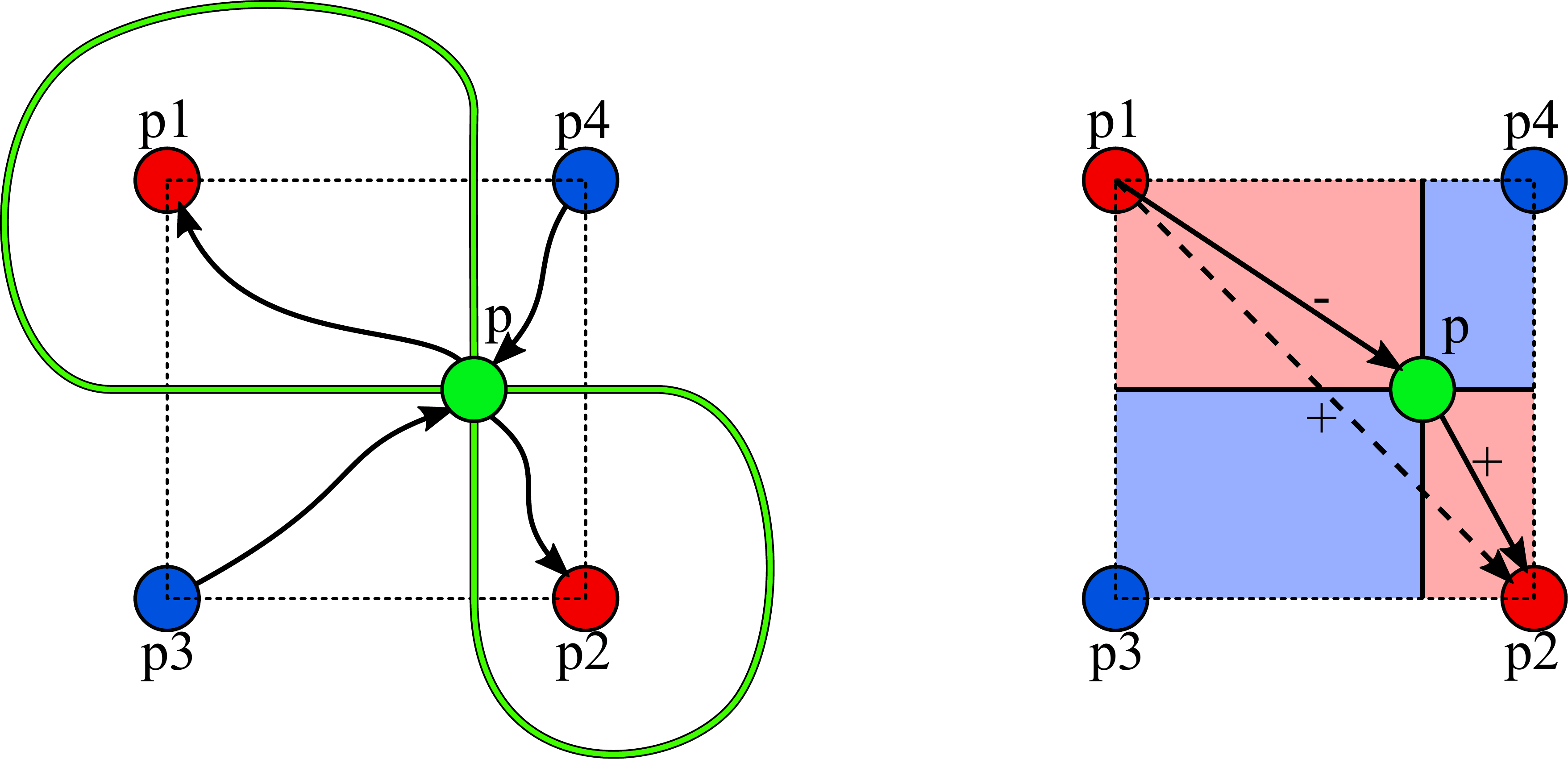}
\caption{Left: a possible isoline arrangement (green line) at saddle point (green dot). Right: discrete vs continuous neighborhood\label{fig:split-iso}}
\end{center}
\end{figure}

In Figure~\ref{fig:split-iso} we depict a minimal example. In blue we show pixel corners with values smaller than $R(p)$ and in red the corners with greater value. The saddle point $p$ is depicted with a  green dot and the corresponding isoline is traced with a green line. The saddle identifies two different areas delimited by the green isoline (in a different scenario, it can also enclose only the blue dots, depending on the neighbor pixels values). For greater isoline values, the two areas become disjoint, while lesser values they produce a larger and unique area. 

%The following lemma relates this saddle definition to special behaviour of isolines in $R$.

\begin{lemma}[Saddle point and isolines]
A saddle point is positioned at $p=(x,y)$ iff there are two distinct and equally valued isolines that intersect at $p$.
\end{lemma}

%Euler formula...

Let us exhaustively treat three different types of saddles, depending on their position: inside a pixel (square perimeter excluded), on a side of the pixel (corners excluded) and on a pixel corner.
We assume a generic pixel at position $(i,j)$ and a saddle point $p$ with coordinates $(x,y)$ with $i\leq x\leq i+1,j\leq y\leq j+1$, $a=I(i,j), b=I(i+1,j), c=I(i,j+1), d=I(i+1,j+1)$, $ v_{1,1}=d+a-b-c$ as defined above.

{\bf Split point:}
If $i<x<i+1,j< y< j+1$, the saddle lies inside a pixel. Recall that $R$ and $\nabla R$ are continuous inside the pixel.
% and for any line $\ell_1$, the lines $p_1,p$ and $p,p_2$ are both monotonic increasing (or decreasing). There is only one exception:
If $\nabla R(x,y)\neq \vec 0$, any line $\ell_1$ with $p=(x,y)$, $p_1=(x+\epsilon,y+\delta)$ and $p_2=(x-\epsilon,y-\delta)$, for sufficiently small $\epsilon$ and $\delta$, verifies that lines $p_1,p$ and $p,p_2$ are both monotonic increasing (or decreasing) and therefore the saddle property is falsified.
When $\nabla R(x,y)=\vec 0$,
% $R$ is either trivially a constant plane ($a=b=c=d \Rightarrow v_{1,1}= 0$) or 
and $v_{1,1}\neq 0$ (we exclude the planar case), given $p_1=(x-\epsilon,y-\epsilon)$, $p_2=(x+\epsilon,y+\epsilon)$, $p_3=(x+\epsilon,y-\epsilon)$, and $p_4=(x-\epsilon,y+\epsilon)$, $\epsilon\in R^+_0 $ such that $p_i$s are inside the pixel, the saddle property holds. 

In this case, we define a \emph{split pixel} as the pixel (associated to its low left corner $(i,j)$) that contains a saddle at $(x,y)$. We name $p$ as a \emph{split point}.
%, where $x=i+x', y=j+y', x',y'\in[0,1]$. 
Therefore, for a split pixel it holds that
%$v_{1,1}\neq 0$, 
$0<x=(a-c)/v_{1,1}< 1$ and $0< y=(a-b)/v_{1,1}< 1$, as solution to $R(x,y)=\vec 0$ with $(x,y)$ inside the pixel. 
Let us name $R(x,y)$ as the \emph{split value}.
It also follows that $R(i,y)=R(i+1,y)=R(x,j)=R(x,j+1)=R(x,y)$ because of bilinear interpolation. 
Note that there can be at most one split point per bilinear function, since $\nabla R$ is a vector of linear functions.

\begin{lemma}[Split pixel corners value]
Given a split point $p=(x,y)$, associated to the split pixel $(i,j)$, it can either be $I(i,j)>R(p)$,$I(i+1,j+1)>R(p)$,$I(i+1,j)<R(p)$ and $I(i,j+1)<R(p)$
or the same relationships with inverted order. Informally, two opposite corners are greater than $R(p)$.
%\hl{qui dimostro split pixel -> alternating corners. Dovrei dimostrare alternating corners -> split pixel?}
\end{lemma}

{\bf Pixel sides:}
We show that no saddles can occur along a pixel side, w.l.o.g. at $(i,y)$. 
Let us assume that there is a degenerate split pixel with a split point at $(i,y)$. This would imply $R(i,j) = R(i,j+1)$, which is excluded by assumption. Therefore, the only other case is when both pixels $(i-1,j)$ and $(i,j)$ contain no split pixels. It is easy to see that due to continuity of $R$, the isoline passing through $(i,y)$ is continuous, possibly non differentiable at $(i,y)$, and it
divides each pixel in two parts. It is not possible to cross the isoline with two curves $\ell_1$ and $\ell_2$ such that they verify the saddle property. In particular, it is impossible to verify the clockwise ordering of points for any pair lines $\ell_1$ and $\ell_2$ intersecting $p$.

{\bf Mix points:}
Let us assume the saddle is at the corner $p=(i,j)$ and let us name it \emph{mix point}.
With a similar argument as in the pixel side case, we can exclude the cases where any of the four neighbor pixels of $p$ are a degenerate split pixel. 
The corner value is shared by four different bilinear functions that join at the same point. Moreover, the piecewise gradient $\nabla R$ can be often discontinuous in $p$.
W.l.o.g., let us assume the point $p=(1,1)$ is the corner being investigated.

\begin{lemma}[There is at most one isoline that passes inside a pixel and on one of its corners]
Given a pixel with corner $p$, the isoline passing through $p$ solves the equation $R(x,y)=R(p)$. Given the bilinear interpolation, the line is unique and there is at most one line that intersects the pixel area. 
\end{lemma}

\begin{lemma}[Mix point characterization]
The following statements are equivalent:

(i) A mix point $p$ exists at integer coordinate

(ii) each of the four pixels adjacent to $p$ contains an isoline with value $I(p)$ that reaches the pixel border at $p'\neq p$

(iii) n4(p) neighborhood is such that $W=I(0,1)-R(p)$, $E=I(2,1)-R(p)$, $S=I(1,0)-R(p)$, $N=I(1,2)-R(p)$, $NS>0$, $EW>0$, $NE<0$
\end{lemma}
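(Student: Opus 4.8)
The plan is to prove the chain of equivalences $(i)\Rightarrow(ii)\Rightarrow(iii)\Rightarrow(i)$, relying on the earlier lemmas on isolines and split pixels. Throughout, I will work with the four bilinear patches meeting at $p=(1,1)$, namely the pixels with lower-left corners $(0,0)$, $(1,0)$, $(0,1)$ and $(1,1)$, and repeatedly use that, by the previous lemma, each such patch contains \emph{at most one} piece of the isoline at value $R(p)$, which (when present) enters the patch at $p$ and leaves it at a single other point $p'$ on the patch boundary. The key structural fact driving everything is that the isoline at level $R(p)$ separates, inside each patch, the corners with value $\succ R(p)$ from those with value $\prec R(p)$ (a consequence of the monotonicity of bilinear interpolation along each edge once the $\prec$ tie-break is in force).

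For $(i)\Rightarrow(ii)$: assume $p$ is a mix point, i.e.\ the saddle property of Definition~\ref{def:saddle} holds at $p$ with the four monotone-increasing sub-curves arranged in the prescribed clockwise order. I would argue that two of those four sub-curves, say the two ``ascending'' ones $(p,p_1)$ and $(p,p_2)$, cannot lie in a single patch: inside one bilinear patch, $\nabla R$ is continuous, so if both ascending directions lay in the same patch the patch would have a genuine interior saddle at $p$ — impossible since $p$ is a corner, not an interior split point, and we have already excluded degenerate split pixels among the neighbours. Hence the ascending directions occupy (at least) two distinct patches, and by the clockwise alternation the descending directions occupy the remaining two. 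Since the level set $R=R(p)$ locally separates ascending from descending directions, each of the four patches must be crossed by the isoline at value $R(p)$; by the uniqueness lemma this crossing is exactly a single arc from $p$ to some $p'\neq p$ on that patch's border. That is statement $(ii)$.

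For $(ii)\Rightarrow(iii)$: in each patch, the presence of an isoline arc from $p$ to a border point $p'\neq p$ forces the two corners of that patch other than $p$ to lie on opposite sides of the arc, hence to have values straddling $R(p)$. Writing this out for all four patches in terms of the n4-neighbours gives: in the NW patch $W$ and $N$ have opposite signs, in the NE patch $N$ and $E$ have opposite signs, in the SE patch $E$ and $S$ have opposite signs, and in the SW patch $S$ and $W$ have opposite signs. Combined, $NS<0$ is impossible if we also want $EW$ and the cyclic sign pattern to be consistent; the only sign assignment to $(N,E,S,W)$ (up to the global flip already allowed by the split-pixel-corners lemma's ``inverted order'') that satisfies all four pairwise-opposite conditions is the checkerboard one, which is exactly $NS>0$, $EW>0$, $NE<0$. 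Here I expect a small bookkeeping subtlety: one must also confirm that no two adjacent n4-neighbours can actually equal $R(p)$, which is handled by the $\prec$ convention and the exclusion of degenerate split pixels; and one must rule out the diagonal corners $I(0,0),I(2,0),I(0,2),I(2,2)$ interfering, which they do not because each enters only one patch and its sign is then pinned by that patch's two n4-corners.

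For $(iii)\Rightarrow(i)$: given the checkerboard sign pattern of $(N,E,S,W)$, each of the four patches has, among its three non-$p$ corners, corners both $\succ R(p)$ and $\prec R(p)$ on the two edges emanating from $p$; by continuity and the uniqueness lemma the isoline at $R(p)$ enters the patch at $p$, so there exist, inside each patch, a monotone-increasing sub-curve leaving $p$ (head towards the $\succ R(p)$ corner along the gradient) and a monotone-decreasing one. Selecting one ascending sub-curve from the NE patch, its continuation as a descending sub-curve into (say) the SW patch, and the orthogonal ascending/descending pair from the NW and SE patches, I can assemble two curves $\ell_1,\ell_2$ through $p$ whose four half-branches are monotone and — by the checkerboard geometry — occur in clockwise order $p_1,p_4,p_2,p_3$ around $p$. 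That verifies Definition~\ref{def:saddle}, so $p$ is a saddle, i.e.\ a mix point. The main obstacle, in my estimation, is not any single implication but the careful handling of the boundary/tie cases in $(ii)\Leftrightarrow(iii)$: showing rigorously, under the $\prec$ operator, that the non-strict straddling inequalities can be upgraded to the strict products $NS>0$, $EW>0$, $NE<0$, and that no configuration of equal neighbour values sneaks past the exclusion of degenerate split pixels.
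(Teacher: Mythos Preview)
Your chain $(i)\Rightarrow(ii)\Rightarrow(iii)\Rightarrow(i)$ matches the paper's structure, and your $(ii)\Rightarrow(iii)$ step is essentially the paper's argument (the isoline arc in each adjacent pixel separates the two $n4$-corners of that pixel that are adjacent to $p$, forcing the checkerboard sign pattern).

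There is, however, a genuine gap in your $(i)\Rightarrow(ii)$. You claim that if the two ascending sub-curves $(p,p_1)$ and $(p,p_2)$ both lay in a single bilinear patch, then ``the patch would have a genuine interior saddle at $p$''. This inference is not valid: $p$ is a \emph{corner} of that patch, and a corner can easily admit many ascending directions into the patch interior with no interior critical point at all (e.g.\ when $p$ is the minimum corner of the patch, every direction into the patch is ascending). Continuity of $\nabla R$ inside the patch does not help here, because $p$ is on the boundary where you only see a quarter-neighbourhood. The paper's argument is different and more direct: after trimming the $p_i$ to the boundary of the $2\times 2$ block, it applies the intermediate value theorem to each consecutive clockwise pair (one ascending endpoint, one descending endpoint) to produce four distinct points at level $R(p)$, hence four isoline arcs emanating from $p$; since each adjacent pixel hosts at most one such arc (the previous lemma), the pigeonhole forces exactly one arc per pixel. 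Your distribution-of-directions idea can be salvaged, but only by explicitly invoking the clockwise interleaving to place a descending sub-curve between the two ascending ones inside any candidate patch, and then using IVT anyway --- at which point you have essentially reproduced the paper's proof.

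Your $(iii)\Rightarrow(i)$ is correct in spirit but over-engineered. The paper does not build curves inside the patches at all: it simply takes $\ell_1$ and $\ell_2$ to be the two axis-aligned segments through $p$ joining opposite $n4$-neighbours (e.g.\ $p_1=(1,0)$, $p_2=(1,2)$, $p_3=(0,1)$, $p_4=(2,1)$). Each half-segment is linearly interpolated along a pixel side and is therefore monotone with the required sign by $(iii)$, and the clockwise ordering is immediate. This avoids your appeal to gradients inside patches entirely.
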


We exhaustively showed that saddles can occur inside a pixel (split points) and/or at pixel corners (mix points), given the assumption that $I$ contains no equal valued $n8()$ neighbors.
%There are no other points that can generate a scenario depicted in Figure~\ref{fig:split-iso}. 

\iffalse
From a reachability point of view, the saddle point represents a critical choice point that allows to access both green areas with a monotonic path and eventually reach the local maxima inside the area. Of course, there are other (infinitely many) paths that, starting from the minima,  can reach the two maxima without passing through the saddle. The importance of the saddle point is that it is the only point that, if crossed, it allows to reach both maxima. In fact,
any monotonic path can cross at most once the green isoline, except for the saddle point, where the isolines are collapsed.
Note that once entered one of the two areas, it is not possible to reach the other one through a monotonic path.

\hl{questo forse dopo quando pulisco. Moreover, the saddle point can be used as representative of all paths that connect a pair of extrema (passing through the green isoline).}
\fi

%\subsection{Saddles in discrete context}

The process of building discrete steepest paths must account for the discrete sampling of the image $I$ at corner values only. The implicit assumption in the discrete scenario is that a discrete segment $(p_1,p_2)$ with $p_1\prec p_2$ and $p_1\in n8(p_2)$ 
should correspond to the presence of a monotonic increasing continuous path from $p_1$ to $p_2$ in $R$.
This is trivially true along sides ($n4()$ neighborhood) because of bilinear interpolation. However, for diagonal segments crossing split pixels, it does not hold (see Figure~\ref{fig:split-iso} on the right --- dashed arrow).
In fact, for such diagonals the discrete interpretation violates the assumption. A finer analysis reveals that in $R$ 
there is no monotonic path between $p_1$ and $p_2$.
We already showed that both corners on a diagonal are greater (or less) than the split value. Considering an increasing path starting from $p_1$, 
the split pixel contains a (lower than $R(p_1)$)  split value that has to be crossed (at least once) by a corresponding continuous path, in order to reach the opposite corner. The split value isolines are depicted with solid black lines. Thus in $R$, any continuous path connecting $p_1$ to $p_2$ contains at least an inversion of monotonicity (depicted by '-' and '+' subpaths). 

In conclusion, the diagonal comparisons over a split pixel are inhibited, in order to preserve the monotonicity property of paths  between $I$ and $R$ images. Thanks to this choice, the image $I$ can be equivalently explored at discrete level.
We therefore introduce the general neighborhood operator $n()$, defined as a subset of $n8()$ where diagonals crossing split pixels are removed from the set.

%\hl{dimostrare?}

\subsection{Minima and maxima}

In $R$, local maxima (minima) can only appear at integral coordinates, since for each pixel and point $(x,y)$ there is always a corner with greater (lesser) value, due to bilinear function properties.
It follows that detecting local maxima (minima) boils down to checking the maximality (minimality) among the n() neighborhood. Recall that diagonals that cross a split pixel are excluded from the test, since they would incorrectly account for the computation a greater (lesser) value than the correct extremum, while overpassing the saddle and including a different peak (valley).
The $\prec$ operator allows to identify local minimum and maximum.
A point $(i,j)$ is a local maximum (minimum) in $I(i,j)$, with respect to $n(i,j)$, iff $I(i',j') \prec I(i,j)$ ($I(i,j) \prec I(i',j')$) for every $(i',j')\in n(i,j)$. 

\subsection{Steepest Paths}
\label{sec:sp}

%Steepest paths allow to analyze isoline bundles and build the scaffolding for edge detection. 

In $R$ \emph{continuous steepest paths} are continuous monotonic paths that proceed along the gradient direction. 
%and they do not intersect each other (unless min/max/saddles are encountered). 
Due to the piecewise continuous gradient function, as depicted in Figure~\ref{fig:steepest-cont-discr} with black arrows,
a continuous steepest path can traverse points with undefined directions (i.e., saddles and pixels sides).

We prefer to work at discrete level, since the processing is more efficient and the cases to be treated are reduced.

%Informally, a discrete increasing steepest path is a list of discrete points that iteratively identify the highest image value on a local neighborhood n() w.r.t. the previous point.
%This approximation of the continuous version, does not guarantee that the path is orthogonal to isolines and gradient direction. However we show that the most important property, namely maintaining the correct connectivity of extrema, is guaranteed. 

Let us introduce the formal definition of discrete steepest path.

\begin{definition}[Discrete (Increasing) Monotonic Path]
A discrete monotonic path is a sequence $P=\{p_0,\dots, p_{n-1}\}$ of integer 2d points such that $p_i\prec p_{i+1}$.
\end{definition}

\begin{definition}[Discrete (Increasing) Steepest Path]
A discrete steepest path is a discrete monotonic increasing path P such that $p_i\prec p_{i+1}$ and there is no $p'\in n(p_i)$ such that $p_{i+1}\prec p'$. 
\end{definition}

As discussed below (see Lemma~\ref{lem:connect}), the discrete version may differ from the continuous one
but both share the monotonicity property. Compared to continuous steepest path, the discrete version produces the best orthogonal path to isolines.
Our choice of using a $n8()$ neighborhood, as opposed to $n4()$, is dictated by the need of
better approximating the continuous steepest paths. Moreover the regions identified by $n8()$ paths have a non self-intersecting perimeter.

%producing paths that intersect only at extrema. In fact, paths built on n4() and embedded in a graph as in the step 4 of next Section,
%could result in self intersecting paths, where non extremal node. Moreover, the n() neighborhood provides more directions and it follows more faithfully the correct continuous steepest path.

\subsection{Steepest graph}
The steepest graph collects the discrete steepest paths and it represents a finite and compact description of the image surface landscape. %Moreover, any path in the graph monotonically intercepts the set on intersected isolines.

\begin{definition}[Steepest Graph]
The steepest graph G=(V,E) is defined as follows: the set of nodes V is the set of points $(i,j)$ and directed edges $e=(n_1,n_2)\in E$ are such that $n_2\in n(n_1)$.
The graph $G$ is built incrementally according to the following steps:
\begin{enumerate}
\item Given a split pixel $(i,j)$ the four edges $((i,j),(i+1,j))$, $((i,j),(i,j+1))$, $((i+1,j+1),(i+1,j))$, $((i+1,j+1),(i,j+1))$ are added if $(i,j)\prec(i+1,j)$. Otherwise the edges are reversed.
\item Given a mix point $(i,j)$, the four edges $((i,j),(i+1,j))$, $((i,j),(i-1,j))$, $((i,j-1),(i,j))$, $((i,j+1),(i,j))$ are added if $(i,j)\prec(i+1,j)$. Otherwise the edges are reversed.
\item Given any steepest discrete path $P$,  edges $(p_i,p_{i+1})$ are added if $(p_i,p_{i+1})\in P$.
\item Given a node $p\in V$ such that there are no incoming edges and $p$ is not a local minimum, the edge $e=(p_L,p)$ is added, where $p_L\in n'(p), p_L\prec p$ and there is no $p_L'\in n'(p)$ such that $p_L'\prec p_L$, and $n'(p)\subseteq n(p)\setminus D$, with $D$ containing the diagonals that intersect any diagonal edge contained in E.
\end{enumerate}

\end{definition}

Steps 1 and 2 introduce predetermined edge patterns that characterize the discrete behaviour of steepest paths in the presence of a saddle. Even if a point contained in such edges has another best optimal neighbor in $n()$, such patterns are enforced. 
This allows to guarantee the bifurcation of a steepest path caused by a saddle (compare Figure~\ref{fig:split-iso} on the left, the two incoming edges at $p$).
Edges added at steps 1 and 2 are marked as non removable, since any graph simplification must maintain these fundamental structures.

%Since they pass by a saddle point they the key to determine min-max paths and carry information \hl{tanto da dire... piu' giu???}

After step 3 has introduced all steepest paths of the image, it is possible that some paths may start from non local minima $p$ (see Figure~\ref{fig:connection} left). The reason is that any lower valued node w.r.t. to $p$ (e.g. $p_L\prec p$, $p_L\in n(p)$) has a steepest neighbor node ($p_H$) which is different from $p$. 

\begin{figure}[ht]
\begin{center}
\includegraphics[width=0.4\textwidth]{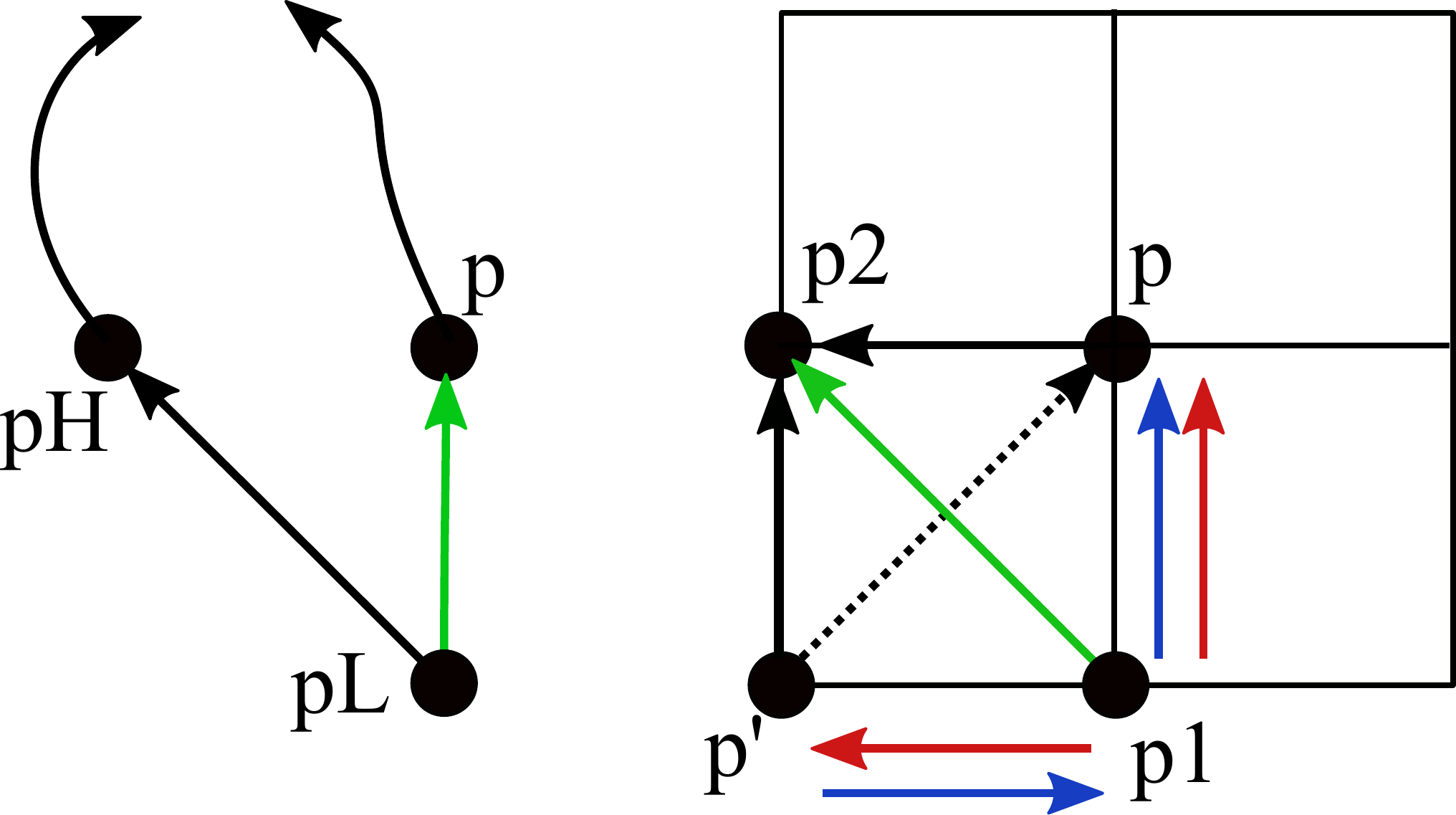}
\caption{Edge model for a region\label{fig:connection}}
\end{center}
\end{figure}

Step 4 guarantees that only local minima have no incoming edges. 
The procedure introduces additional edges with the requirement that the new addition is the steepest w.r.t. $p$ (the green edge in the Figure). Moreover, special care is devoted to avoid to select a new edge that crosses another edge already in the graph. This is due to the need of maintaining a planar graph. Note that this step introduces a potential outdegree greater than one for nodes.
%Note that this issue derives from the fact that the function $R$ is not $C^2$. In that case, each steepest path would converge to the local minimum without any intersection. The discretization of the path can not account for such precision.
We show now that it is always possible to find an edge that connects a non minimal node without incoming edges to a lower valued node.

\begin{lemma}[Non minima joining]
Given a non minimal node $p$ with no incoming edges and the graph built by steps 1,2 and 3, there is a new edge that does not intersect a diagonal edge and that connects the node to a lower valued node.
\end{lemma}

Let us cover some fundamental properties of the steepest graph.

\begin{lemma}[DAG]
The graph is a directed acyclic graph. 
\end{lemma}

\begin{lemma}[Planar graph]
The graph is planar
\end{lemma}

\begin{lemma}[All saddles are connected]
The graph contains, for each saddle, at least a path that connects it to a local extrema.
\end{lemma}

\begin{lemma}[Correct connectivity]\label{lem:connect}
  A path starting from a local minimum and ending to a local maximum in the graph is a monotonic increasing path embedded in R.
\end{lemma}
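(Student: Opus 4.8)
The plan is to show that any directed path $P=\{p_0,\dots,p_k\}$ in the steepest graph $G$, with $p_0$ a local minimum and $p_k$ a local maximum, can be realized as a continuous monotonic increasing path in $R$. Since $p_i\prec p_{i+1}$ along the path by construction of $G$ (every edge goes from a smaller to a larger value under $\prec$), the only thing that needs proof is that each edge $(p_i,p_{i+1})$ can be \emph{filled in} by a continuous curve in $R$ on which $R$ is strictly increasing, and then that these pieces concatenate into a single continuous monotonic increasing path. Strict monotonicity across the joints is automatic because $R(p_i)<R(p_{i+1})<R(p_{i+2})$ under $\prec$ (the $\prec$ tie-breaking only promotes a strict order consistent with the real values, so $R(p_i)\le R(p_{i+1})$ always, and the concatenation is increasing as long as each piece is).

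The core of the argument is therefore a case analysis on the edge $(p_i,p_{i+1})$, depending on which of the four generating rules in the definition of the steepest graph produced it. First, if $(p_i,p_{i+1})$ is a side edge (a $n4()$ neighbor pair), bilinear interpolation restricted to that side is linear in one coordinate, hence $R$ is strictly monotonic along the segment, and we take the segment itself as the filling curve. Second, if $(p_i,p_{i+1})$ is a diagonal edge, then by the definition of $n()$ this diagonal does \emph{not} cross a split pixel; for a pixel with no split point the bilinear $R(x,y)=v_{0,0}+v_{1,0}x+v_{0,1}y+v_{1,1}xy$ has no interior critical point, so $\nabla R\neq\vec0$ throughout the pixel, and the earlier discussion (the "split point" case of Definition~\ref{def:saddle}) shows that along any such diagonal $R$ has no local extremum — the absence of a split value strictly between the two corner values means the straight diagonal segment is already monotonic. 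Here I would invoke the claim, used implicitly just before the lemma, that for a non-split pixel the value at any interior point of a corner-to-corner diagonal lies strictly between the two endpoint values; if a short argument is needed, note that the restriction of $R$ to the diagonal $x=y$ (after an affine reparametrization of the pixel) is a quadratic in one variable whose unique stationary point, when it exists, is exactly the split point, which we have excluded. Third, for edges introduced by the split-pixel rule (step~1) and the mix-point rule (step~2): in the split case the four mandated edges are the four half-sides $((i,j),(i+1,j))$ etc., which are $n4()$ side edges and hence handled by the first case; in the mix case the four mandated edges are again $n4()$ side edges incident to the corner $p$, handled the same way. Fourth, for an edge added by step~4, it connects a non-minimal node $p$ to a lower-valued neighbor $p_L\in n'(p)\subseteq n(p)$; since $n'(p)$ excludes any diagonal crossing an existing diagonal edge but the edge $(p_L,p)$ is itself in $n()$, it is either a side edge or a split-pixel-free diagonal, so one of the first two cases applies.

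Having filled each edge by a curve on which $R$ increases strictly from $R(p_i)$ to $R(p_{i+1})$, concatenating them in order gives a continuous curve $L(t)$ from $p_0$ to $p_k$; reparametrizing so that $t\mapsto R(L(t))$ is increasing (which is possible precisely because each piece is strictly increasing and the endpoint values are strictly nested) yields a continuous monotonic increasing path in the sense of Definition~1, which is what we must produce. The main obstacle I expect is the diagonal case: one must be completely sure that \emph{every} diagonal edge surviving in $n()$ lies in a pixel with no split point, and that in such a pixel the straight segment is genuinely monotonic rather than merely extremum-free at the endpoints — i.e. ruling out an interior max \emph{or} min on the diagonal. This is exactly where the earlier split-point analysis is needed, and I would lean on the fact that $\nabla R$ is a vector of linear functions, so its zero set is a single point (the split point); absent that point in the pixel, the quadratic $R|_{\text{diag}}$ is strictly monotonic on the segment. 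A secondary subtlety is step~4 edges potentially raising the outdegree, but this does not affect the lemma: we only follow one outgoing edge at each node along $P$, and every individual edge is monotonic, so the path-level statement is unaffected by branching.
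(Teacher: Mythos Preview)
Your overall strategy---reduce to a case analysis on individual graph edges and then concatenate---is exactly the paper's inductive scheme, and your treatment of side edges and of the step~1/2/4 reductions is fine. The genuine gap is in the diagonal case. Your claim that ``the restriction of $R$ to the diagonal $x=y$ is a quadratic whose unique stationary point, when it exists, is exactly the split point'' conflates two different objects: the split point is the unique zero of $\nabla R$, whereas a stationary point of $R$ restricted to the diagonal is where only the \emph{directional} derivative $\nabla R\cdot(1,1)^T$ vanishes. These need not coincide, and excluding the split point from the pixel does \emph{not} force the straight diagonal to be monotonic. Concretely, take corner values $a=R(0,0)=2$, $b=R(1,0)=3$, $c=R(0,1)=0$, $d=R(1,1)=4$. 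Then $v_{1,1}=3$ and the split point sits at $\bigl(\tfrac{2}{3},-\tfrac{1}{3}\bigr)$, outside the pixel, so the diagonal is admissible in $n()$; yet $R(t,t)=2-t+3t^{2}$ has a strict interior minimum at $t=\tfrac{1}{6}$ with value $\tfrac{23}{12}<2$, so the straight segment from $(0,0)$ to $(1,1)$ is not monotonic. Since $d$ is the maximal corner, this diagonal can perfectly well arise as a step-3 steepest edge.

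The paper's proof does not try to push the straight segment through. Instead, for a diagonal edge it checks whether the isoline of value $R(p_1)$ enters the pixel; if so, it abandons the segment and routes the continuous path around the pixel \emph{perimeter} to reach $p_2$. In the example above this gives $(0,0)\to(1,0)\to(1,1)$ with values $2\to 3\to 4$, which is linear on each side and hence monotonic. That perimeter detour is the missing ingredient in your argument; once you allow the realized curve for a diagonal edge to run along two sides of the pixel rather than across it, the rest of your concatenation goes through unchanged.
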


This lemma shows that every path contained in the graph correctly explores the surface of $R$, i.e. the paths are monotonic visits of $R$ and therefore they cross each isoline at most once.
However, it can be easily seen in Figure~\ref{fig:steepest-cont-discr} a counter example where the discrete steepest path (in red) is not a \emph{steepest} path in $R$ (in black). Some local value relationships are shown in green and we assume there are no saddles. 
Depending on pixel values, it may exists a path A, if bottom right pixel contains higher magnitude gradients within the pixel rather than along the side between bottom pixels, while path B could be selected in the opposite case.
Steepest paths in $R$ are often undefined along sides.
A discrete steepest path may end into a different local maxima compared to a path built on continuous lines (i.e. the path A and next pixels are never accessed by the red path).

%\hl{siamo sicuri? non e' che il percorso non si puo' comunque separare. vale la pena fare la dim?}

In conclusion, the discrete steepest graph is a correct monotonic visit of the $R$ surface. Since the visit is planar,
the isolines are crossed in order and a characterization of sectioned edges can be performed. 
%The n8 neighborhood offers an approximation of continuous steepest curves in $R$. The n4 neighborhood would cause a stronger approximation as well as the problem that different path could intersect at non saddle points. \hl{da dire qualcosa in piu'}
An example of steepest graph is depicted in Figure~\ref{fig:test1} where edges are drawn as black to white gradient lines, the local minima (maxima) as blue (red) circles, saddles as green circles.
 
\begin{figure}[ht]
\begin{center}
\includegraphics[width=0.25\textwidth]{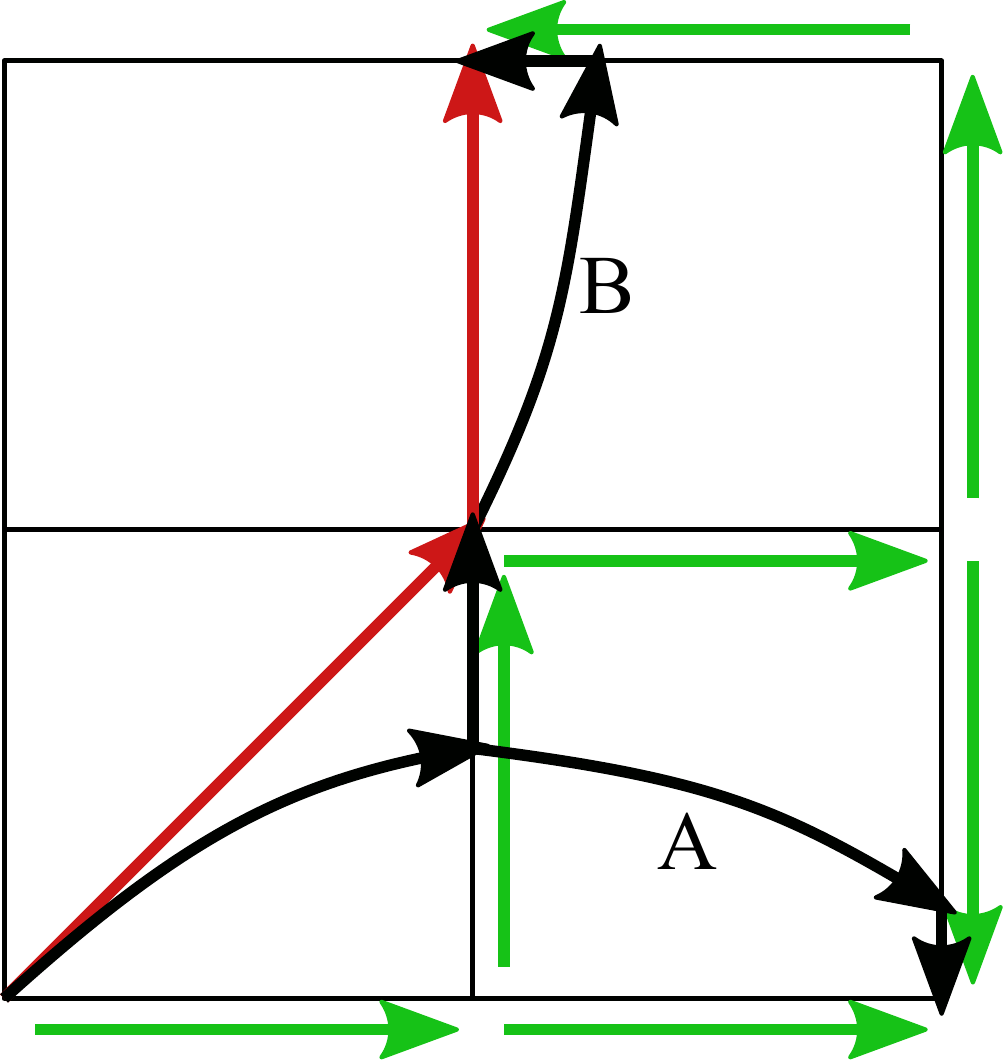}
\caption{Steepest paths: discrete vs continuous in $R$\label{fig:steepest-cont-discr}}
\end{center}
\end{figure}

Even if in the different context of detecting the inclusions of areas and the description of the image topology through a tree of areas, \cite{caselles2009geometric} supports the idea that saddles are an important element for guiding the image structure analysis.

\iffalse
\hl{qui c'era il grafo... rimettere? dopo?}
\begin{figure}[ht]
\begin{center}
\includegraphics[width=0.6\textwidth]{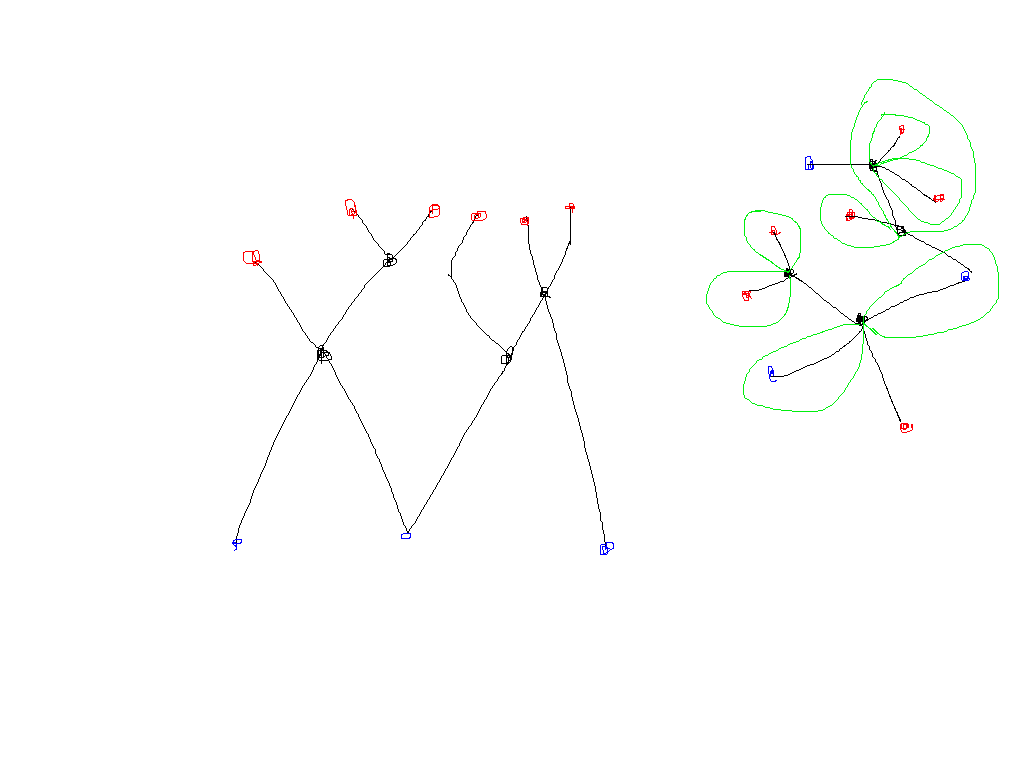}
\caption{split iso\label{fig:grafo-connect}}
\end{center}
\end{figure}

A monotonic path can encounter several saddles and each traversal causes a new binary choice, which can be represented as a connectivity graph (Figure~\ref{fig:grafo-connect}\hl{da rifare giusta, da esempio vero!!!}). The graph nodes are the local minima/maxima and saddles, while directed edges connect a pair of nodes only if there is a monotonic path that connects them. 

Monotonic paths and the graph show an important property: they capture the reachability of local extrema in the image. Moreover
they are useful to determine relevant properties for edges that will be discussed \hl{: if two extrema are not reached by a monotonic path, there can not be an edge support touching both of them. Recall that multiple supports can be concatenated, as well as contained edges.
It can happen that a concatenation of supports can touch both extrema. dopo!}
\fi

\subsection{Image regions}

\begin{definition}[Min-max path]
Given a steepest graph $(V,E)$, a min-max path is a maximal list of nodes $P=(p_1,\dots,p_k)$ such that $P\subseteq V$ and there are no $(p',p_1)\in E$ nor $(p_k,p'')\in E$. $p_1$ and $p_k$ are image local minima and maxima respectively.
\end{definition}

The steepest graph contains a finite set of \emph{min-max path}s and every graph edge belongs to at least a min-max path.
%Note that two min-max paths can partially overlap, when they share some edges.
%Note that a min-max path is a self-avoiding walk. dimostro
Such paths partition the image domain into a set of \emph{regions}. Each region, given the planarity of the graph, is 
delimited by a polyline made of a set of graph nodes and edges that define the region's perimeter. Each region contains no saddles by construction. In principle the edges that delimit the split pixel form a single pixel region: we are not interested in such area since it is the discrete representation of the split point. Therefore we omit such regions and we assume that the split point is copied and projected to all four edges along the pixel sides, at the corresponding split value position.

\begin{definition}[Monotonic region]
A region is \emph{monotonic} if, excluding its perimeter, it contains no saddles.
\end{definition}

We show that a monotonic region's perimeter has exactly one local minimum node and one local maximum node.

\begin{lemma}
A monotonic region contains no self intersecting isolines.
 \end{lemma}

\begin{lemma}
The non collapsed part of a monotonic region has exactly one minimum and one maximum in image values on its perimeter.
 \end{lemma}

The region can collapse towards the extrema, since the two min-max paths delimiting the region may become overlapped (see Figure~\ref{fig:region}). It can also happen that along the collapsed min-max paths, several saddles (black points) are traversed, with the result that multiple extrema can be reached. This is not relevant, since any of the completely collapsed paths between a saddle and an extrema contain only collapsed edges, since actual edges will traverse other adjacent regions.
Let us name the left, L in the figure, (right, R in the figure) path as the one that delimits the region at its right (left), while moving towards the maximum. 

Another degenerate case is when the two sides are completely overlapped. This means that there is  only one region that covers the whole domain. 

\begin{figure}[ht]
\begin{center}
\includegraphics[width=0.07\textwidth]{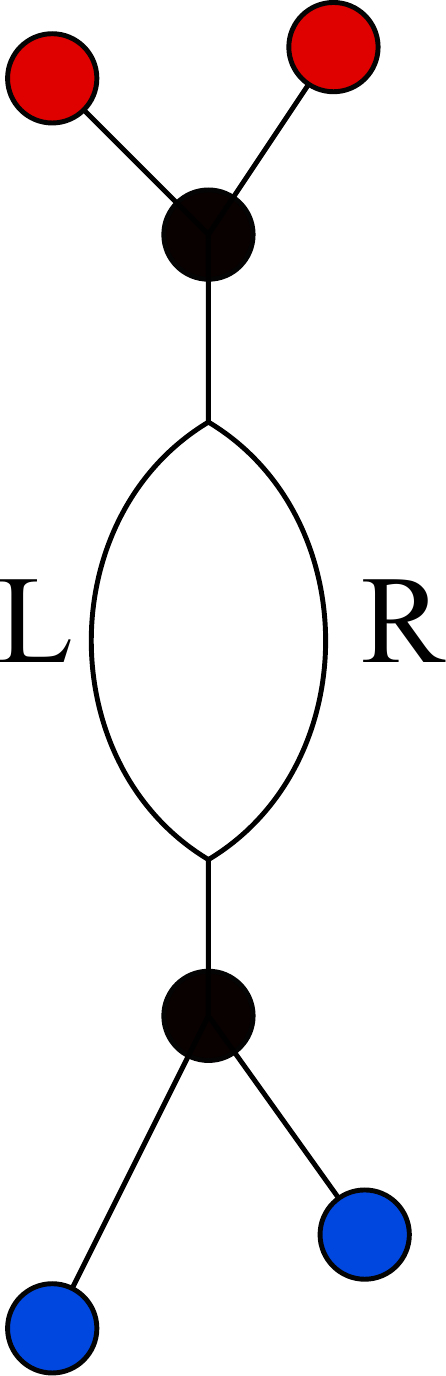}
\caption{Region organization\label{fig:region}}
\end{center}
\end{figure}

%%%%%%%%%%%%%%%%%%%%%%%%%%%%%%%%%%%%%%%%%%%%%%%%%%%%
%%%%%%%%%%%%%%%%%%%%%%%%%%%%%%%%%%%%%%%%%%%%%%%%%%%%
%%%%%%%%%%%%%%%%%%%%%%%%%%%%%%%%%%%%%%%%%%%%%%%%%%%%

\section{Edge model}
\label{sec:em}

We pursue two main goals in the detection of edges: the retrieval of accurate edge line position and the exact connections between edges.
The edge model is based on monotonic regions as constructed in the previous section. Operationally, each region can be independently processed and the union of the results can be merged into the overall set of edges and connectivity graph.
%The edge connectivity is the leading property for edge lines characterization. 

It is fundamental to model edges with continuous coordinates. A model based on discrete coordinates (compare with raster detectors, where edge polylines are derived from pixel chaining of local pixel neighborhood) is not able to reach the desired precision and to disambiguate the correct connectivity.

%We start with min-max paths that are indeed discrete. However, the edge construction relies on (continuous) isolines in $R$.

%Edge tracking, namely the attempt to determine a local maxima of gradient magnitude and to extend a line on top of the gradient ridge,
%is very errors prone even at continuous level (zero second derivative tracking approaches). (compare canny section)
%At the same time, modeling an edge with continuous line, would in principle allow a , but it becomes a numerical problem of accumulating errors, expensive. not feasible. \hl{un po' di letteratura qui...}
%Working at discrete level is not suitable for directly computing edges, since it is essential to preserve the continuous nature of the edge (edge line discretized too coarse). Another fundamental property is connectivity between edges is very difficult to observe and track in discrete images (especially if pre-filtering is applied).

%Regions are traversed by edges, and each edge traverses the min-max path over a local maximum in gradient magnitude.
%Edge computation can be partitioned over each single region and later joined.

Let us introduce three assumptions that approximate the edge model, mainly for computational reasons.
(1) {\bf Discrete regions}: the min-max paths (and consequently regions) are discrete over $I$; 
(2) {\bf Connectivity}: edge can split and merge only at regions boundaries; (3)
{\bf Edge drawing}: edges are interpolated inside a region, rather than exactly traced (i.e., exact maxima /zero second derivative tracing).

The three assumptions allow to recover exact edge connectivity and accurate edge positions. At the same time, they allow a practical algorithm design. Discrete regions and their properties have been discussed in previous section. The second assumption is discussed in Section~\ref{sec:ed} and the last one in Section~\ref{sec:edgeDrawing}.

A monotonic region has the peculiar property of containing a set of isolines that do not intersect with each other (due to absence of saddles). As informally presented in the overview, the region is a maximal partition that satisfies this property. There are a number of possible monotonic partitions of the image, but the one presented above is computationally efficient and the min-max paths mimic at best the continuous steepest paths while preserving correct monotonic reachability of local extrema. 
%In Figure~\ref{fig:outlook}, the steepest graph allows to correctly cluster isolines bundles and define suitable sections, in order to track related edges.

Figure~\ref{fig:edge} depicts the main elements in a monotonic region, delimited by left and right paths. Structural features are highlighted on the left image, while the corresponding edge properties are shown by the central image. Shared local minimum and maximum for the two steepest paths are identified by Min and Max points. Other elements are detailed below.
%The left path traverses points named Min,LM2, Lm1, Lm1 and Max, while right path Min, RM3, Rm2, RM2, Rm1, RM1, Max.

\begin{figure*}[ht]
\begin{center}
\begin{minipage}{0.33\textwidth}
\includegraphics[width=0.99\textwidth]{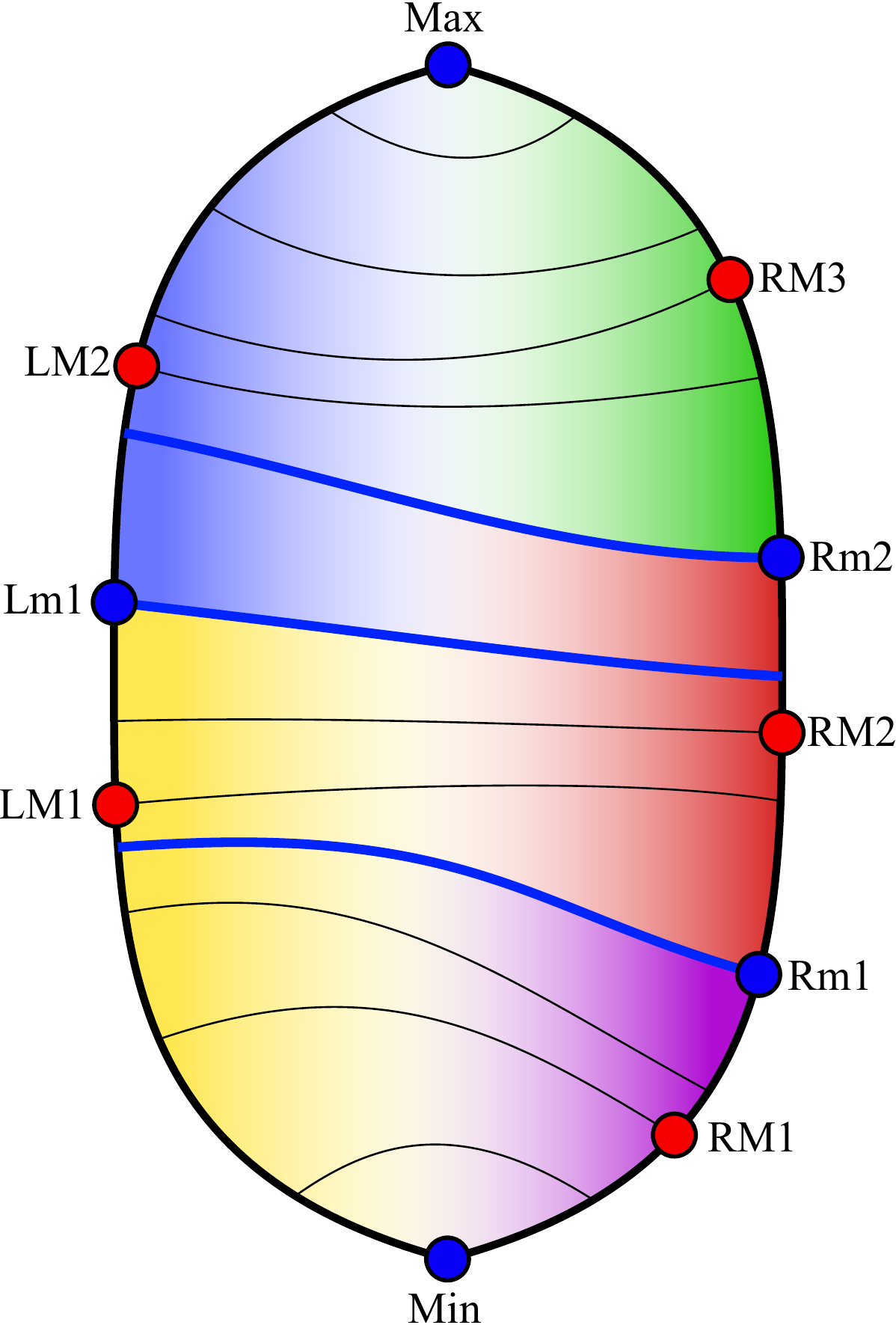}
\end{minipage}
\begin{minipage}{0.33\textwidth}
\includegraphics[width=0.99\textwidth]{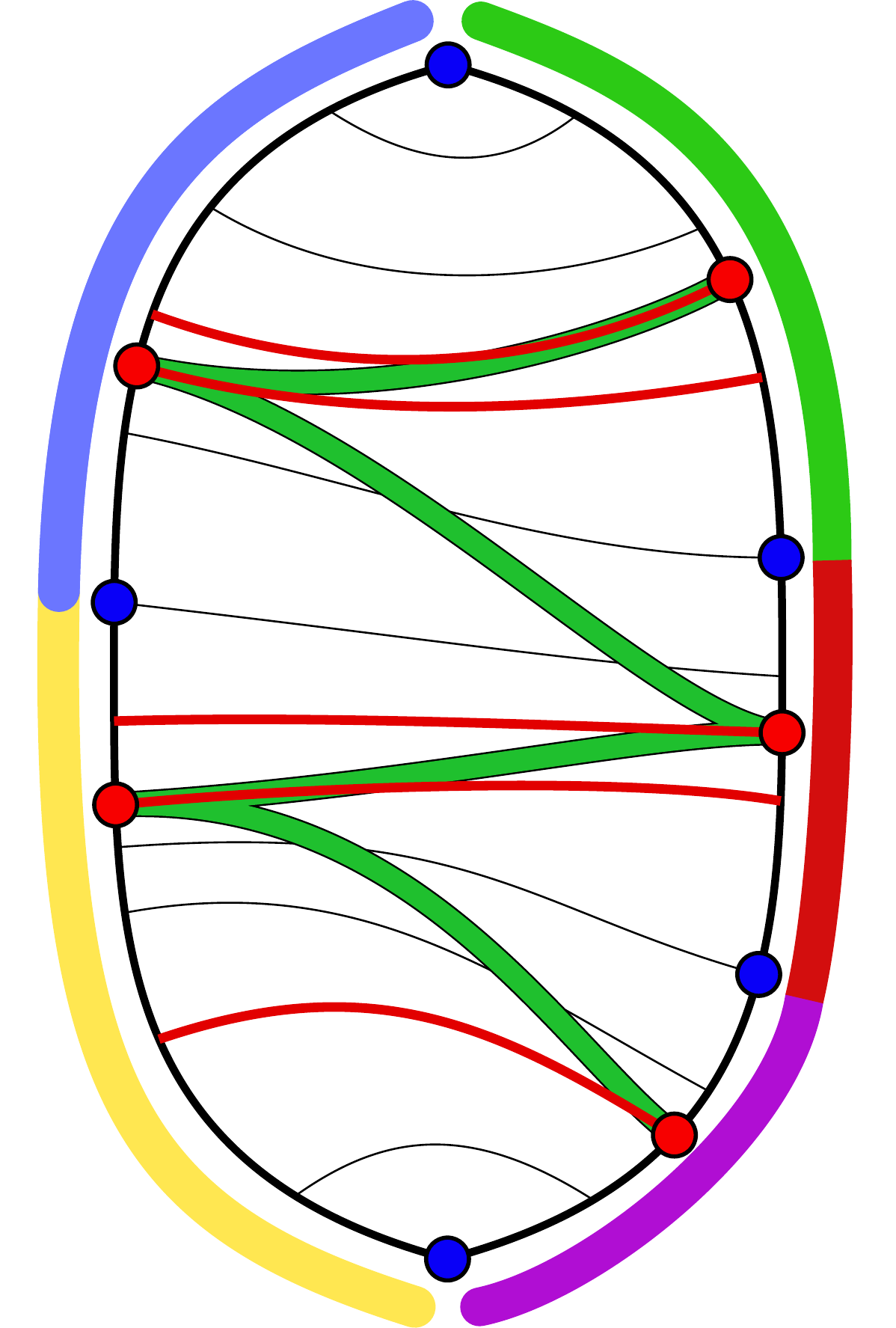}
\end{minipage}
\begin{minipage}{0.32\textwidth}
\begin{minipage}{0.50\textwidth}
\begin{center}
\includegraphics[width=0.99\textwidth]{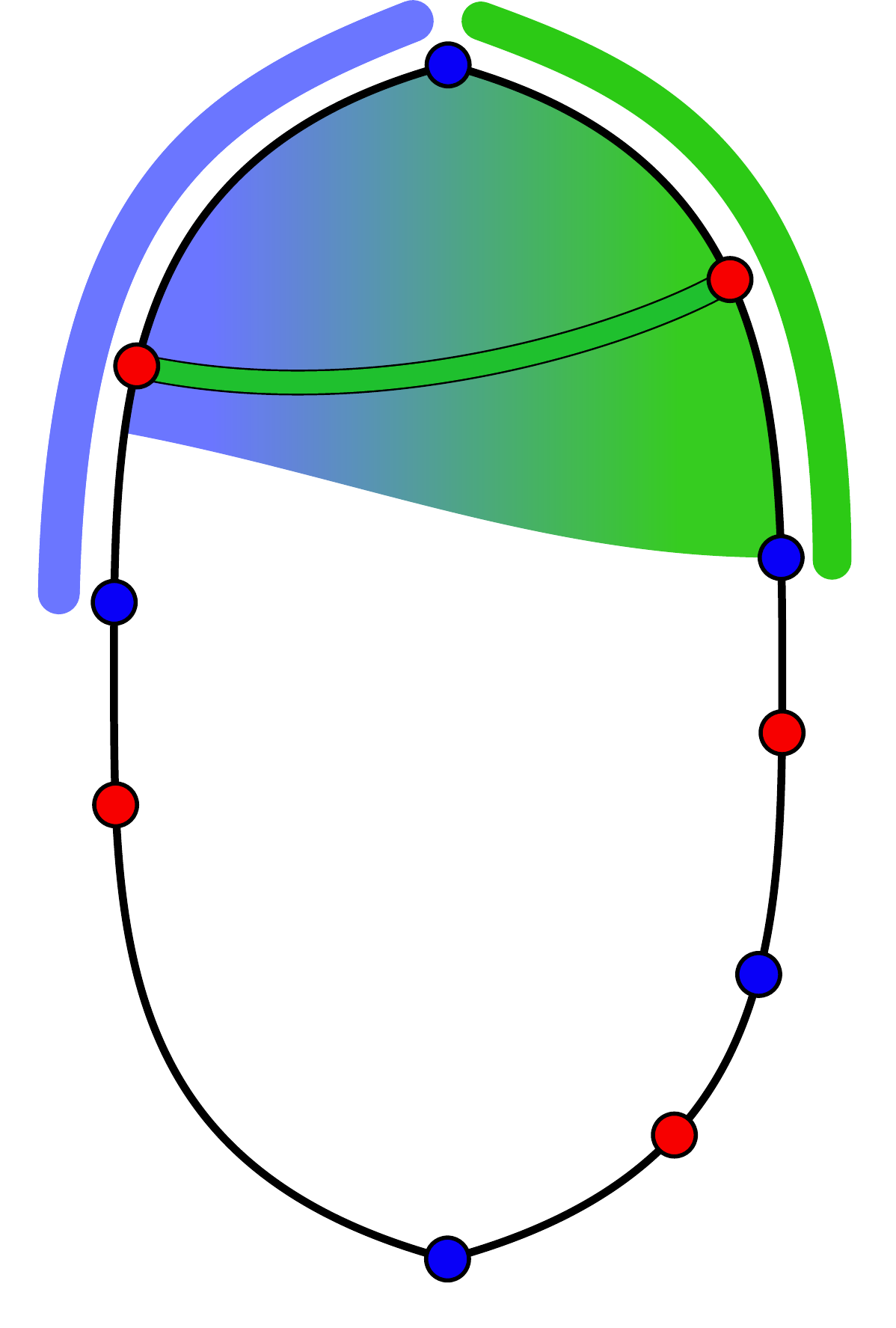}
\end{center}
\end{minipage}

\begin{minipage}{0.50\textwidth}
\begin{center}
\includegraphics[width=0.99\textwidth]{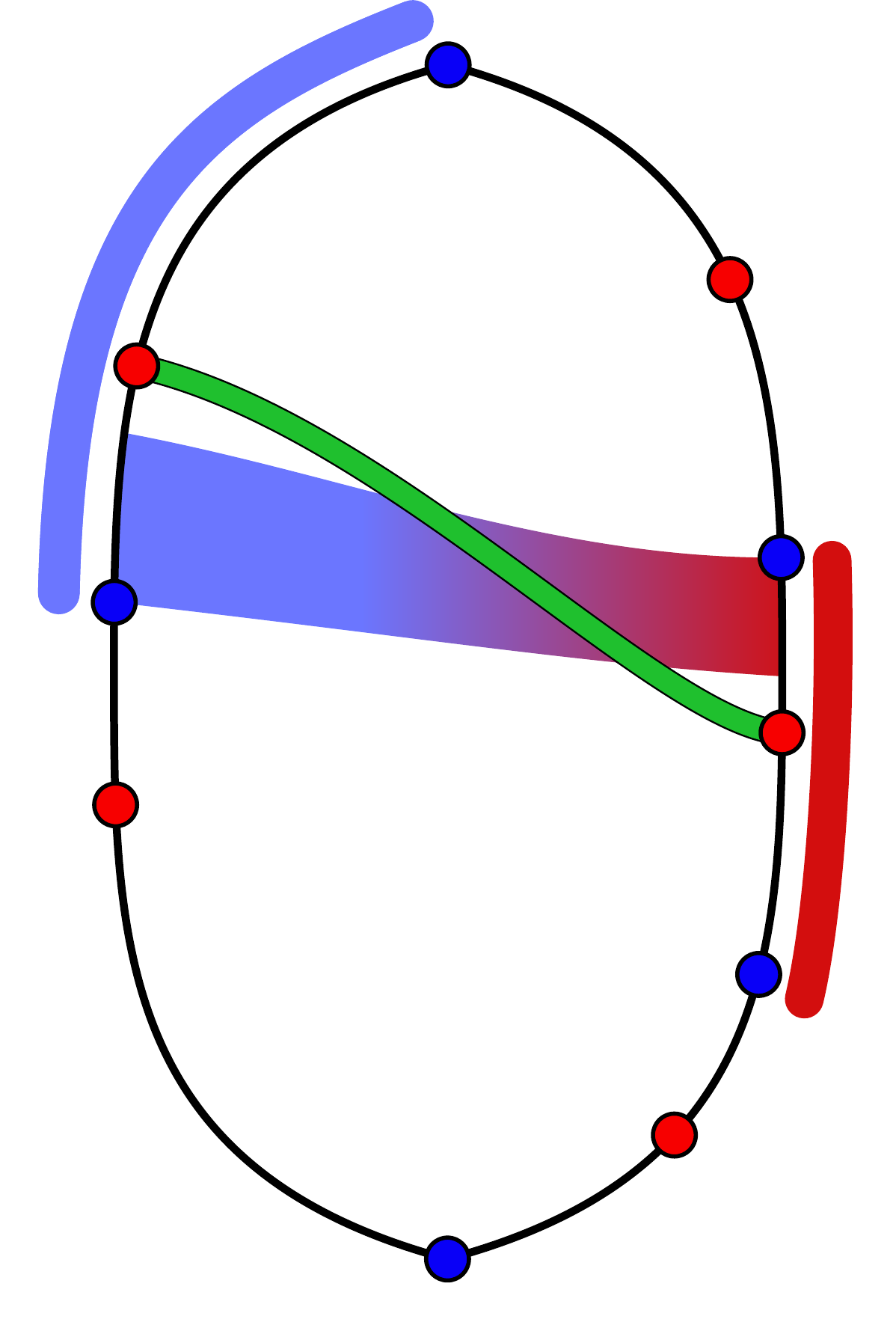}
\end{center}
\end{minipage}
\end{minipage}

\end{center}

\caption{Edge model for a region. On the left, isolines and supports are shown. At the center, maximal isolines and retrieved edges (in green). On the right, two examples of carry for selected edges \label{fig:edge}}
\end{figure*}

Isolines inside the region ideally form  a single bundle, since there are no saddles. The analysis of their spatial relationships suggests the presence of edges. In principle, closer isolines may indicate the presence of an edge. 
Rather than analyzing the whole region area, we focus on minimal and sufficient properties along both region paths, in order to infer the expected behaviour inside the region. This is not only computationally more efficient, but it guarantees to recover the correct connectivity of edges.

The key idea is to associate an edge to a bundle of isolines. The bundle may get divided by (i) saddles, which impose a clear and perceptual separation of edges and by (ii) local minima in gradient magnitude, which suggest a more refined partition into parallel edges due to the presence of multiple local maxima in gradient magnitude.
For (i) it is sufficient to verify the property at paths, while for (ii) the property is optional, depending on the type of perceptual model in use and it should be also verified also inside a region, in case of a simplified steepest graph (see Section~\ref{sec:opt}).
Let us characterize the bundles at path intersection.

At the paths, the isoline bundle can actually separate, due to the saddles and the behaviour of isolines when covering adjacent regions. Blue points in Figure~\ref{fig:edge} represent such locations. 
In the case (1) in which the point is a saddle, the isolines exiting the region will separate and, consequently, two independent edges will leave the region. 
Practically, in case of a split pixel, the path intersects one of its sides and we select the point along the path with value equal to the split value. Note that for split pixels, the point can lie at continuous coordinates, even if the path is discrete.
Note that the presence of a saddle can not be deduced only  by the analysis of the region pixels: it depends also on neighbor region pixels. Therefore the identification of saddles informs adjacent regions about the isolines bundles separation.

If the point is (2) a path extremum, this clearly limits the bundle. Finally, a  
more perceptually faithful version can also include (3) local minima in gradient magnitude as blue points. In this case, the isoline bundle defined by (1) and (2) can be further divided because of  the presence of multiple local maxima in gradient magnitude. The divisions induced by (3) do not affect the overall connectivity, since it only duplicates the affected edges. It can be activated depending on the kind of perceptual model in use. In some cases, where corrupting noise is present, it is more likely to observe several minima of type (3) that actually introduce parallel undesired minor edges (see Figure~\ref{fig:edges} at top left). In the reminder, we add all (3) points and we delegate the optional local minima removal to a post-processing phase.

\begin{definition}[Support and Span]
Let us define the set $B$ made of the following points on a min-max path: Min, Max, all split values and mix point values and local minima in gradient direction. The points in $B$ are ordered according to point image values.
Given two consecutive points $p_1$ and $p_2$,
the continuous interval of image values defined by $[R(p_1),R(p_2)]$ is named the \emph{support} of an edge for such path.
The restriction of the path between points $p_1$ and $p_2$ is named the \emph{span} of an edge for the path.
\end{definition}

In the Figure, the set $B$ is made of points \{Min, Lm1, Max\} for the left path and \{Min, Rm1, Rm2, Max\} along the right path. 
Spans are colored in blue and yellow on the left path and green, red and purple on the right path. 

Isolines contained in the region (depicted in black solid thin lines) are expected to separate when exiting the region, because of the presence of the saddles (compare Figure~\ref{fig:outlook}). This suggests a possible edge arrangement at the paths. 

\begin{definition}[Edge at path]
For each span of a path, there is exactly one edge crossing the path.
\end{definition}

The span supports the presence of exactly one edge. This means that the span describes 
the spatial range where an edge is locally and perceptually confined. 
This reminds of Canny's single response for an edge and an informed non-maxima suppression, where the support is explicitly modeled. 
The properties establishes what are the isolines associated to each edge (the support) and the portion of the min-max path (the span) that is considered to determine the location of the edge.

The localization of the exact point where the edge intersects the span (the red dot) can be defined, as in classical detectors, at the point that contains the local maxima in gradient magnitude. 
Since the presence of the span information allows a global analysis of image values along the path, we prefer to introduce a more perceptually pleasing location, given by the ideal barycenter of points of the span, where the weights are the gradient magnitudes associated to the points. Such computation outputs a point that is often very close to the local maxima, but in certain cases, where e.g. there is some noise in an almost constant positive magnitude path (e.g. a ramp edge on a wide span), it could help in placing the edge in a position more robust  to noise. Note that without the presence of the span information, this refinement would be impossible. In the reminder, when referring to local maxima in gradient magnitude, the concept can be substituted by the barycenter defined above. All the examples depicted use the barycenter computation.

On the left path (the one that delimits the region on its right while increasing the values), there are two local maxima of gradient magnitude (LM1 and LM2 red dots), while on the right path we depict three maxima (RM1, RM2 and RM3 red dots).

\begin{definition}[Edge strength at path]
Given an edge crossing a path, its \emph{strength} is defined as the ratio between its support size and its span length.
\end{definition}

The edge \emph{strength} approximates the image derivative along the span. However, a more refined characterization of edges is possible: considering the support size and span length as independent variables, a classification of edges as strong/sharp, strong/blurred, weak/sharp and weak/blurred is possible.

\subsection{Edge connectivity}
\label{sec:ed}

\begin{definition}[Edge graph]
The edge graph EG=(M,A) is defined by a set of nodes $M$ that contains all local maxima in gradient magnitude for each span of min-max paths in the steepest graph. The graph edges are directed edges that connect two maxima in $M$, such that the first one is at the left path and the second one is at the right path of the same region.
\end{definition}

The edge graph summarizes all edges disposition in the image and it is obtained by the union of the graph edges retrieved from each region. This graph describes the pure connectivity of image edges at region boundaries: the proper drawing of each graph edge within each region is described in Section~\ref{sec:edgeDrawing}.

Let us describe the edge connectivity model within a region. 
Isolines are bundled into supports along min-max paths, because of interactions between adjacent regions (i.e. saddles) and local minima in gradient magnitude. Within a region, the isolines can be separated by local minima.
We assume that within the space between left and right paths the bundles identified at paths are able to reorganize and shift from one set of bundles to the other one. This assumption is critical and it approximates the actual behaviour inside the region. This choice allows to completely skip the tracking of edges inside the region and to model both connectivity and edge drawing based on paths.
For this reason, supports are depicted by faded colors in Figure~\ref{fig:edge} on the left. The transition of spans is ideal, since spans do not have a specific characterization inside the region. 

The \emph{edge connectivity} is defined by a bipartite matching of the subset of $M$ that intersects the region, i.e. the red points in the Figure (center). 
Recalling the assumption made above, edges can split and merge only at region sides (edge junctions). If a split occurs inside the region, it can be detected and represented by two edges starting from a node and connecting two nodes on the opposite path.
The actual correct position of the junction in space is in fact approximated and this issue is discussed in Section~\ref{sec:edgeDrawing}.

In order to establish whether a matching is possible, let us define the measure that compares left and right supports.

\begin{definition}[Edge carry]
Given the points $LM_i$ and $RM_j$ in $M$, such that $LM_i$ lies on the left side and $RM_j$ lies on the right side of the same region $T$, the \emph{carry} of an edge in $T$ between $LM_i$ and $RM_j$ is the intersection of the supports associated to $LMi$ and $RMj$
\end{definition}

The possibility for an isoline bundle at left path to evolve into a bundle at right path, depends on the carry of the pair. If the interval is empty, it means that are no isolines connecting the two spans and no edge can be supported.
If there is a non empty carry, there is a set of isolines that connects the two sides, which suggests the presence of an edge.

%Carry information along min-max paths are the key to detect correct connectivity of edges, without the need to investigate region area. 
%For efficiency purposes, the edge support is not computed for each steepest path, but it can be approximated by interpolation between the two sides. \hl{da espandere un pochino}

In Figure~\ref{fig:edge} (center) the complete bipartite matching is shown.  The matching (no orientation depicted) is drawn with thick green lines between nodes.
For example, the blue support allows an edge towards the green support, since they have a non empty intersection. 
The yellow support is disjoint from green support and therefore no edge can connect the two spans.
Figure~\ref{fig:edge} (top right) depicts the isolines defined by the carry between blue and green supports and 
Figure~\ref{fig:edge} (bottom right) depicts the isolines defined by the carry between blue and red supports.
This last example shows a delicate case, perceptually speaking: no local maxima are included by the carry on the sides, but the edge should be drawn starting from the two red nodes. Moreover, the edge could also have no local maxima in gradient magnitude inside the region. This could lead to a false positive detection and a 
visually awkward  drawn line (compare with Section~\ref{sec:edgeDrawing}). We discuss the handling of such case in the optimization Section~\ref{sec:opt}.

Note that graph nodes are associated to positions in the image domain and they contain support and span information. However graph edges are not yet drawn, and therefore they do not represent a correct positioning of the actual edge lines. Nevertheless, edges contain carry information and the edge graph can be used for fast vector and graph based processing pipeline.
Note also that some graph edges can be directed self loops: even if the edge is not informative \emph{per se}, the actual edge is correctly drawn as a (possibly long) closed line across the image.

%We are finally ready to define an edge according to the model presented above.

\begin{definition}[Image Edge]
An image edge is defined by any path in the edge graph.
\end{definition}

An image edge can be retrieved by a graph processing, since edge graph contains no high-level interpretation.
The extraction of main edge lines, apart from trivial carry thresholding for graph edges,
 can be performed by greedy exploration of best carry edges or adaptations of shortest path algorithm for a more robust retrieval.

In literature, many types of edge have been formalized (step, ramp, line, roof etc.), depending on properties and spatial relationships among them~\cite{canny1987computational}. 
The edge graph contains only one basic type of edge, while the graph analysis can recover any complex relationship. Our model separates the edge detection task from the 
high-level interpretation of edges where, e.g., anti parallel edges are recognised as the support of a solid black line (roof edges). 

A perceptual characterization of contours is not uniquely defined, since texture, changes of luminance, global relationship of features~\cite{papari2011edge} and even optical illusions~\cite{opticalillusion} carry additional information that influences actual detection. We focus on the isolines information (original signal) and we delegate any further higher-level (and graph-based) interpretation to a next processing step.

\subsection{Edge drawing}
\label{sec:edgeDrawing}

A graph edge is a match between two sides of a region with non empty carry. 
The drawing of the graph edge produces a vector line which approximates at best the perceptual traversal on the region.
The goal is to produce a line that minimizes the distance from the local maxima in gradient magnitude, without computing them.
The proposed drawing is an approximation that simplifies the issues of tracking the edge local maxima inside the region. 

Since, by construction, each edge is correctly placed at the intersection of paths ($LM_i$ and $RM_j$ points in Figure~\ref{fig:edge} left), we compute the two isolines passing through $LM_i$ and $RM_j$.
In Figure~\ref{fig:edge} at the center, all red isolines for corresponding maxima are highlighted. 
The edge is confined in the area between the two isolines. The edge is drawn as a linear interpolation between the two associated isolines (see the green thick lines). The edge starts at left side with all the weight on the corresponding isoline, and while approaching the right side the weight is shifted according to the travelled distance. At the right side, the weight is given only to the corresponding isoline.
The edge drawing inside the region is rather predictable, given the region monotonicity and the absence of saddles. 
An edge line is a rather smooth and robust feature and usually it is rather parallel to the bundle isolines even for long distances. 
We argue that such retrieving can be sufficiently adequate in most cases.
%This allows a vectorial modeling of the edge, reduced computational costs and high precision. 
This relaxation is visually acceptable, since regions are usually small, especially for natural images.

The only drawback of such technique is that narrow regions and/or high difference in values introduce lines that significanlty cross isolines, since the region extension is not able to accommodate the transition between the two maximal values. Often such edges are weak (see Figure~\ref{fig:edge} bottom right) and they are perceptually irrelevant. Such edges can be rearranged through a graph processing that allows edges to cross multiple regions before the junction. In the next section we present a synthetic y junction example.

\subsection{Optimizations}
\label{sec:opt}
\subsubsection{Steepest graph simplification}

The steepest graph induces a partition of the image into regions. Often, see Figure~\ref{fig:edges} top left, many regions are only one pixel thin. In practice, such regions are rather redundant, since they do not introduce any additional connectivity to the edge graph. Computationally, such divisions require extra computation and they could be simplified by a single and larger region that captures the same behaviour of the included regions. 

The graph can therefore be simplified, based on a merging procedure that combines suitable adjacent regions.
Two regions can be merged if the common path does not contain any saddle (preservation of monotonicity). This simplification has little impact on connectivity.
In fact, no saddles are included in merged regions and the overall connectivity is unchanged. The
only difference is that some edges that were separating and merging because of a local minima in gradient magnitude on the removed path, become joined in the merged region. 

Another advantage in simplifying the steepest graph is that, since the localization of local maxima in gradient magnitude nodes on min-max paths is subject to small approximation errors of the min-max paths (since they are not perfectly orthogonal to isolines), the presence of narrow regions enhance such errors in edge positioning. Wider regions compensate such errors and the corresponding graph edge results more smooth. Nevertheless, drawn edges can undergo to any vectorial simplification, smoothing and minimization (as for active snakes), which is not considered in the paper.

\subsubsection{Local minima in gradient magnitude}
As described in Section~\ref{sec:em}, sometimes the presence of noise fragments one edge into a multiple set of parallel edges.
Sometimes a main edge experiences multiple divisions and merging with an unpleasing result as in Figure~\ref{fig:edges} top left. 
The information associated to nodes and edges describes the span and carry. A graph processing aiming at recognizing such fragmentations can merge the nodes along the same path, so that parallel edges can be joined as a single and stronger edge.
An example of such post-processing analysis is presented in Figure~\ref{fig:edges} bottom left, were edges are redrawn according to the simplified graph.

\begin{figure*}[ht]
\begin{center}
\begin{minipage}{0.40\textwidth}
\includegraphics[width=0.49\textwidth]{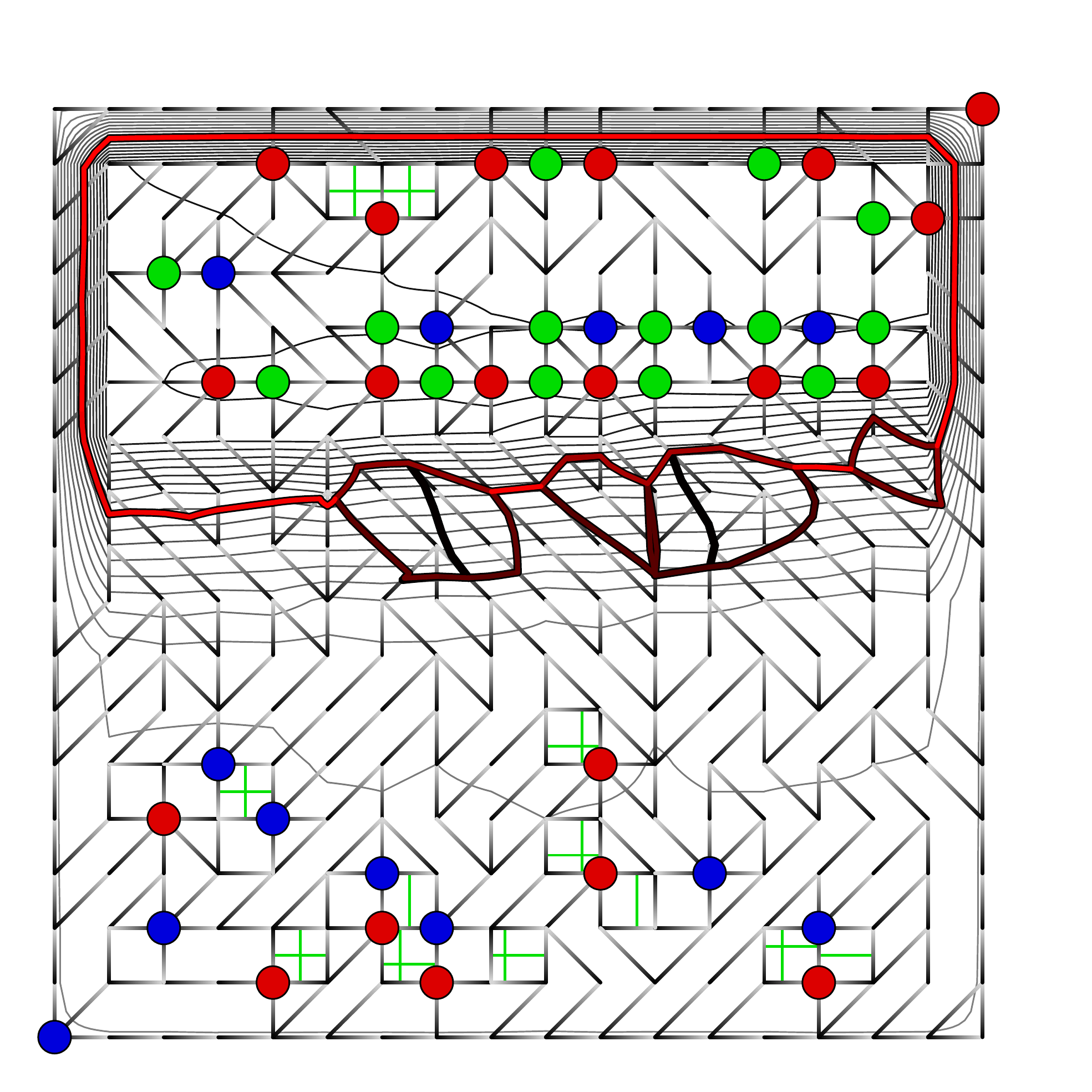}
\includegraphics[width=0.49\textwidth]{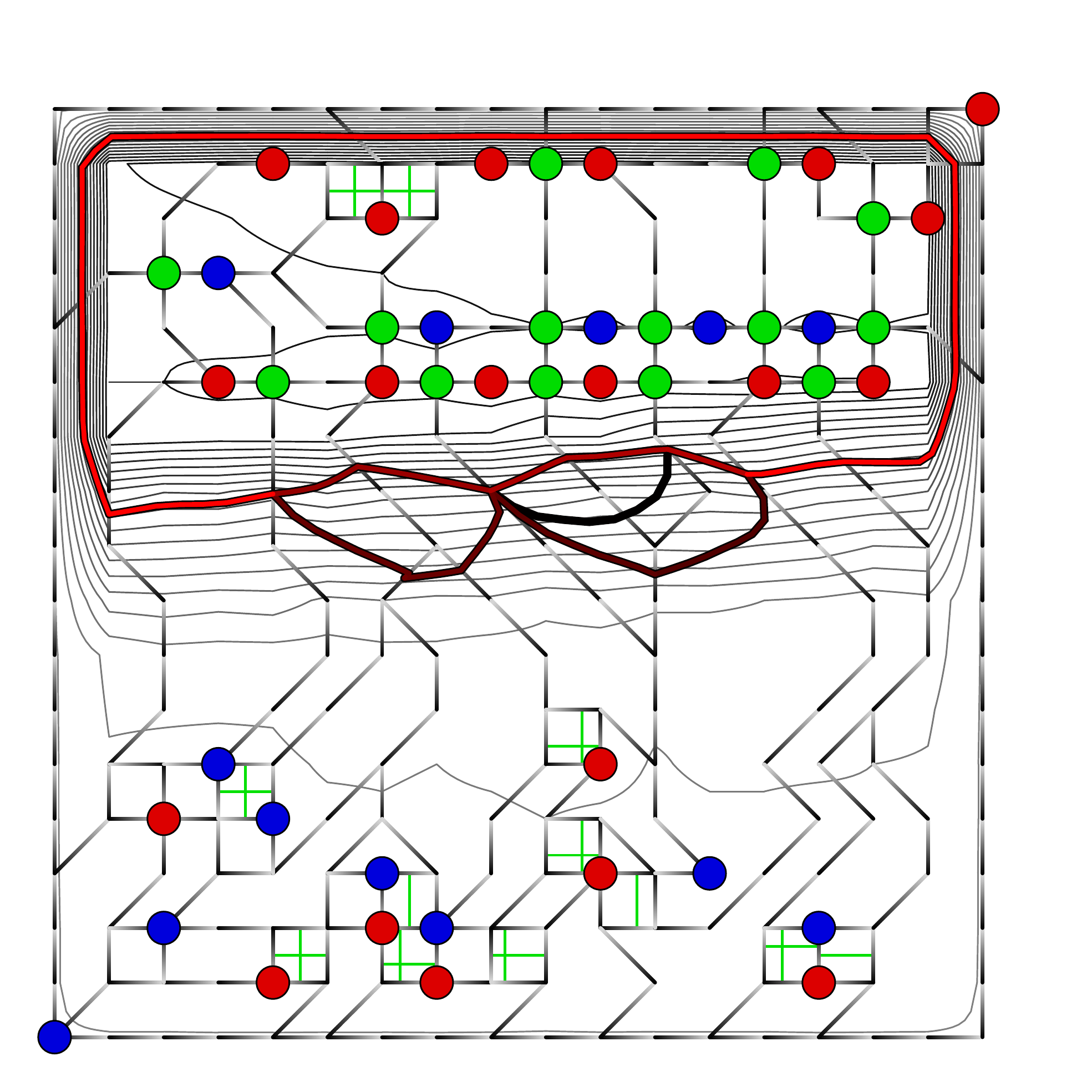}

\includegraphics[width=0.49\textwidth]{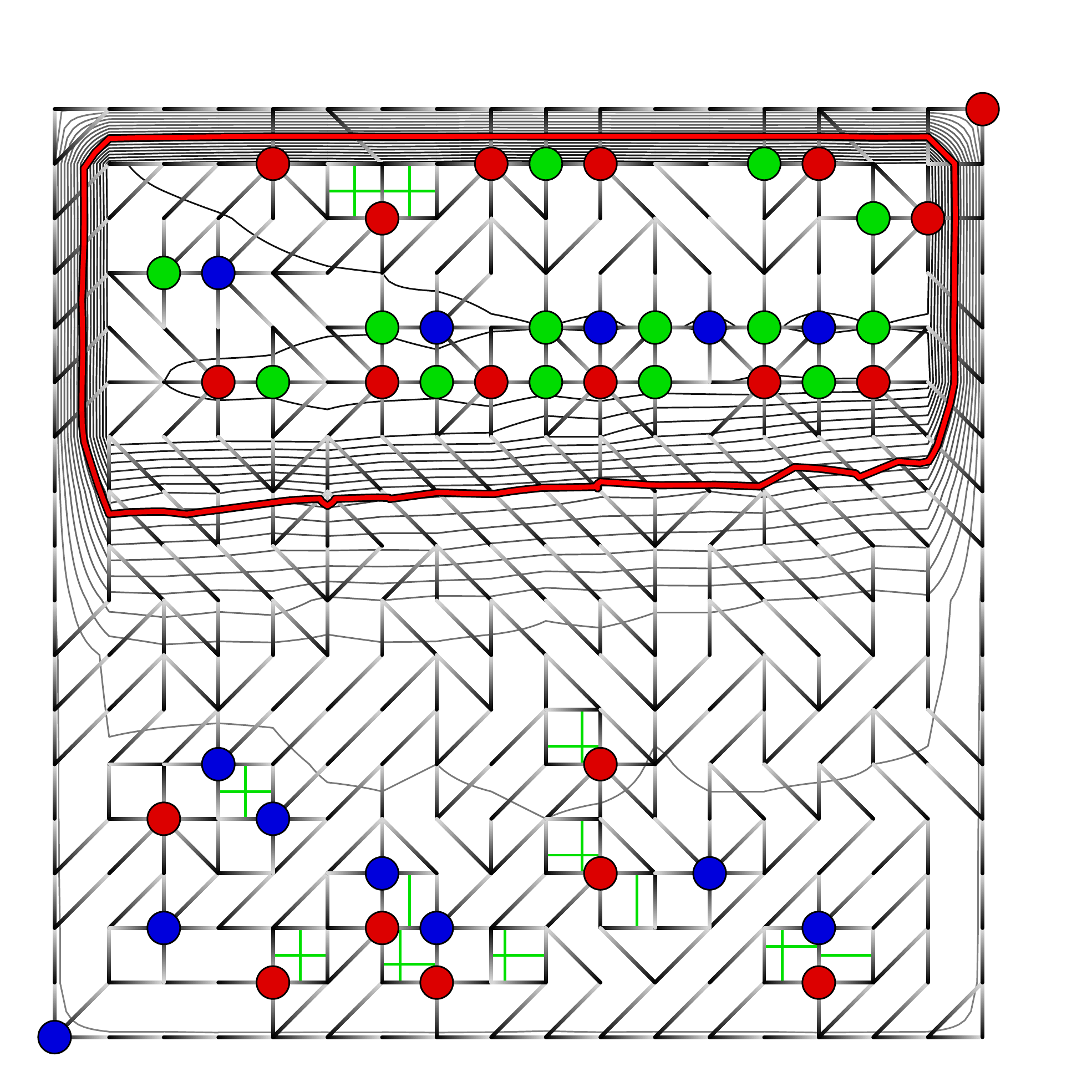}
\includegraphics[width=0.49\textwidth]{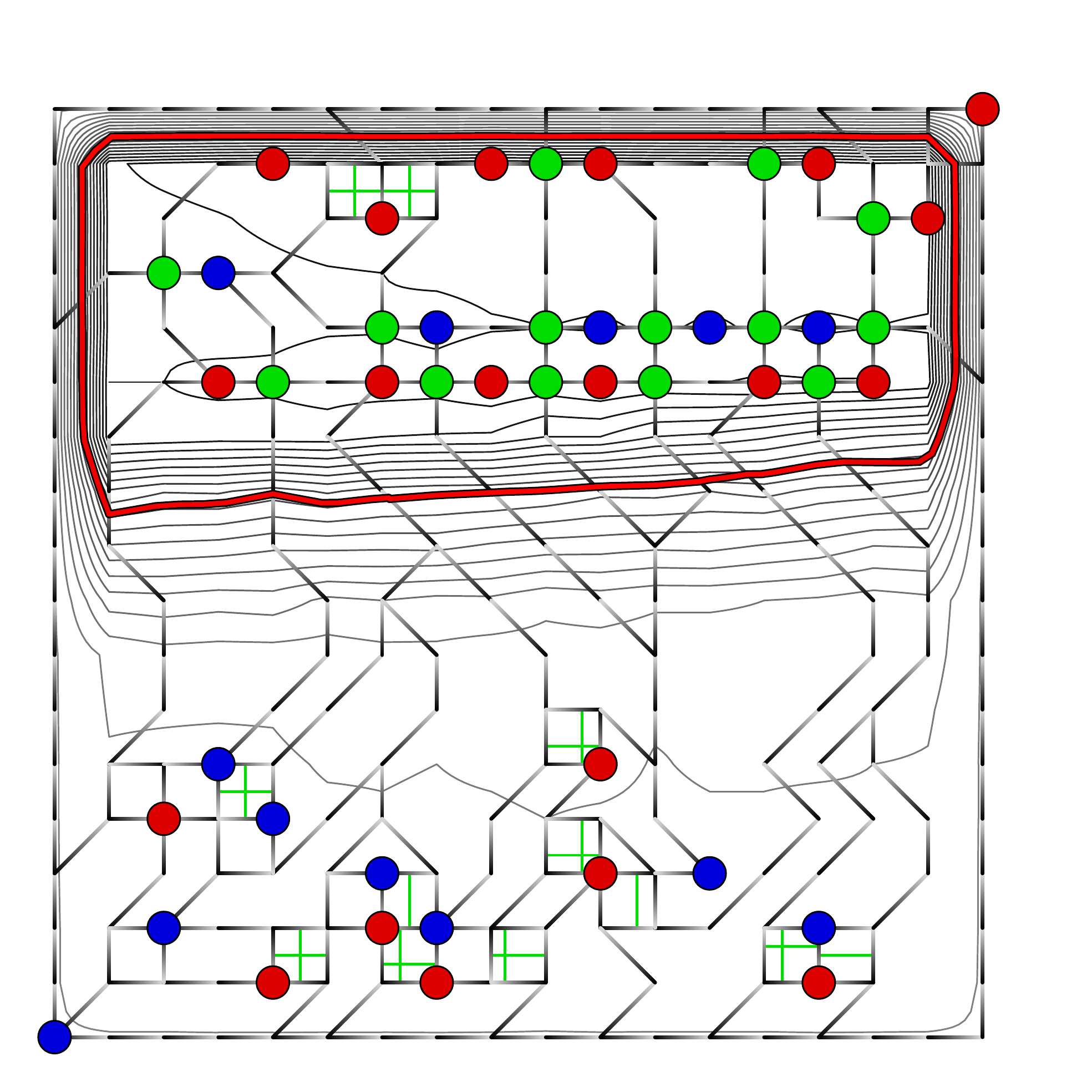}
\end{minipage}
\begin{minipage}{0.40\textwidth}
\includegraphics[width=0.99\textwidth]{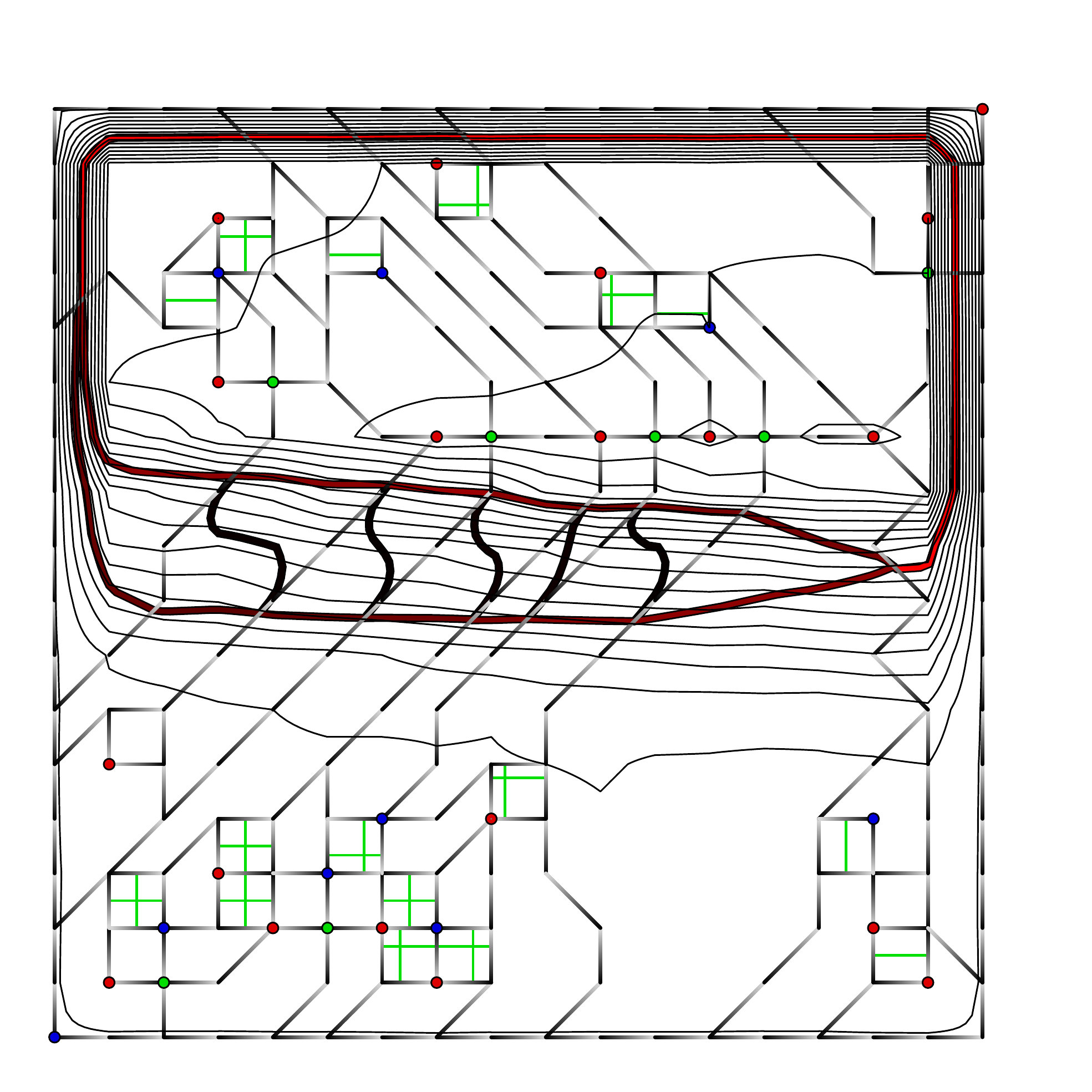}
\end{minipage}
\caption{Edge separation due to local minima and graph simplification (2x2 images on the left) and small carry edges (right)}
\label{fig:edges}
\end{center}
\end{figure*}

In Figure~\ref{fig:edges} on the left, we show a set of 2x2 detections for the same test image (16x16 pixels). For clarity, all weak edges have been removed and only the behaviour of the relevant edge is shown.
There are 4 scenarios being depicted: at the top, the detection of all points on the paths is shown, while at the bottom 
the post-processing procedure merges nodes that are separated by a local minima. On the left, the complete steepest graph is used, while on the right the simplified graph is used to determine edges. It can be noted that the presence of (almost unnoticeable) local minima in gradient magnitude causes the division of the main edge. For this test image, noise is enough to create local minima and corresponding parallel edges. 
Note that in this example, no saddles cause edge separation. Stronger noise can also introduce saddles. In this case, the graph post-processing should analyze independent paths and decide to merge them according to the local behaviour.

\subsubsection{Perceptually pleasing edges}

\begin{figure*}[ht]
\begin{center}
\includegraphics[width=0.30\textwidth]{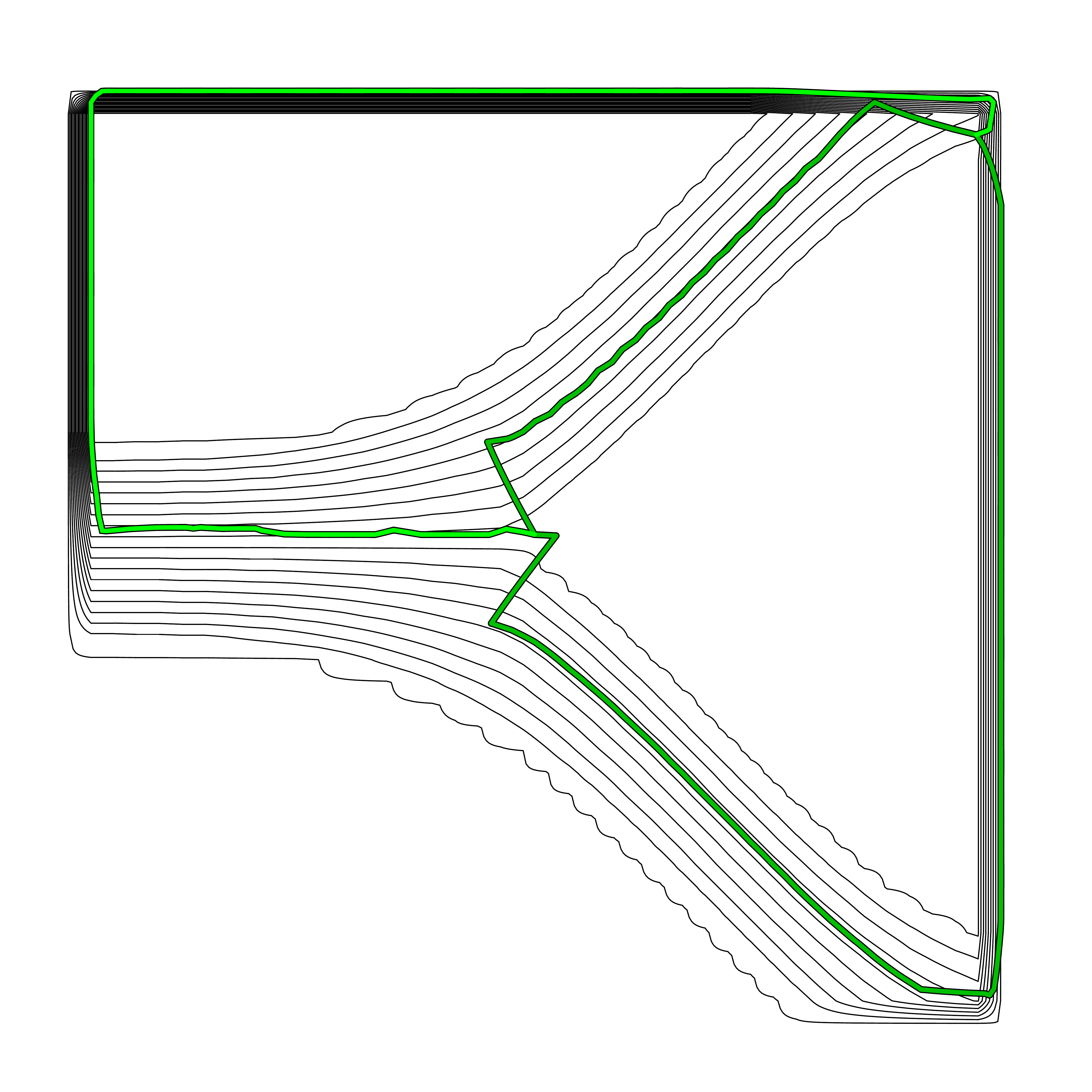}
\includegraphics[width=0.30\textwidth]{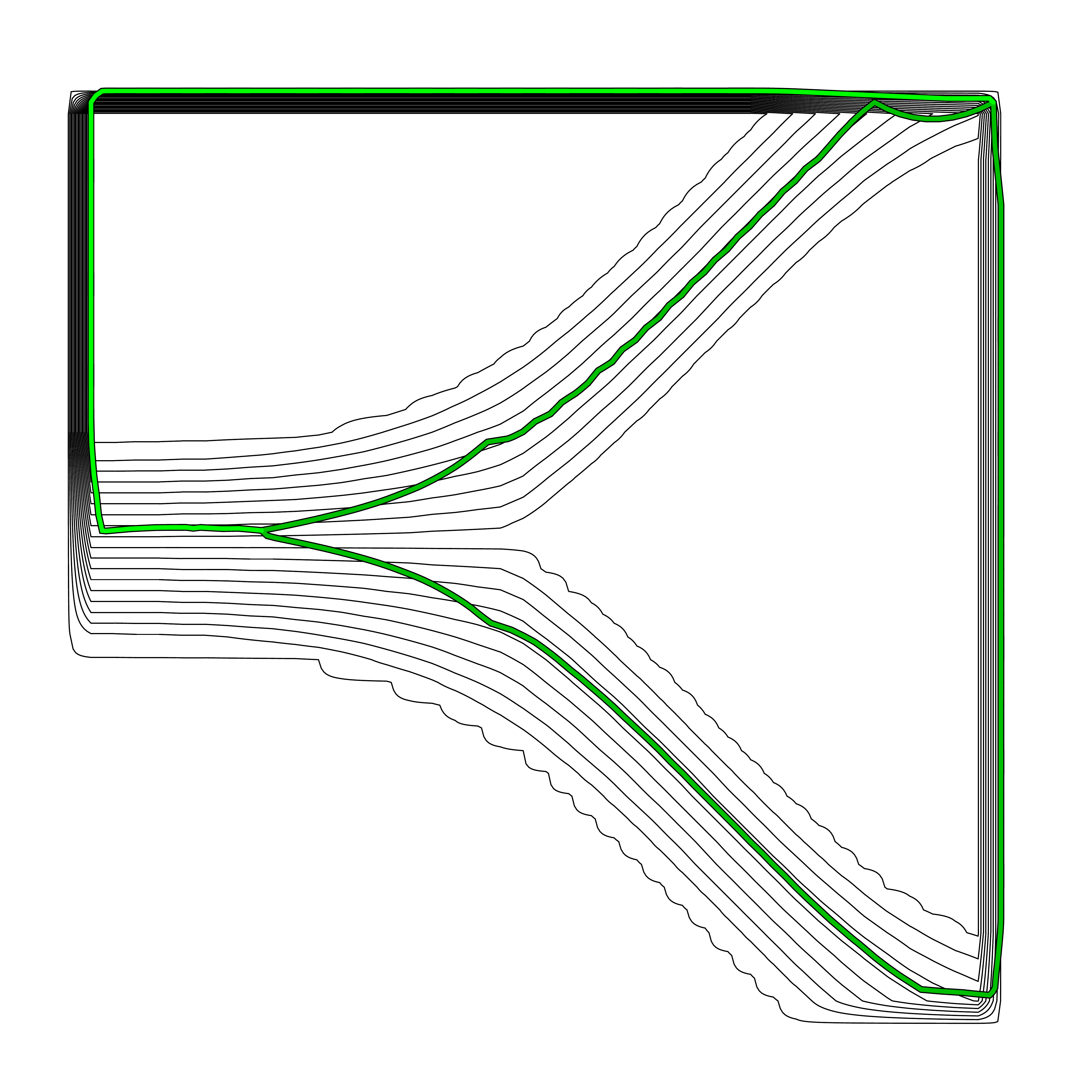}
\caption{Handling of edges at a y junction \label{fig:y}}
\end{center}
\end{figure*}

The edge graph delivers correct connectivity, however, when edges are drawn, the resulting lines may contain abrupt changes.
An example of a 24x24 image with a synthetic 'y' junction is depicted in Figure~\ref{fig:y}. Isolines for bilinear interpolated pixels are drawn with black lines and corresponding edges are depicted in green.
The junction point is recovered in the geometrically correct place, since on the left there is a single undivided bundle of isolines, while on the right the bundle get separated into two parts.
Depending on the perceptual interpretation of junctions, it is possible to post-process the edge graph and to adapt the location of the  junction further on the left. Basically, a node's support is partitioned into two new nodes along the same path, if there are two edges starting from the original node and they have a significant difference in image values (the drawn line crosses many isolines on a short distance). In Figure~\ref{fig:y} at right, we show an example of such post-processing, with a more natural interpretation of the junction. Since the edge graph contains all the needed information, it is possible to define a model for junctions and apply the post processing  to produce the desired rendering of edges.

Figure~\ref{fig:edges} on the right presents an example of edges with small carries that do not contain local maxima. In such cases, the rendered edges appear as an 's' shaped line, which connects the main nodes with an almost orthogonal junction. 
Given the minimal influence, compared to principal carries of the region, the edge could be removed. Another option is again to post-process the edge graph and divide the bundle support and forward the junction to the adjacent regions. The only drawback is to increase the number of parallel edges and in general this could lead to a visually cluttered result.

\subsection{Comparison with Canny's detector}
In this section we propose a critical review of the classical Canny's detector, under the point of view of the edge model presented above.

The detector starts with an image filtering by a Gaussian smoothing controlled by a parametric and user selectable $\sigma$ value. It removes weak minima, strengthens relevant edges and merges close-by local maxima ridges into a single and stronger edge. 
The automatic tuning of $\sigma$ for optimal results is difficult~\cite{lindeberg1998edge}, and 
different regions of the image would benefit from different parameter values. The practical effect of smoothing is that weak but clear edges are removed and edge connectivity gets disrupted. The filtering worsen the treatment at edge junctions since it decreases the  gradient magnitude at the junction point.
Our edge extraction, instead, exploits the full amount of information of the original image and it separates the noise removal and restoration from the geometric detection of edges. 

Canny's detector works at pure pixel level. Gradient magnitude is computed and stored as new raster information, as result of a convolutional filter. Our approach computes gradient magnitudes along discrete steepest paths and they are treated as monodimensional information along the best local direction. Sobel convolution computes only gradient magnitude while gradient direction is lost. Implicitly, each pixel stores information about a single principal direction and it ignores the presence of saddles.
This limits the precision of the edge positioning to integer coordinates and it introduces
ambiguities in chaining the correct neighbors and branching detection, especially in low resolution context.

The non maxima suppression corresponds to the approximation of 
our identification of the maximal gradient along every steepest path. Canny's algorithm performs a local non-maximality test, that  removes the candidate pixel upon failure. 
The procedure is performed for each pixel and the information about edge support, saddles and span is simply ignored. 
After processing, the set of maximal pixels has also lost information about pixel neighbors, gradient directions and their connectivity.
Basically, the structure of min-max paths is not considered at all. 

The consequence is that the next phase executes an uninformed and local pixel chaining: 
the greedy iteration starts with a gradient magnitude pixel greater than a high threshold, conquers the best next local maximum (in the $n8()$ neighborhood), until a low threshold is reached  (compare to the match of maxima on region right and left paths and edge drawing). During chaining, there is no knowledge about carry-based edge connectivity, nor edge correct direction related to gradient orientation.
The low threshold avoids to loose track of the edge direction in favour of noise. This is undesirable because the side effect is to skip weak yet relevant edges with low carry and large spans.
Moreover, edge connectivity can not be reconstructed in weak signal conditions, since only the predominant edge is followed, while the secondary ramification does not reach a sufficient threshold to get connected to the main edge.
In our model, noise can produce a complex connectivity graph, but connectivity can always be reconstructed by graph analysis.

An edge should be natively described by a line (in terms of vector line with continuous coordinates) rather than a reconstructed sequence of chained pixels, which are modelled as a vector polyline with integer coordinates. \emph{A posteriori} edge subpixel refinements are biased by the lack of information lost during convolutional smoothing and derivative filters.

\iffalse

\begin{figure}[ht]
\begin{center}
\includegraphics[width=0.6\textwidth]{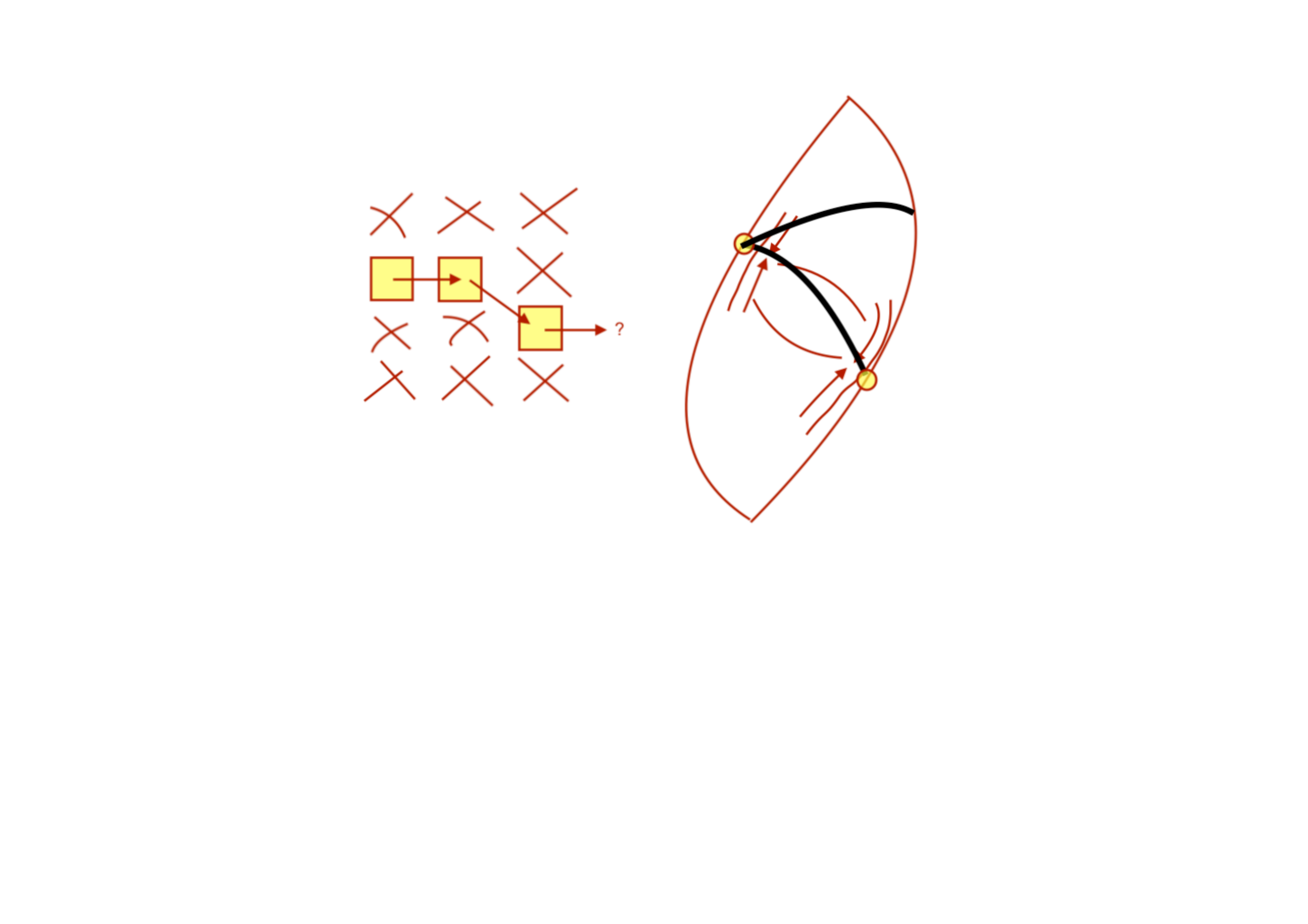}
\caption{Canny vs edge model\label{fig:canny-vs-edge}}
\end{center}
\end{figure}

\hl{potrebbe essere carino, ma immagine da rifare meglio e anche descrizione...}
Figure~\ref{fig:canny-vs-edge} depicts the comparison between Canny's detector and our method.
Canny's detector is pixel based: each pixel is tested whether is part of an edge, it is connected to neighbors and a list of pixels form the edge.
In our method, the min-max structure is built (it can be done pixel-based with no drawbacks), model the edges properties along each region sides (exact, assuming $R$ function), edge are properly connected and drawn (vectors). Properties are associated to 
edges (carry, support).

\fi

\section{Results}
\label{sec:res}

We implemented a single threaded benchmark program, with no relevant optimizations, run on a 2.5GHz Intel i7, 16 GB RAM 1600 MHz DDR3 laptop. The construction of complete steepest graph is rather stable and it costs 60 mS per Mpixel. Next phases depend on the complexity of the image. The graph simplification costs 5-6 times more. The edge graph construction takes around 3 seconds per milion of graph edges. The drawing of edges takes around 4.5 seconds per milion of graph edges to be rendered. In the context of parallel optimization, the algorithm is expected to run in real time for images up to 1 Mpixel size.
At \url{www.unipr.it/~dalpalu/edges} a more comprehensive set of tests is reported. For size requirements, our results are rendered as high resolution rasters, however at specified url the vector pdf files are downloadable for full comparison.

In Figure~\ref{fig:test1}, we detail the complete detection procedure for the same test image of Figure~\ref{fig:edge-isolines}. Steepest graph (at top center) is depicted with directed graph edges (black to white lines). The edge graph is with red to blue edges (top right).
Drawn edges (bottom left and center) are colored by carry value. Green edges are shifted by (+0.5,+0.5), in order to adapt to visual perceptual interpolation of pixels.
The steepest graph used for this test is simplified, in order to have a smaller edge graph. 
As last figure (bottom right), we show the comparison with the output of Canny's detector, computed with the built-in function in OpenCV~\cite{opencv_library} (Sobel mask 3x3 and $\sigma=2$) with gradient magnitude over-imposed to detected edgels. Note that our processing does not compute any Gaussian filtering. 

\begin{figure*}[ht]
\begin{center}
\includegraphics[width=0.32\textwidth]{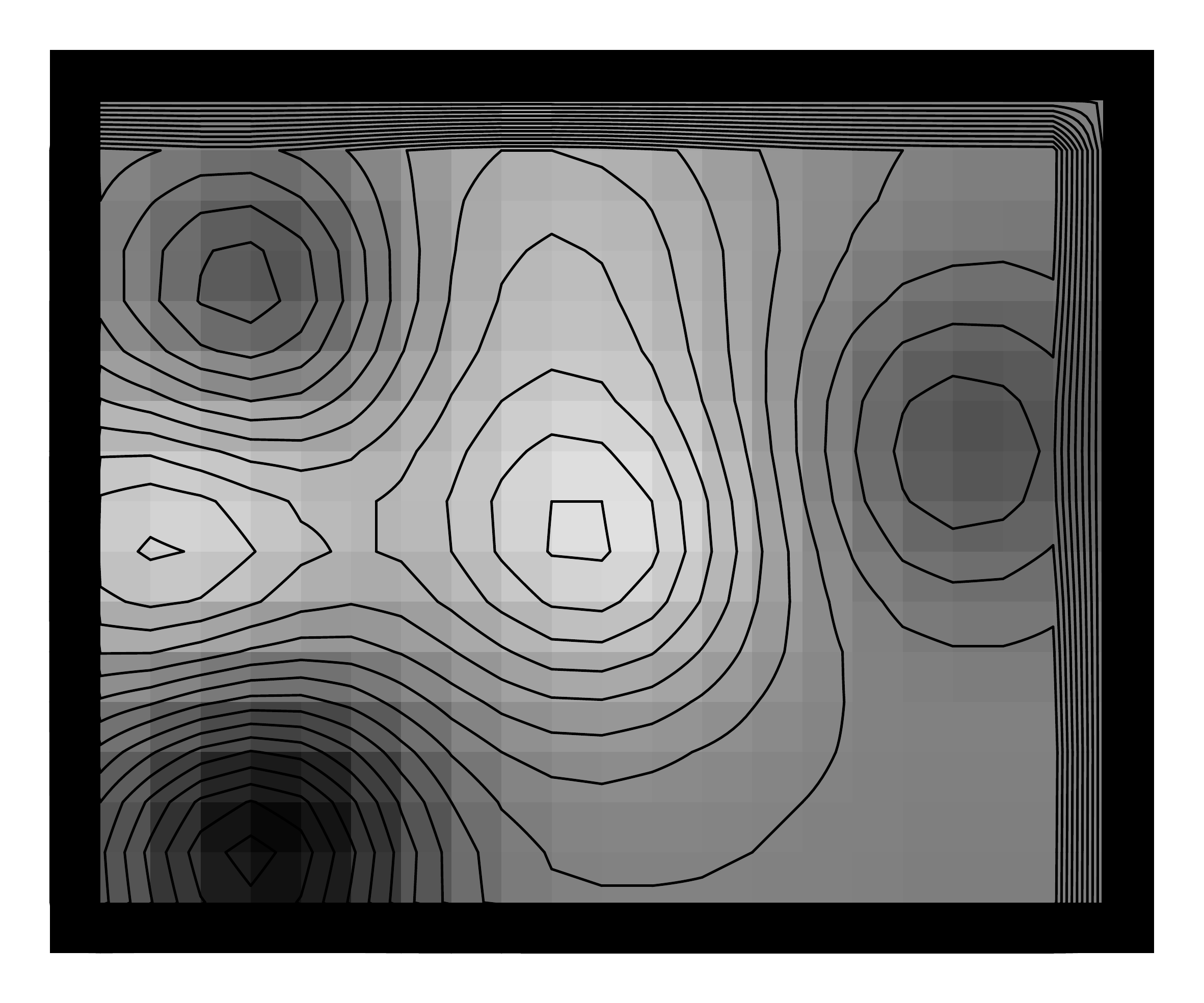}
\includegraphics[width=0.32\textwidth]{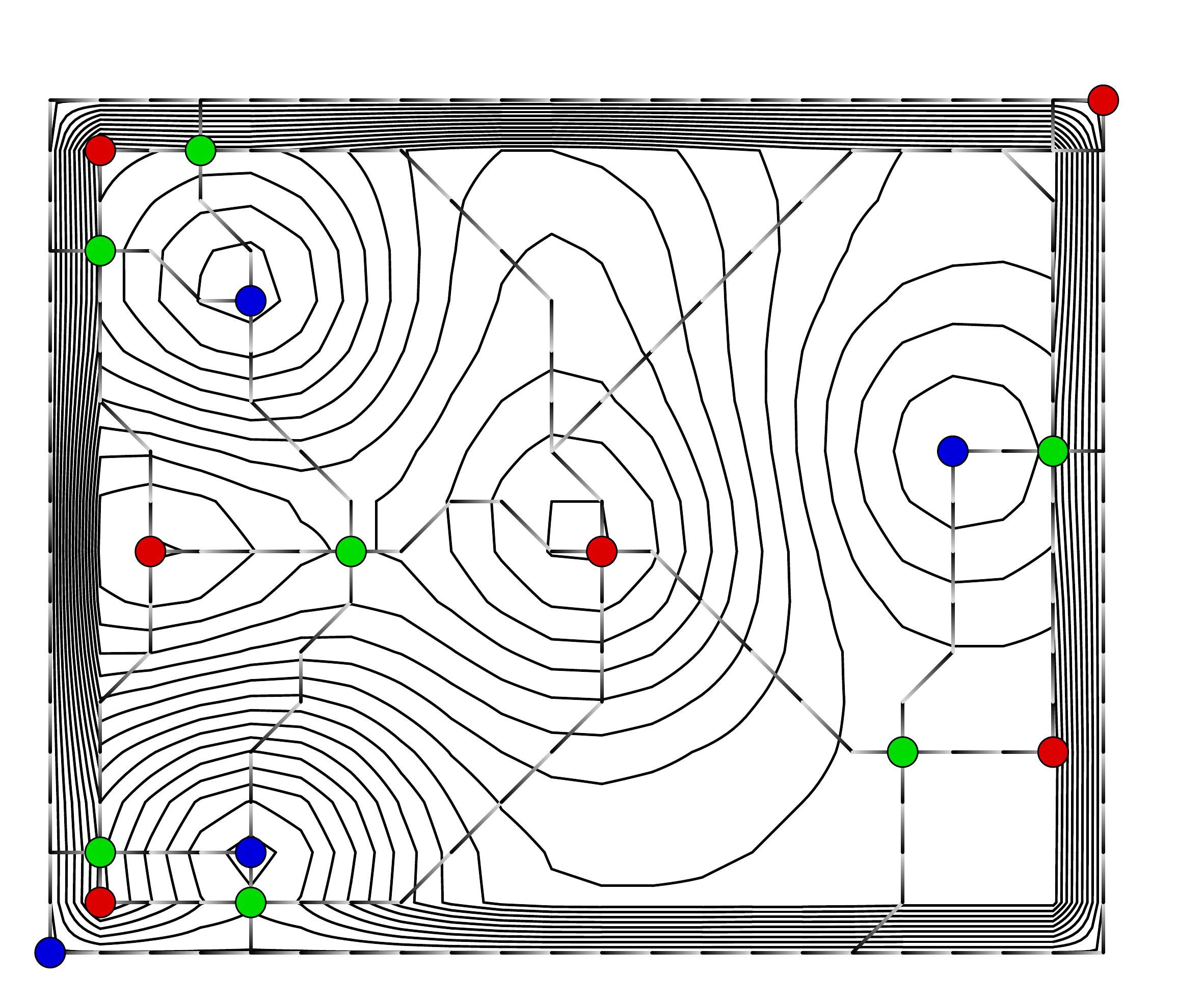}
\includegraphics[width=0.32\textwidth]{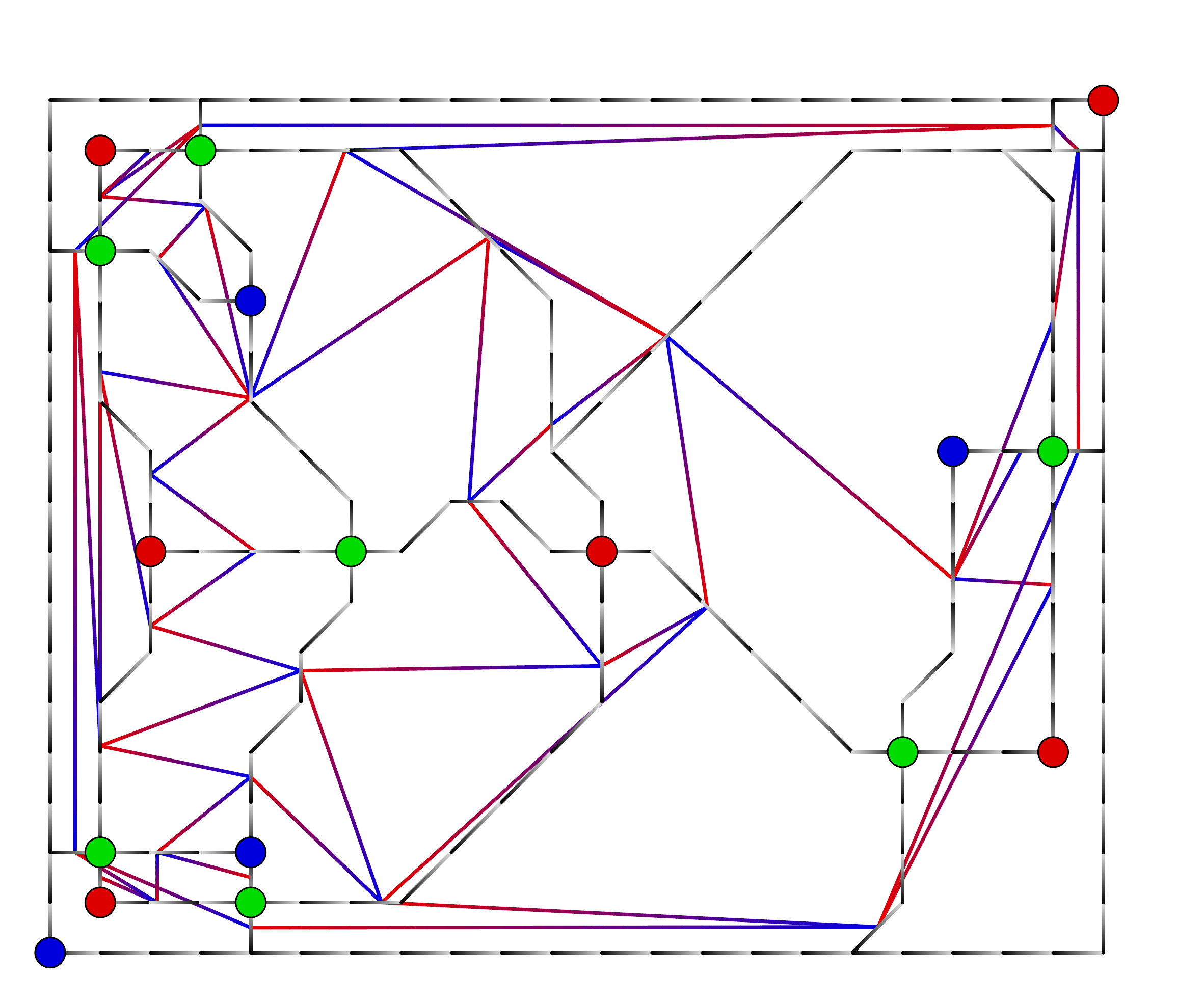}

\includegraphics[width=0.32\textwidth]{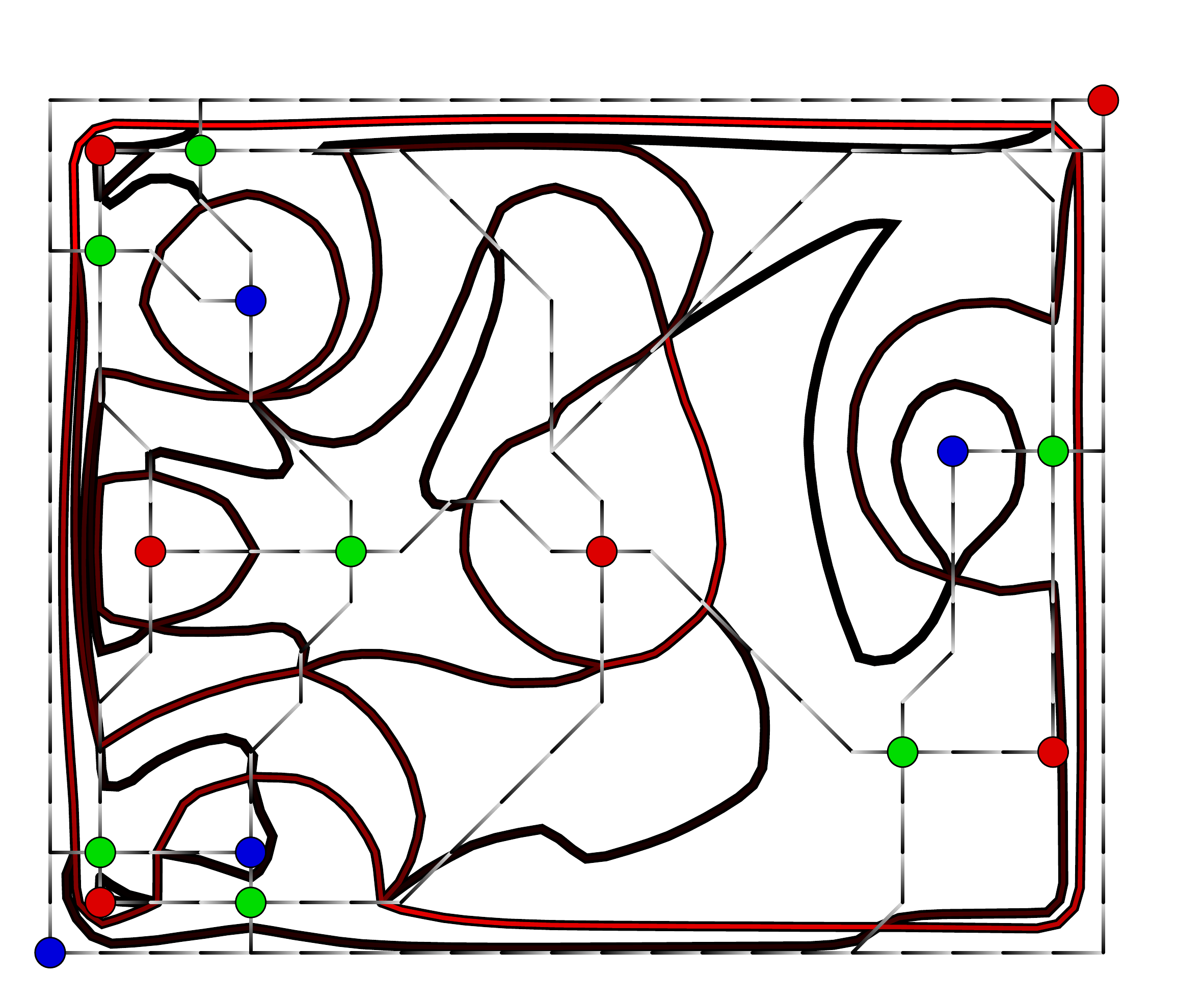}
\includegraphics[width=0.32\textwidth]{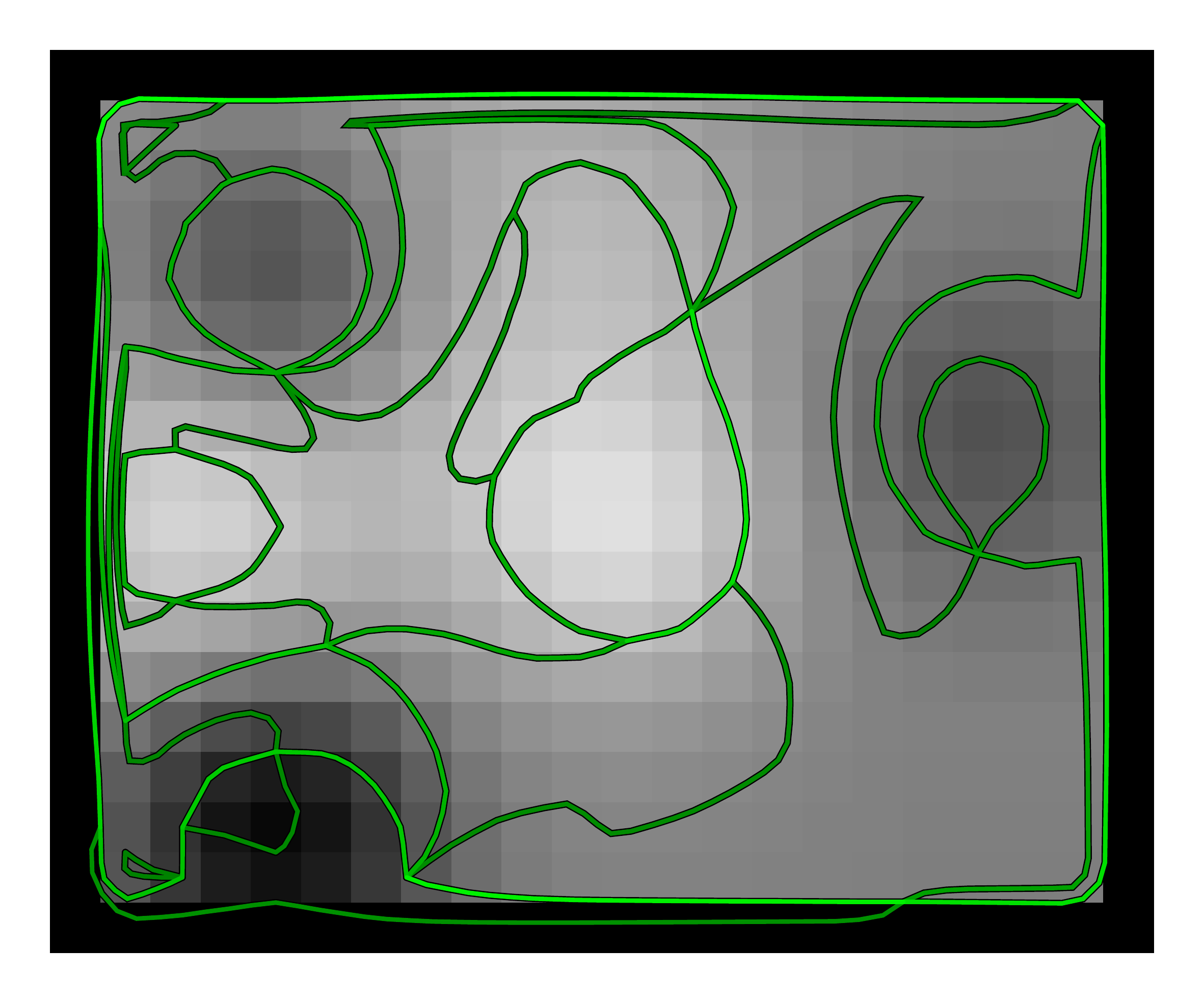}
\includegraphics[width=0.32\textwidth]{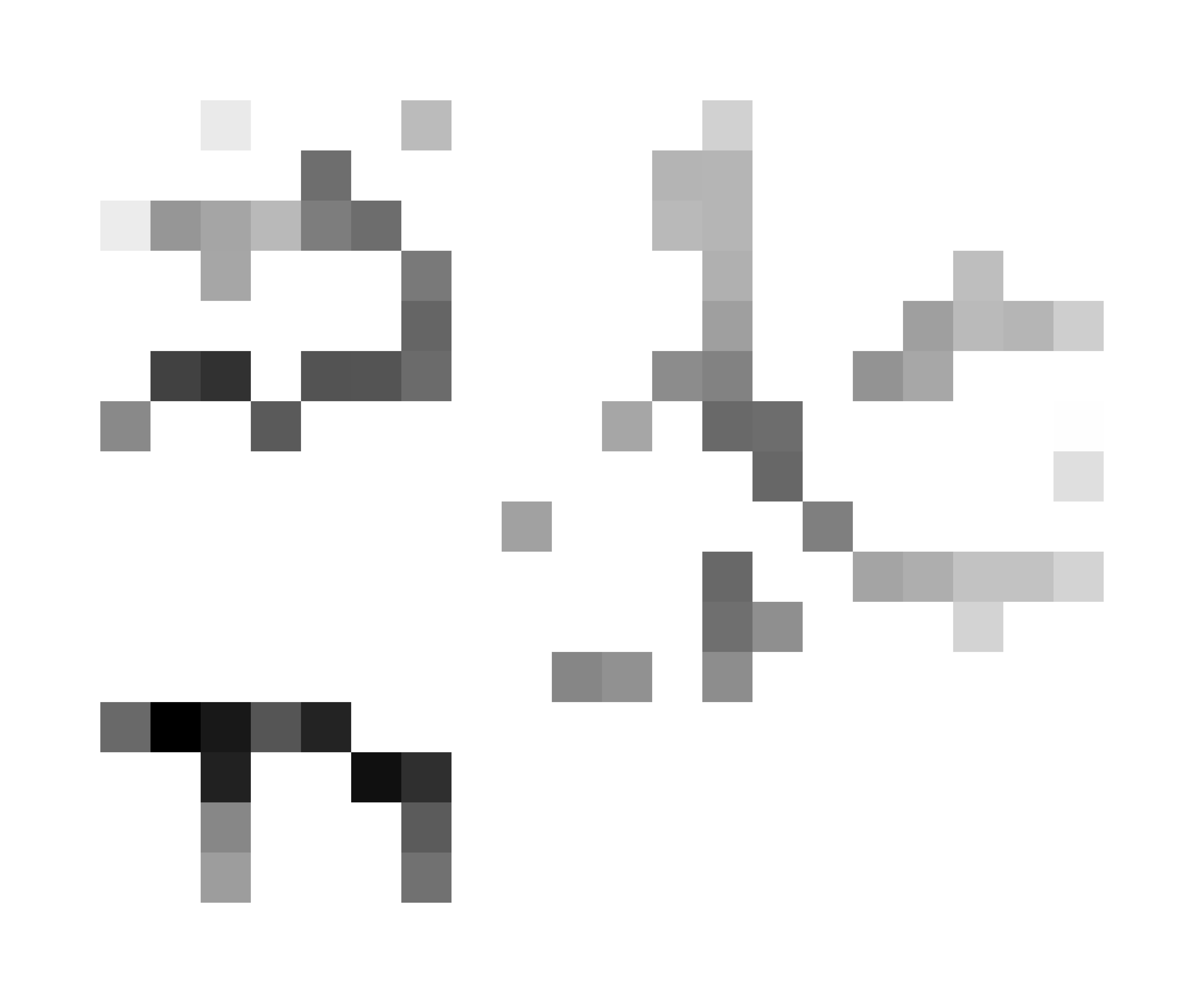}

\caption{Top row: Original image and isolines (left), isolines and steepest graph (center), steepest graph and edge graph (right). Bottom row: Steepest graph and drawn edges (left), original image and drawn edges (center), Canny's detector (right)    \label{fig:test1}}
\end{center}
\end{figure*}

Three synthetic small 24x24 pixels images are tested in Figure~\ref{fig:test2}. They capture some challenging behaviours that can occur at small scale. The first one is a bi-dimensional sinusoidal wave with a period of 4 pixels. The detector correctly captures the circular peaks and valleys. Moreover, small carry edges (depicted in darker green) capture the small transfer of isolines between the peaks. Note how the non maxima suppression in Canny's detector struggles, given the high variability of gradient direction that is not properly detected by Sobel convolution.

The second example investigates slightly rotated lines of 1 pixel width. Here the antialiasing effect is of help in determining the correct position of the main edge, which actually takes advantage of subpixel precision. Note that the detector also identifies the white and black oval shape islands that represent a dual image interpretation. Canny's detector is not capable of detecting such detailed configuration.

The third example presents a diagonal ramp edge that is embedded in a small vertical gradient. The isolines contribute to the identification of the edge, but the bundle that supports an edge varies along the diagonal. It is interesting to note how the main edge is recovered and how the network of low carry edges brings small bundles of isolines to the fading edge at the border. The right border of the image shows how the fading edge is handled. Part of the isolines that are lost from the vertical edge are connected to the diagonal edge. The almost orthogonal connection at the diagonal edge has been discussed in Section~\ref{sec:opt}.
Canny's detection correctly identifies the strong diagonal edge, but it fails at image corners and fading edge.

\begin{figure*}[ht]
\begin{center}
\includegraphics[width=0.30\textwidth]{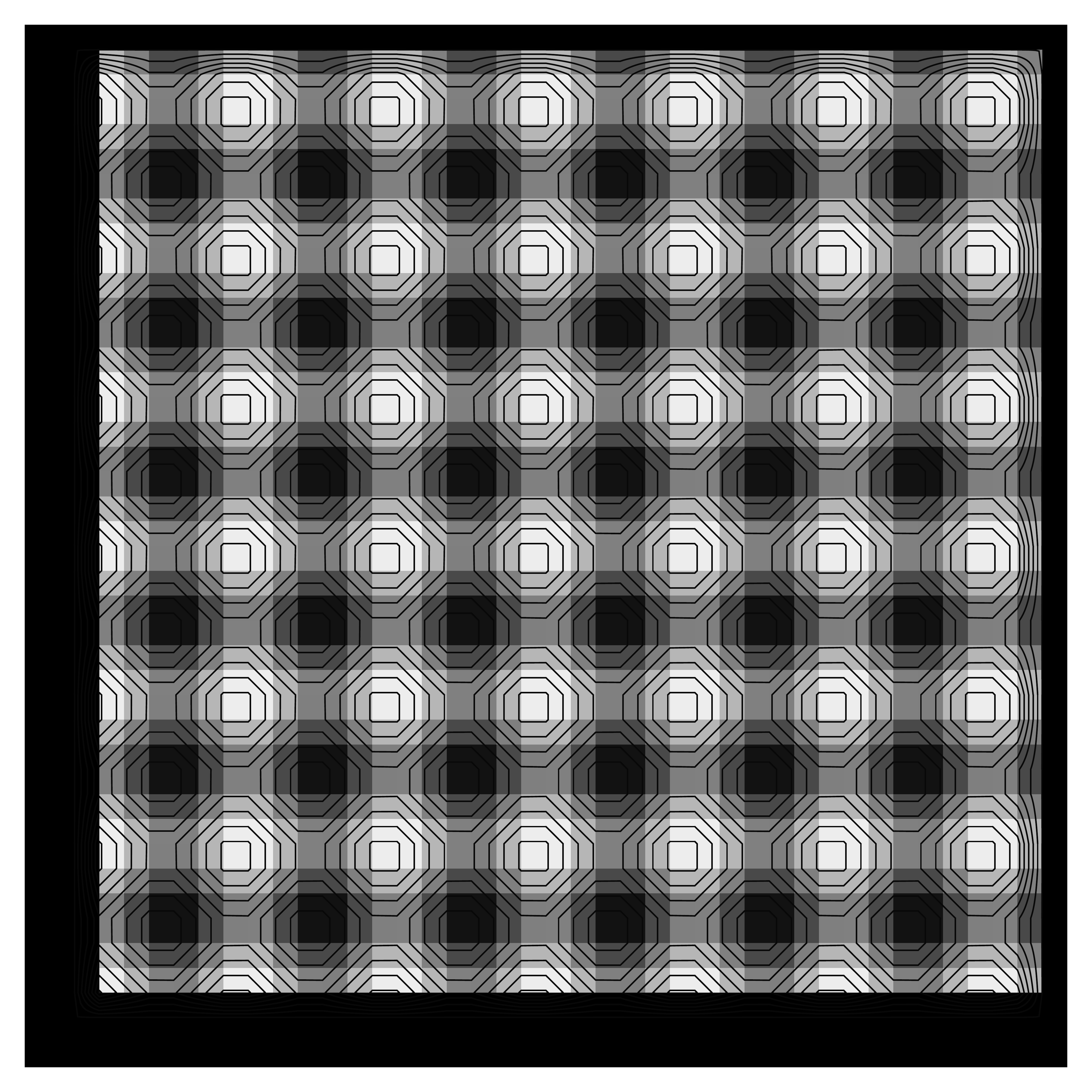}
\includegraphics[width=0.30\textwidth]{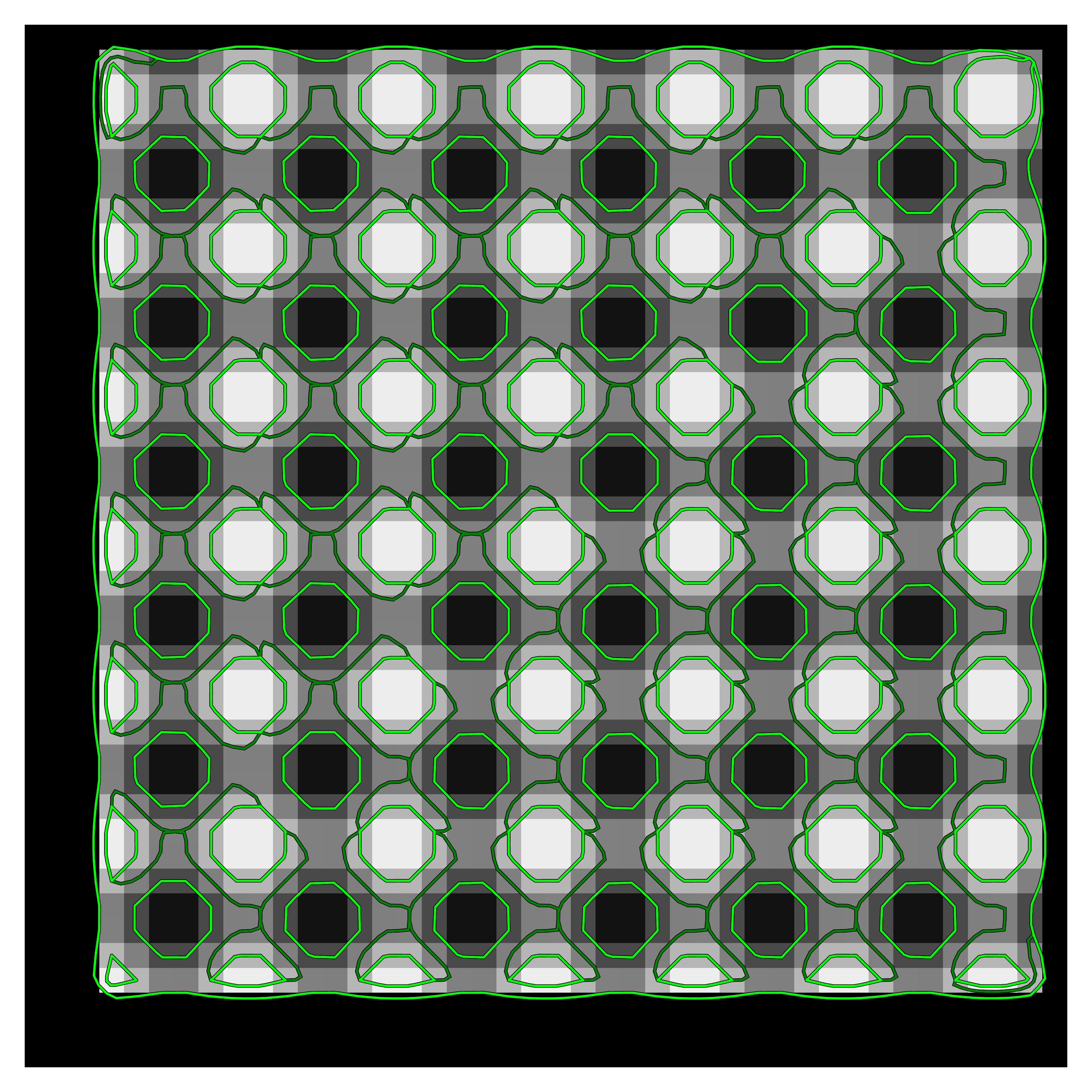}
\includegraphics[width=0.30\textwidth]{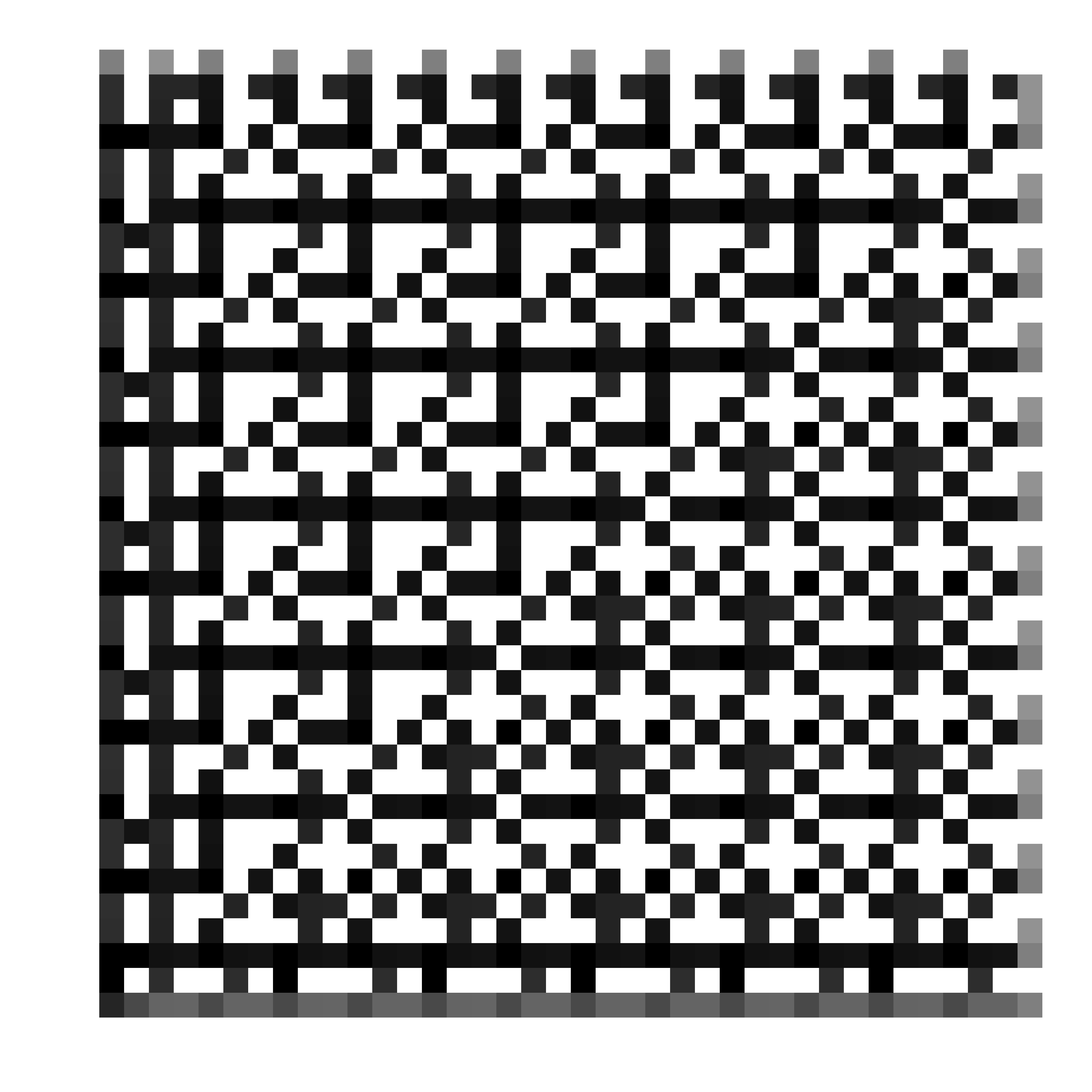}

\includegraphics[width=0.30\textwidth]{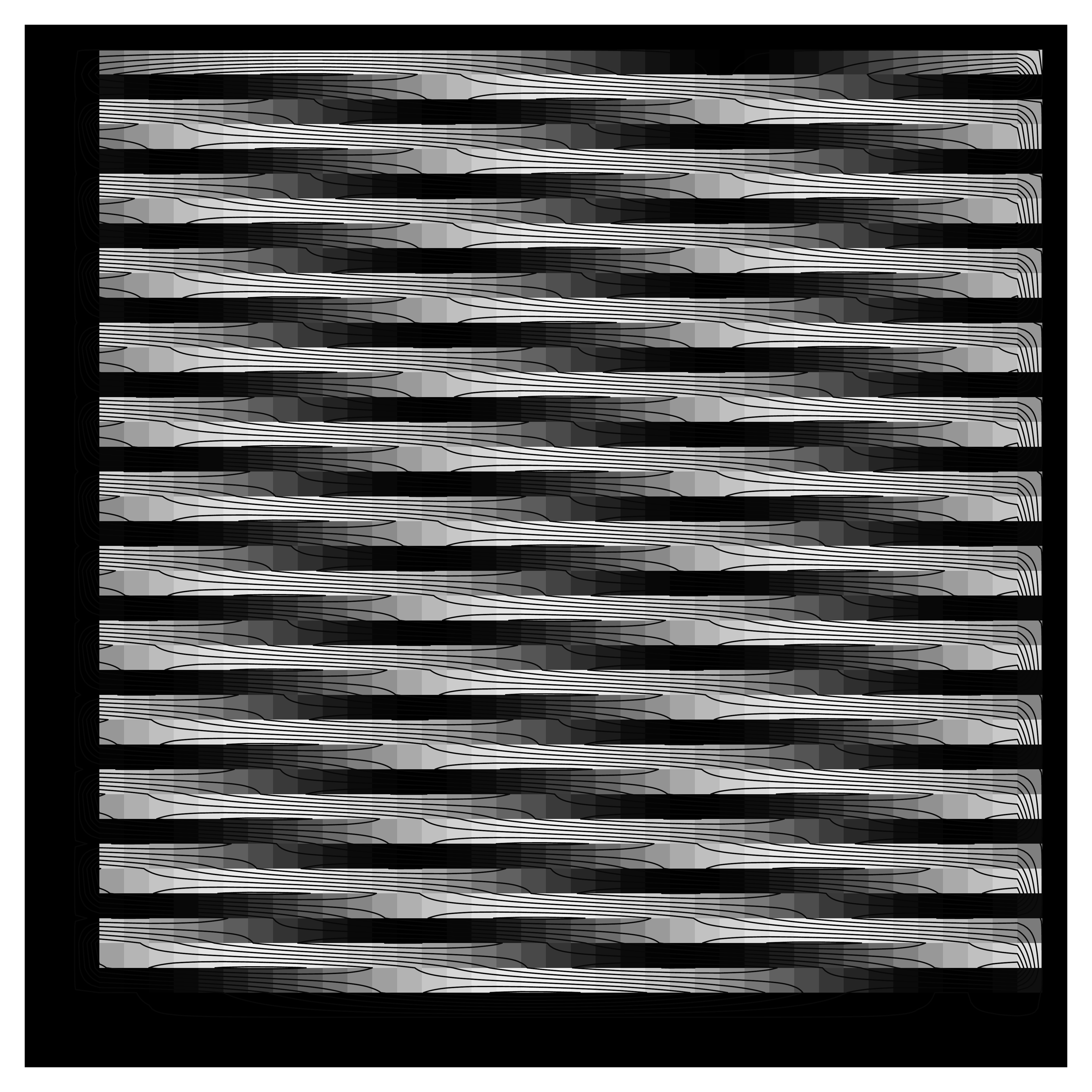}
\includegraphics[width=0.30\textwidth]{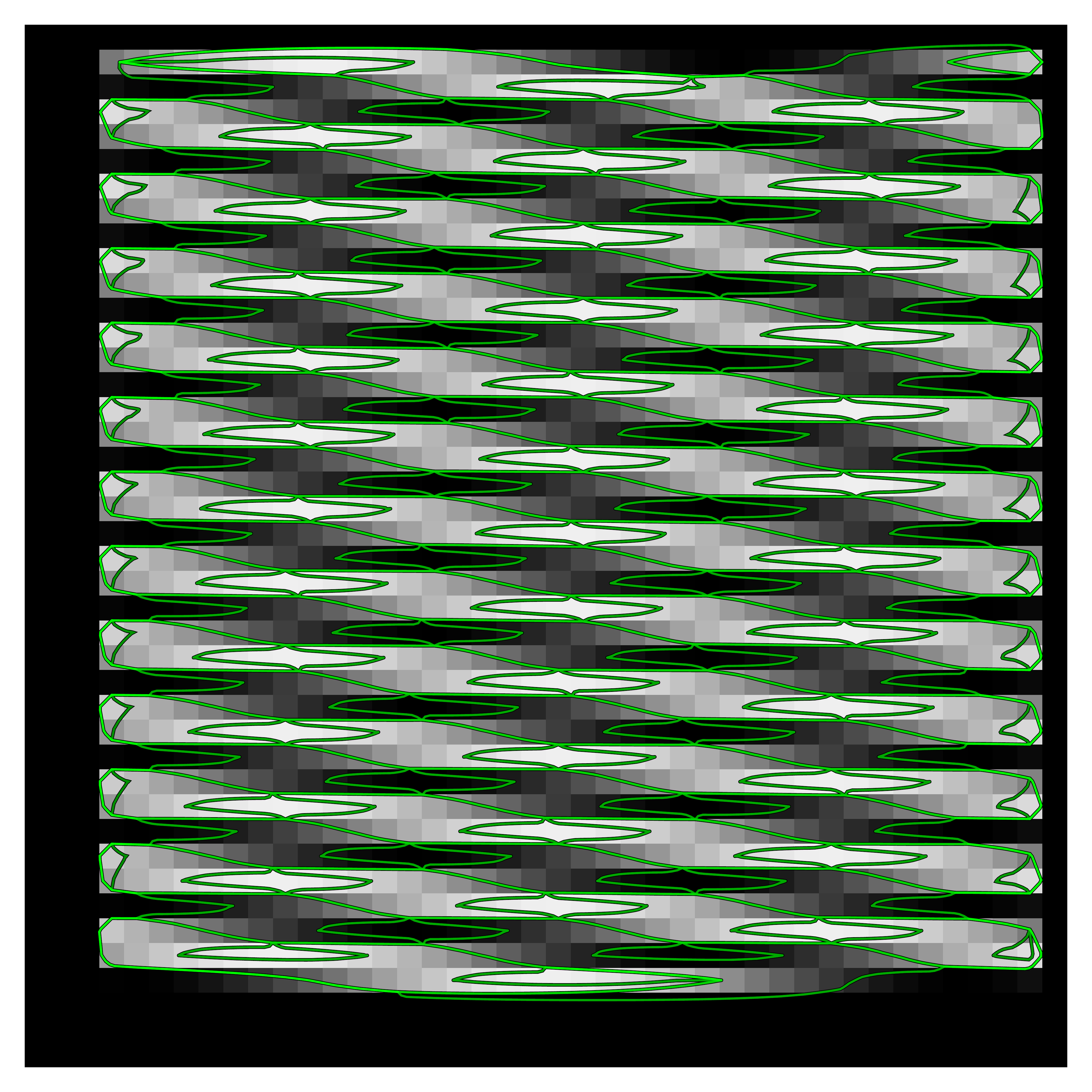}
\includegraphics[width=0.30\textwidth]{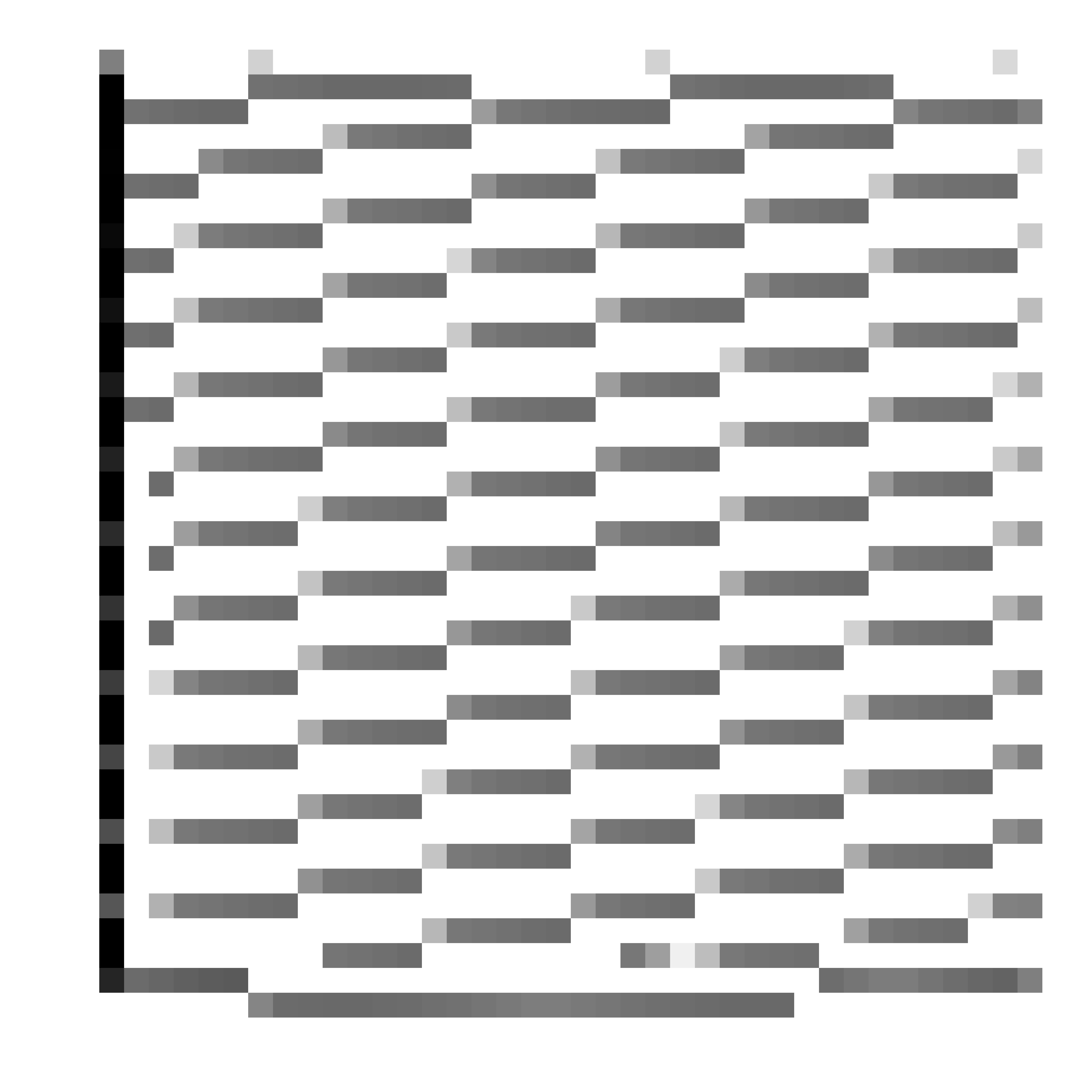}

\includegraphics[width=0.32\textwidth]{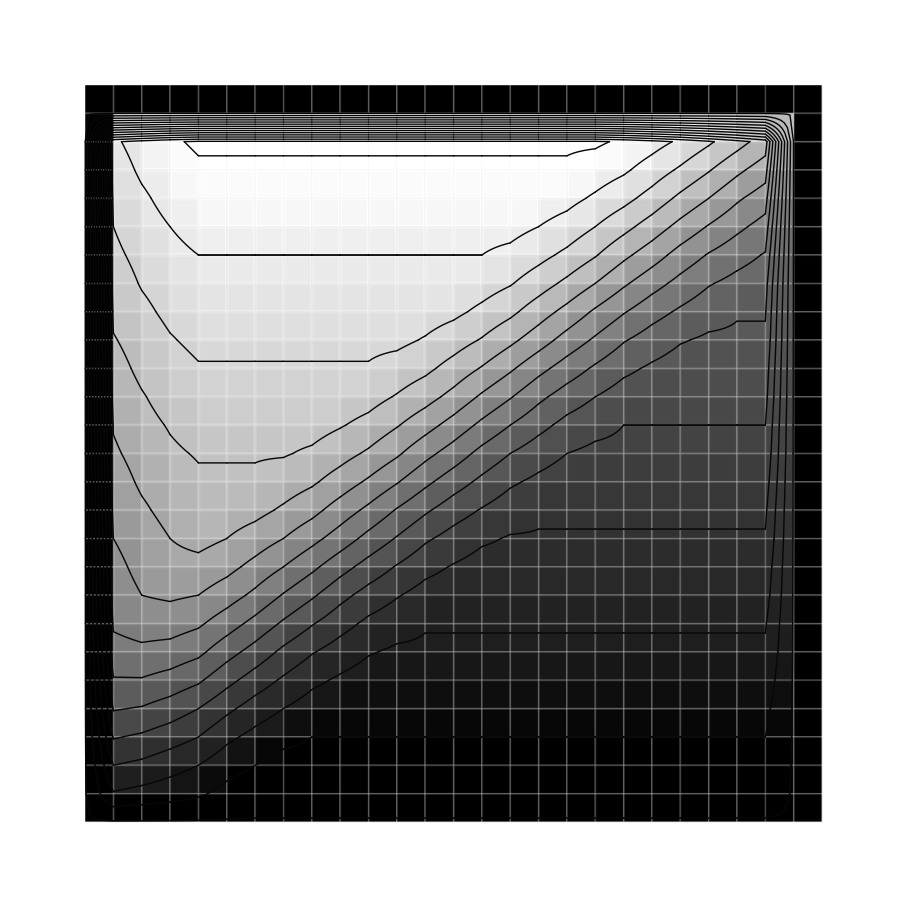}
\includegraphics[width=0.32\textwidth]{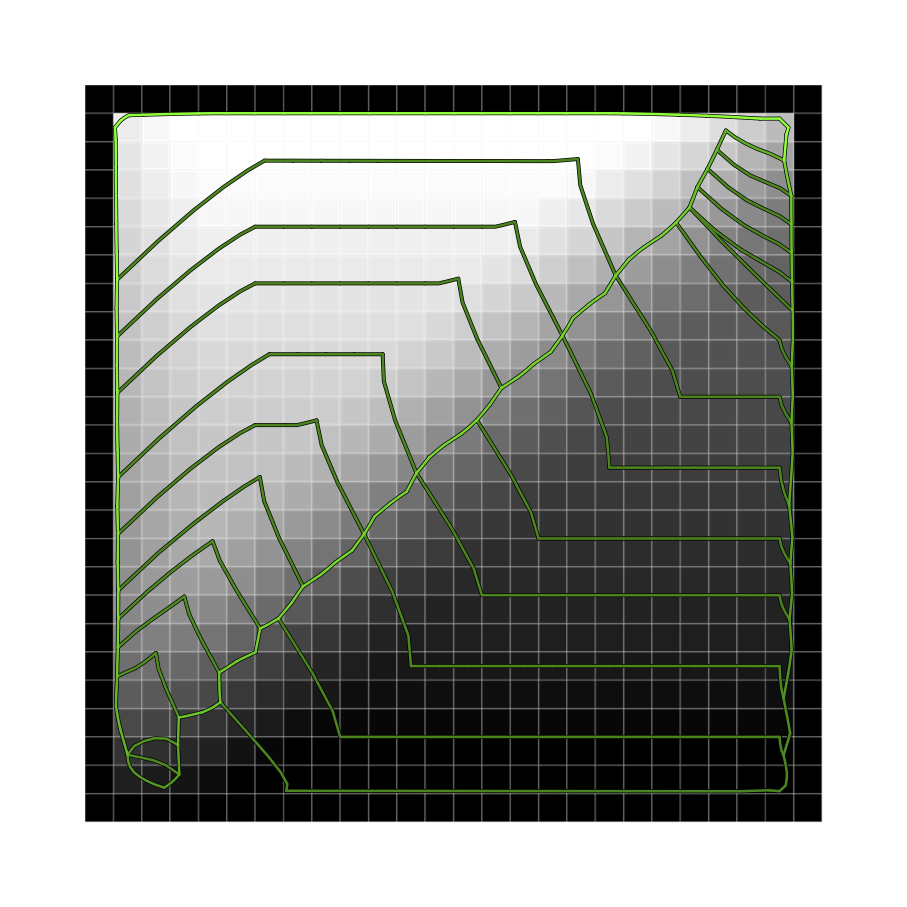}
\includegraphics[width=0.32\textwidth]{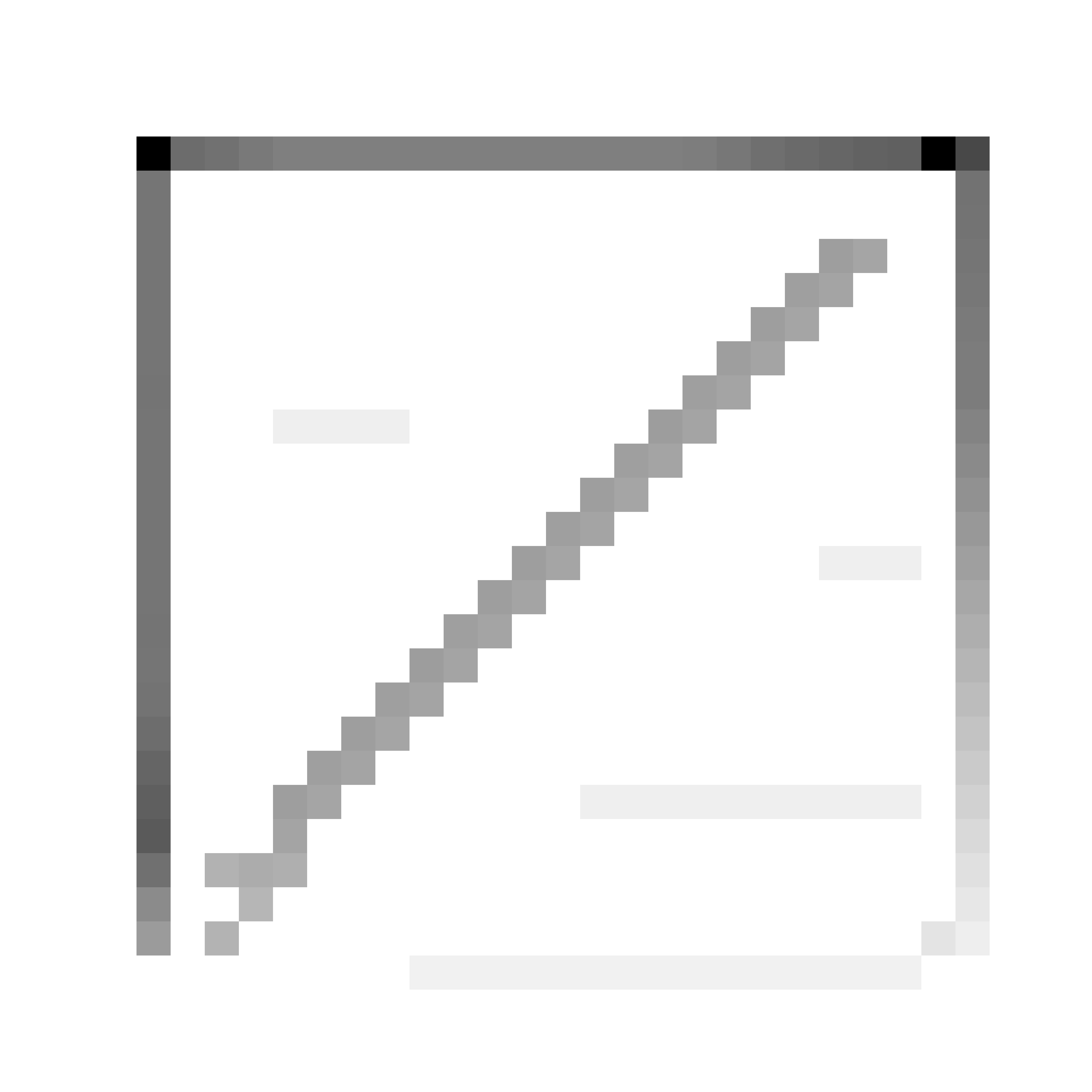}

\caption{Tests with 24x24 pixels images. Original image and isolines (left), original image and drawn edges (center), Canny's detector (right). 
Top row: Sin wave. Central row: rotated 1px thick lines. Bottom row: diagonal edge   \label{fig:test2}}
\end{center}
\end{figure*}

For Lena's image (Figure~\ref{fig:test3}) we show a comparison between our detection and Canny's. 
We use the simplified version of the steepest graph and we also 
post-process the edge graph in order to reduce the impact of local minima in gradient magnitude along the paths. In particular, we retain only local minima that are at least 2/3 less of neighbor local maxima's magnitude along the path. This is suitable for natural images, where the original and unfiltered noise can easily break an edge into two parallel ones (that can be anyway seen along the vertical edge on the left in the background). Consider that any post-processing based on local features of the connectivity graph, rather than on a local threshold, can produce a better filtering of such cases (this would be impossible with a single $\sigma$ parameter in Canny's detection).
Connections are always recovered, even in fuzzy positions. The edge drawing produce high quality sub-pixel lines. A further processing can be applied for vectorial smoothing and simplification for higher level processing. 

An example of a structured image with buildings is presented in Figure~\ref{fig:test5}. Note how the finest details with one pixels width are recovered and correctly connected, even for small gradient magnitudes. 

\begin{figure*}[ht]
\begin{center}
\includegraphics[width=0.32\textwidth]{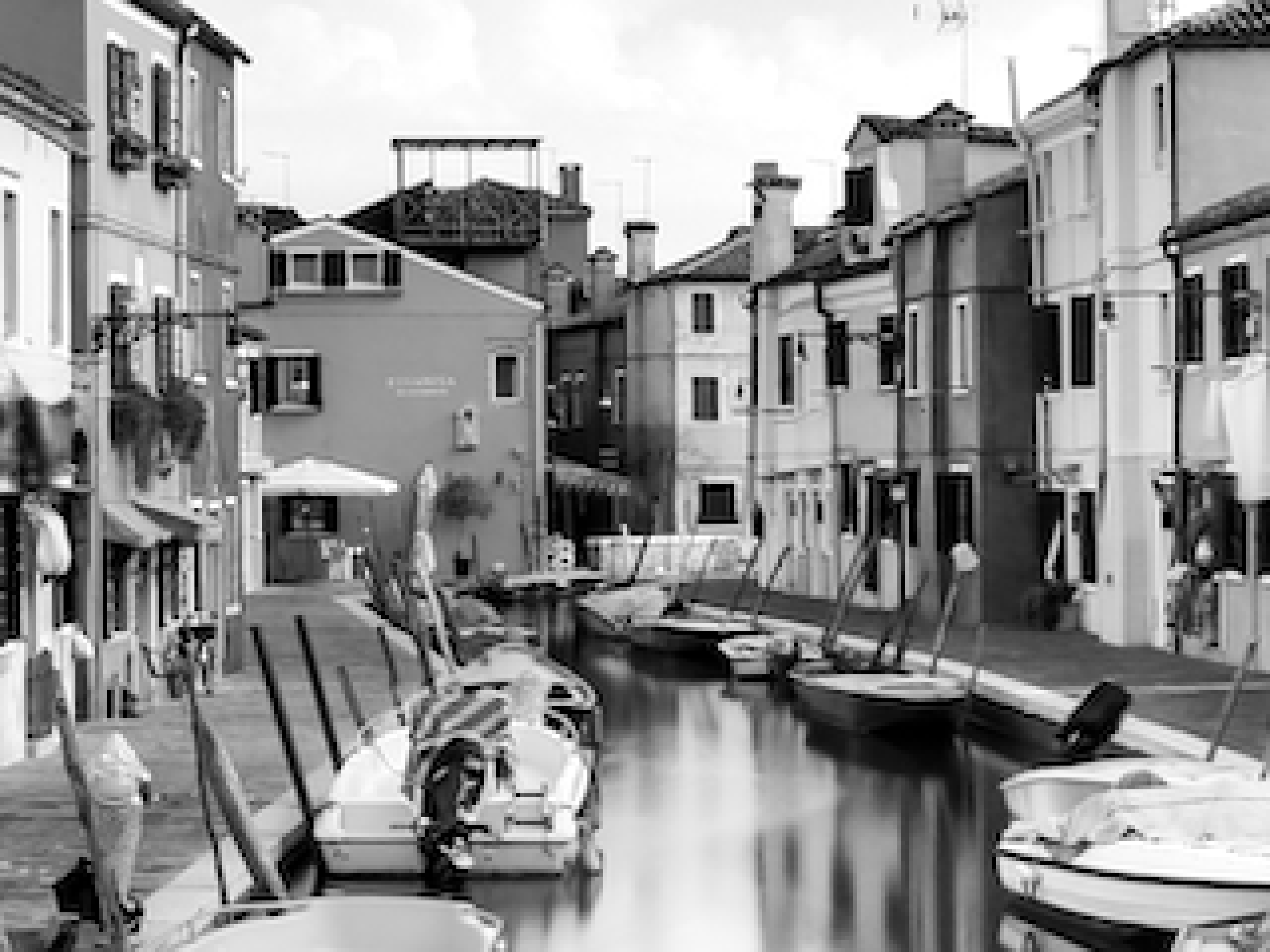}
\includegraphics[width=0.32\textwidth]{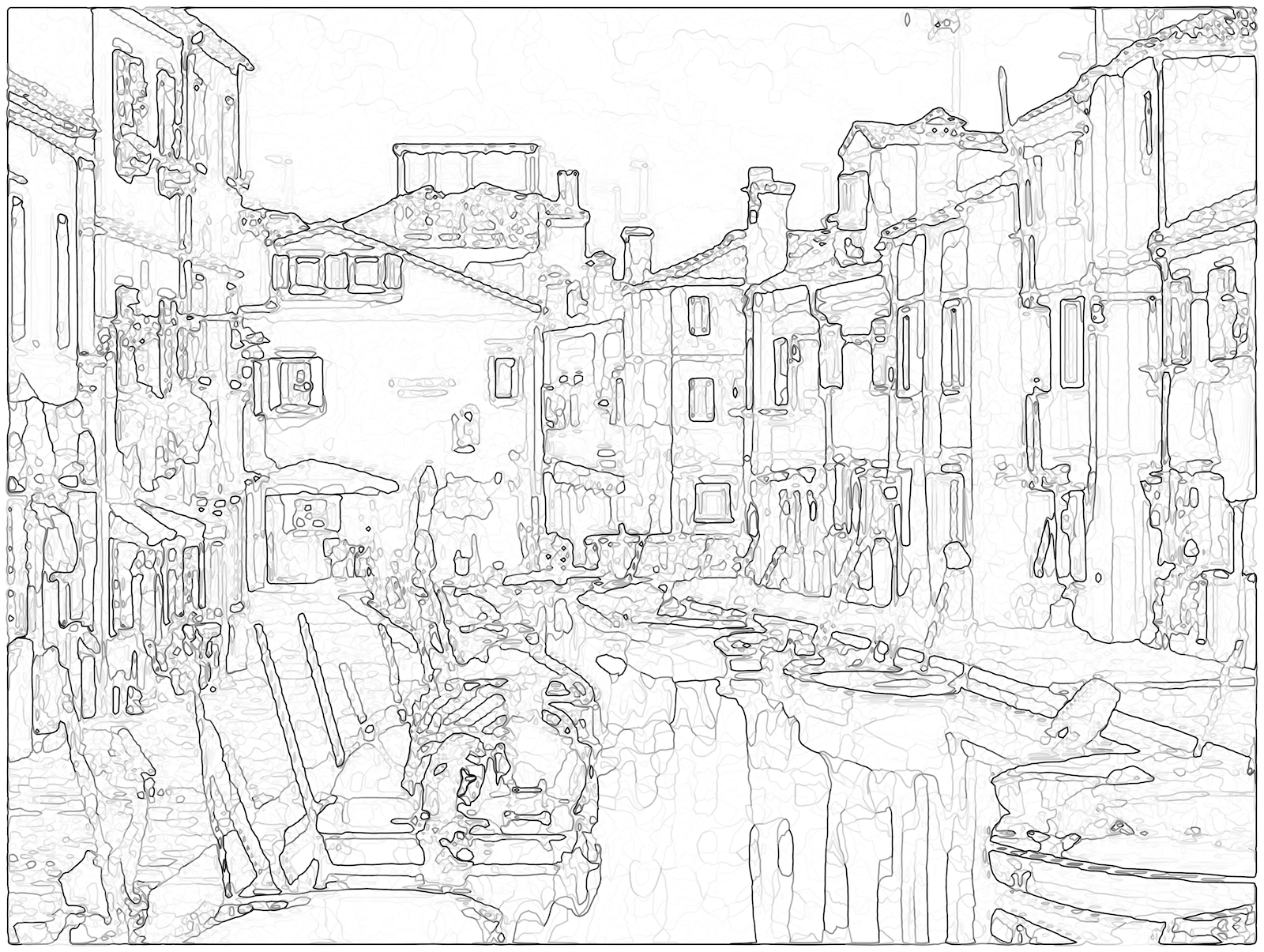}
\includegraphics[width=0.32\textwidth]{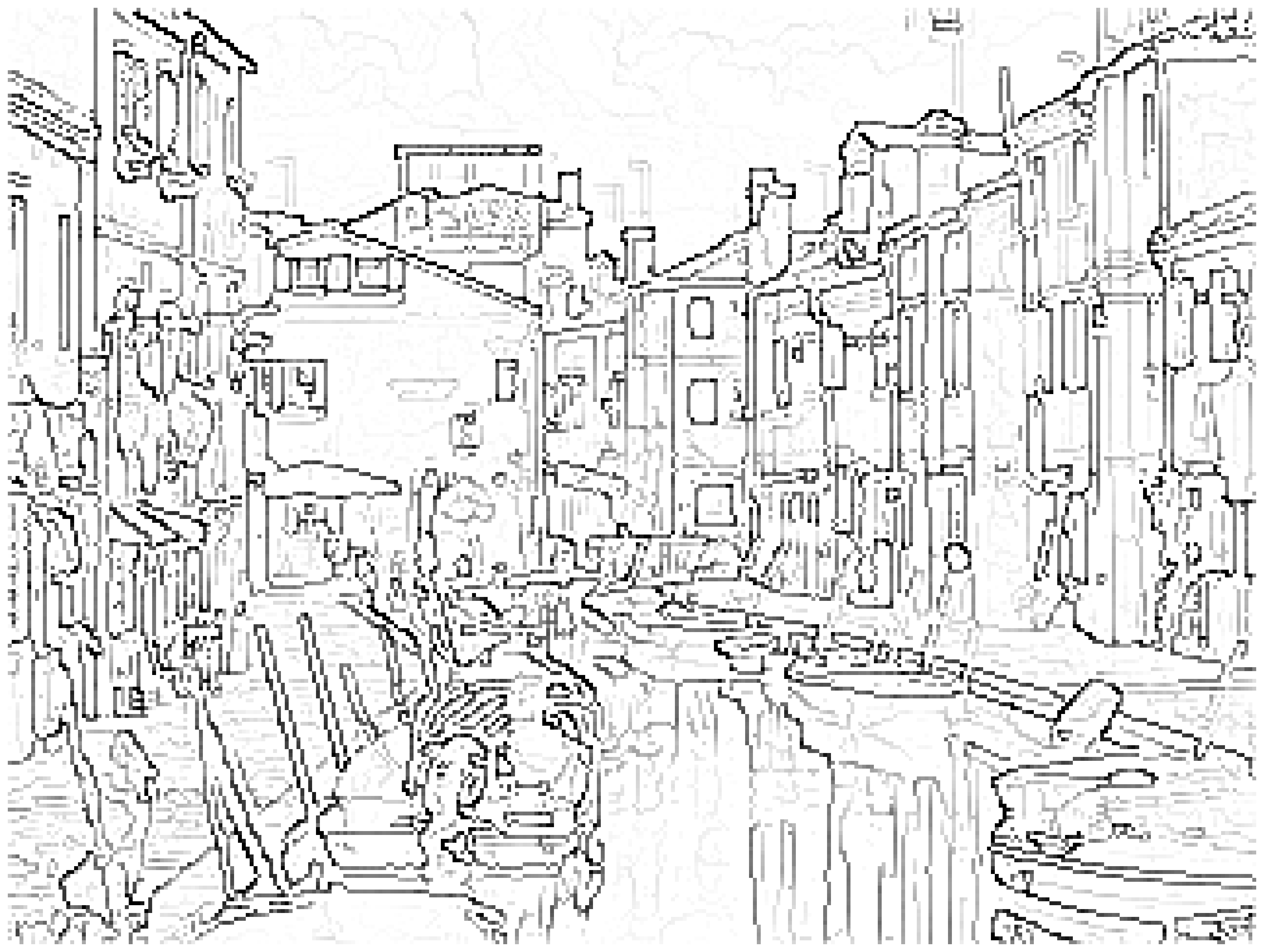}

\caption{Burano, Italy (320x240). Original image by Lopez Robin (left), drawn edges (center), Canny's detection (right)\label{fig:test5}}
%Photo by Lopez Robin on Unsplash.com
\end{center}
\end{figure*}

The BSDS500~\cite{arbelaez2011contour} contains an annotated dataset for training and validating learning based detection. The ground truth images contain a manually annotated rasterization of contours. The numerical comparison against such dataset would be unfair, since the ground truth is a perceptual consensus of the main edges, drawn on a raster image and designed for neural network training. Our vector and subpixel output would result in poor scores because of the greater accuracy and a pure signal-based detection. For completeness, in Figure~\ref{fig:test4} we show a visual comparison of a 240x160 crop of the image \#8068 from the dataset, where the precision of our detector can be appreciated. This suggests that high-quality low level information can actually feed the training phase for a perceptual selection of relevant features.

\begin{figure*}[ht]
\begin{center}
\includegraphics[width=0.32\textwidth]{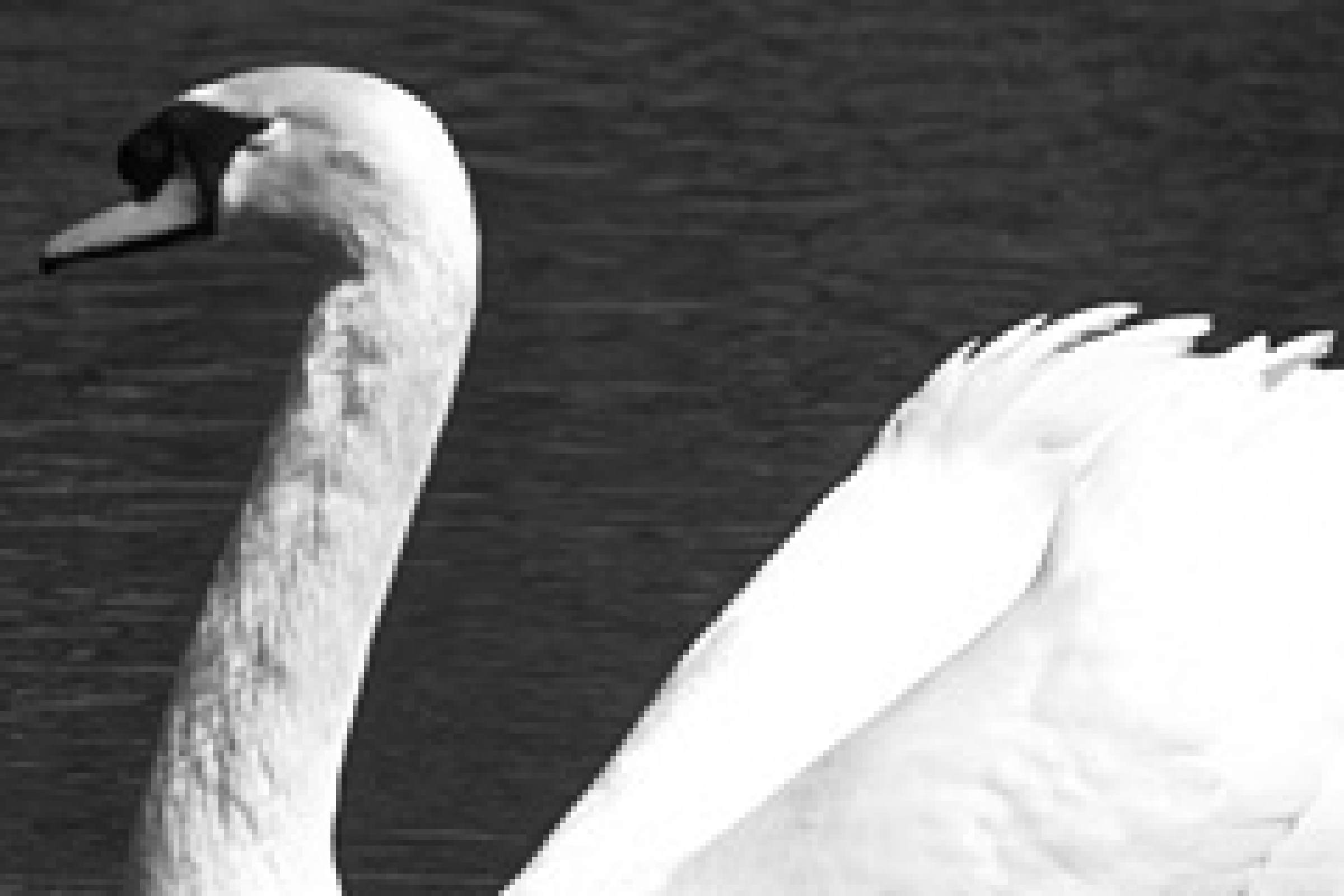}
\includegraphics[width=0.32\textwidth]{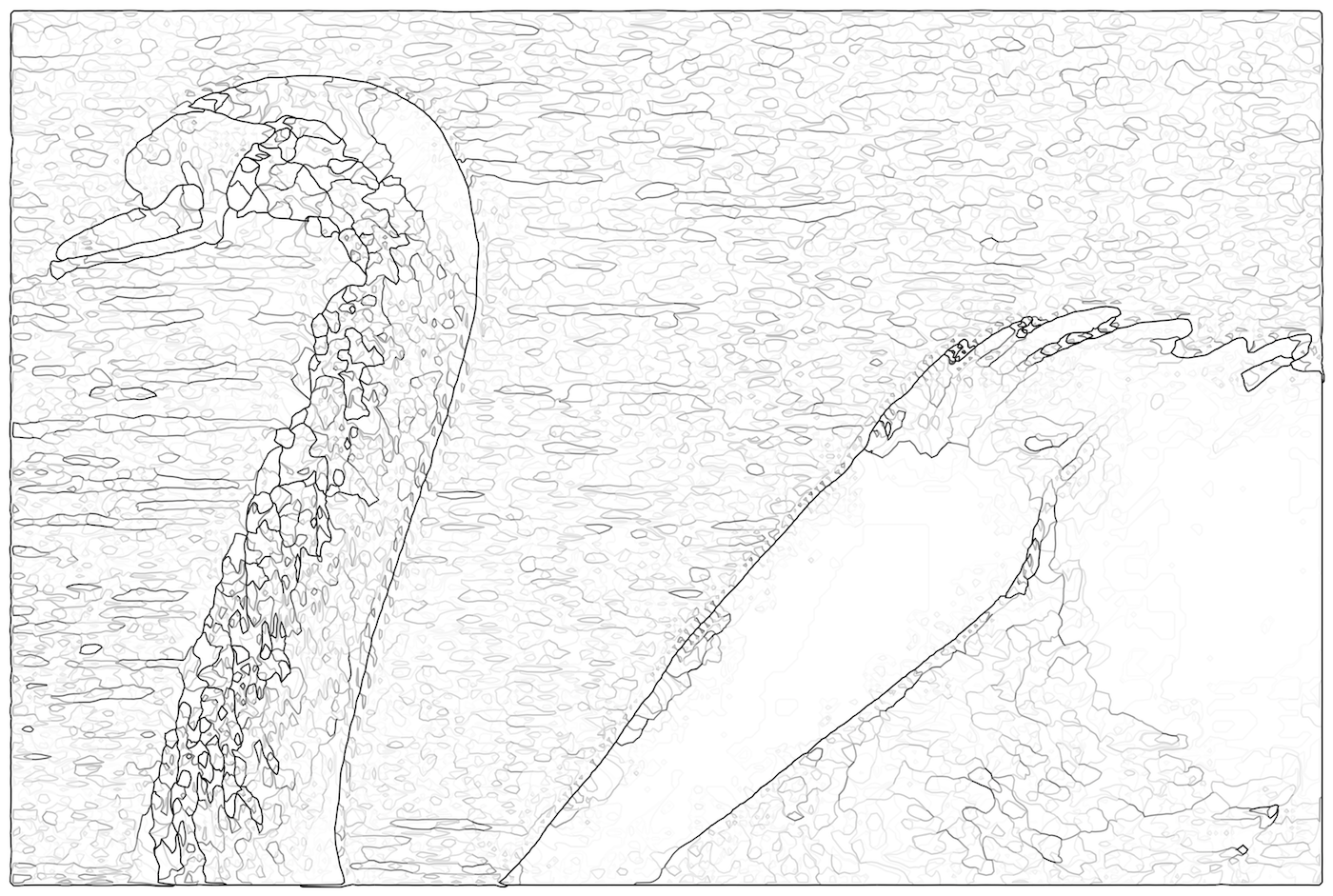}
\includegraphics[width=0.32\textwidth]{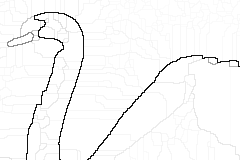}

\caption{Crop of picture 8068 from BSDS500 (240x160). Original image (left), drawn edges (center), ground truth image (from BSDS500) (right) \label{fig:test4}}
\end{center}
\end{figure*}

%From that page it is also possible to run tests on user selected images.

\section{Applications}
\label{sec:appl}

%The steepest graph allows to characterize edges as described in section~\ref{sec:em}.
The steepest graph structure allows to benefit from classical graph algorithms and to boost image processing capabilities. In particular, shortest path algorithm can help to selectively identify relevant edges in the image. Weighted cuts can identify edge based segmentation. Graph (sub-)isomorphism can lead to graph based vector pattern matching, 3d reconstruction from stereo images, image stitching etc..

A classification of saddles, in terms of areas contained by the isolines passing through them, allows to select which parts of the image can be perceptually merged (e.g. small peaks separated by a near split point). The graph represents the guidance for a controlled compression schema. This is one of the cases where fractal compression could perform better than other methods, since monotonic regions appears as perfect candidates for the identification of similar areas.
Saddles analysis can also be used for image denoising. The edge types contained in a region (e.g. small span ones) can suggest the presence of high frequency signal corruption. 

Edge features, span and support, can control a bi-dimensional decisional space for identifying blurred edge (large span and large support) vs sharp edges (small span and large support). This can control a selective image filtering, such as selective edge preserving smoothing, upsampling and area cutting while preserving the alpha blending of contours.

Edges and monotonic regions induce a rich segmentation of the image, that can guide contour detection, segmentation and feed for neural networks for high quality feature and object detection.

\section{Conclusions}
\label{sec:conc}

The paper presents an image model that allows a vectorial description of the underlying structure. The discrete construction is based on steepest graphs, which define a partition of the image into monotonic regions. The edge model stems from regions properties and it captures spatial arrangement, connectivity, span and support of each edge.
%The approximation errors due to integer approximation of steepest paths has an irrelevant impact on edge positions and no influence on edge connectivity.
The model produces accurate results starting from 2x2 images with single channel with any depth. The spatial resolution of drawn edges is natively sub-pixel, as opposed to detectors where the output is a raster image and/or discrete polyline. The correct connectivity is modeled and represented as an edge graph, as opposed to the majority of detectors.
The method does not require any image pre-processing, parameters nor prior knowledge. 
It is global, since the steepest graph construction connects local information and defines monotonic regions that are possibly influenced by the whole image. 

The graph edge is a novel description of edge features and it can be used as input of many graph processing algorithms for 
identification of rich higher-level features (segmentation, shape detection, feature matching, etc.).
The experimental section shows the results on a selection of tough synthetic cases and natural images.
As future work we plan to extend the model to the three-dimensional case and to combine multi-channel information.

\begin{acknowledgements}
This work is the result of the last 15 years of passionate research. I would like to thank Michele Petterlini for useful discussions and support along the journey. I also would like to thank my wife Fiammetta Maria Rossi for her patience and help. 
\end{acknowledgements}

% BibTeX users please use one of
%\bibliographystyle{spbasic}      % basic style, author-year citations
%\bibliographystyle{spmpsci}      % mathematics and physical sciences
%\bibliographystyle{spphys}       % APS-like style for physics
\bibliographystyle{plain}       % APS-like style for physics
%\bibliography{bib}   % name your BibTeX data base

\begin{thebibliography}{9}

\bibitem{arbelaez2011contour}
Pablo Arbelaez, Michael Maire, Charless Fowlkes, and Jitendra Malik.
\newblock Contour detection and hierarchical image segmentation.
\newblock {\em IEEE transactions on pattern analysis and machine intelligence},
  33(5):898--916, 2011.

\bibitem{opencv_library}
G.~Bradski.
\newblock {The OpenCV Library}.
\newblock {\em Dr. Dobb's Journal of Software Tools}, 2000.

\bibitem{canny1987computational}
John Canny.
\newblock A computational approach to edge detection.
\newblock In {\em Readings in computer vision}, pages 184--203. Elsevier, 1987.

\bibitem{caselles2009geometric}
Vicent Caselles and Pascal Monasse.
\newblock {\em Geometric description of images as topographic maps}.
\newblock Springer, 2009.

\bibitem{catte1992image}
Francine Catt{\'e}, Pierre-Louis Lions, Jean-Michel Morel, and Tomeu Coll.
\newblock Image selective smoothing and edge detection by nonlinear diffusion.
\newblock {\em SIAM Journal on Numerical analysis}, 29(1), 1992.

\bibitem{opticalillusion}
Stanley Coren and Joan~S Girgus.
\newblock {\em Seeing is deceiving: The psychology of visual illusions.}
\newblock Lawrence Erlbaum, 1978.

\bibitem{cremers2007review}
Daniel Cremers, Mikael Rousson, and Rachid Deriche.
\newblock A review of statistical approaches to level set segmentation:
  integrating color, texture, motion and shape.
\newblock {\em International journal of computer vision}, 72(2):195--215, 2007.

\bibitem{dollar2013structured}
Piotr Doll{\'a}r and C~Lawrence Zitnick.
\newblock Structured forests for fast edge detection.
\newblock In {\em Proceedings of the IEEE international conference on computer
  vision}, pages 1841--1848, 2013.

\bibitem{duda1973pattern}
Richard~O Duda, Peter~E Hart, and David~G Stork.
\newblock {\em Pattern classification and scene analysis}, volume~3.
\newblock Wiley New York, 1973.

\bibitem{harris1988combined}
Christopher~G Harris, Mike Stephens, et~al.
\newblock A combined corner and edge detector.
\newblock In {\em Alvey vision conference}, volume~15, 1988.

\bibitem{hermosilla2008non}
Txomin Hermosilla, E~Bermejo, A~Balaguer, and Luis~A Ruiz.
\newblock Non-linear fourth-order image interpolation for subpixel edge
  detection and localization.
\newblock {\em Image and vision computing}, 26(9), 2008.

\bibitem{jensen1995subpixel}
Kris Jensen and Dimitris Anastassiou.
\newblock Subpixel edge localization and the interpolation of still images.
\newblock {\em IEEE transactions on Image Processing}, 4(3):285--295, 1995.

\bibitem{snakes}
Michael Kass, Andrew Witkin, and Demetri Terzopoulos.
\newblock Snakes: Active contour models.
\newblock {\em International journal of computer vision}, 1(4):321--331, 1988.

\bibitem{konishi2003statistical}
Scott Konishi, Alan~L. Yuille, James~M. Coughlan, and Song~Chun Zhu.
\newblock Statistical edge detection: Learning and evaluating edge cues.
\newblock {\em IEEE Transactions on Pattern Analysis and Machine Intelligence},
  25(1):57--74, 2003.

\bibitem{lindeberg1998edge}
Tony Lindeberg.
\newblock Edge detection and ridge detection with automatic scale selection.
\newblock {\em International Journal of Computer Vision}, 30(2):117--156, 1998.

\bibitem{maire2008using}
Michael Maire, Pablo Arbel{\'a}ez, Charless Fowlkes, and Jitendra Malik.
\newblock Using contours to detect and localize junctions in natural images.
\newblock In {\em 2008 IEEE Conference on Computer Vision and Pattern
  Recognition}, pages 1--8. IEEE, 2008.

\bibitem{malik2001contour}
Jitendra Malik, Serge Belongie, Thomas Leung, and Jianbo Shi.
\newblock Contour and texture analysis for image segmentation.
\newblock {\em International journal of computer vision}, 43(1):7--27, 2001.

\bibitem{mcdermott2004psychophysics}
Josh McDermott.
\newblock Psychophysics with junctions in real images.
\newblock {\em Perception}, 33(9):1101--1127, 2004.

\bibitem{papari2011edge}
Giuseppe Papari and Nicolai Petkov.
\newblock Edge and line oriented contour detection: State of the art.
\newblock {\em Image and Vision Computing}, 29(2-3):79--103, 2011.

\bibitem{prewitt1970object}
Judith~MS Prewitt.
\newblock Object enhancement and extraction.
\newblock {\em Picture processing and Psychopictorics}, 10(1):15--19, 1970.

\bibitem{roberts1963machine}
Lawrence~G Roberts.
\newblock {\em Machine perception of three-dimensional solids}.
\newblock PhD thesis, Massachusetts Institute of Technology, 1963.

\bibitem{shapiro2001computer}
Linda Shapiro and George Stockman.
\newblock {\em Computer Vision}.
\newblock Prentice Hall, 2001.

\bibitem{shotton2008multiscale}
Jamie Shotton, Andrew Blake, and Roberto Cipolla.
\newblock Multiscale categorical object recognition using contour fragments.
\newblock {\em IEEE transactions on pattern analysis and machine intelligence},
  30(7):1270--1281, 2008.

\bibitem{xie2015holistically}
Saining Xie and Zhuowen Tu.
\newblock Holistically-nested edge detection.
\newblock In {\em Proceedings of the IEEE international conference on computer
  vision}, pages 1395--1403, 2015.

\bibitem{ziou1998edge}
Djemel Ziou, Salvatore Tabbone, et~al.
\newblock Edge detection techniques-an overview.
\newblock {\em Pattern Recognition and Image Analysis C/C of Raspoznavaniye
  Obrazov I Analiz Izobrazhenii}, 8:537--559, 1998.

\end{thebibliography}

\clearpage
\section*{Appendix: Proofs}

\setcounter{lemma}{0}

\begin{lemma}[Saddle point and isolines]
A saddle point is positioned at $p=(x,y)$ iff there are two distinct and equally valued isolines that intersect at $p$.
\end{lemma}

\begin{proof}

($\Rightarrow$) By definition $p_1, p_4, p_2, p_3$ are in clockwise order around $p$. Since $R$ is continuous and $R(p_1)<R(p)<R(p_4)$, there is a point $p'$ on the segment $p_1, p_4$ such that $R(p)=R(p')$. The same can be said for any pair $p_1'$ and $p_4'$ respectively along the curves between $p_1,p$ and $p,p_4$. Therefore, an isoline is detected between the two curves.
The same can be applied to the other three cases.

($\Leftarrow$) Let us consider the three cases:
(1) $p$ is inside a pixel. Since the pixel is bilinear interpolated, there are two distinct lines, only if $\nabla R(p)=\vec 0$. Therefore the pixel is a split pixel and the four corners can be assigned to $p_i$s. The segments from corners to $p$ fulfil the saddle property.
(2) $p$ lies on a pixel side. In this case there is at most one isoline per pixel and therefore it is not possible to identify two distinct lines crossing $p$.
(3) $p$ lies on a corner. Each adjacent pixel to $p$ has one isoline reaching $p$. The $n4(p)$ neighbors of $p$ can be associated to $p_i$s and the corresponding segments towards $p$ fulfil the saddle property.
\qed
\end{proof}

\begin{lemma}[Split pixel corners value]
Given a split point $p=(x,y)$, associated to the split pixel $(i,j)$, it can either be $I(i,j)>R(p)$,$I(i+1,j+1)>R(p)$,$I(i+1,j)<R(p)$ and $I(i,j+1)<R(p)$
or the same relationships with inverted order. Informally, two opposite corners are greater than $R(p)$.
%\hl{qui dimostro split pixel -> alternating corners. Dovrei dimostrare alternating corners -> split pixel?}
\end{lemma}
\begin{proof}
Let us assume that $(i,j)=(0,0)$
Recall that
$v_{0,0}=a, v_{1,0}=b-a, v_{0,1}=c-a$ and $ v_{1,1}=d+a-b-c$,
$R(x,y)=v_{0,0}+v_{1,0}x+v_{0,1}y+v_{1,1}xy$

A split pixel implies that $\nabla R(x,y)=\vec 0$ and $v_{1,1}\neq 0$. Let us assume that $a>R(p)$.
We show that $d>R(p)$, $b<R(p)$,  $c<R(p)$. The case  $a<R(p)$ is symmetrical.

$\nabla R(x,y)=\vec 0$ implies that
$v_{1,0} + v_{1,1}y =0$ and $ v_{0,1} + v_{1,1}x=0$.
This means that $R(x,y)=R(x,y')$ for any $y'\in [0,1]$, since the gradient along the y axis is constantly equal to 0. 
Equally, $R(x,y)=R(x',y)$ for any $x'\in [0,1]$.

In particular $R(x,y)=R(0,y)=R(1,y)=R(x,0)=R(x,1)$,
which implies that $R(0,0)=a>R(x,y)=R(x,0)$. Since $R(x,0)$ is a linear interpolated value
between $R(0,0)$ and $R(1,0)$, it follows that $R(0,0)>R(1,0)$.
With equivalent arguments, $R(0,0)>R(0,1)$ and $R(0,1)<R(1,1)$.
\qed
\end{proof}

\begin{lemma}[There is at most one isoline that passes inside a pixel and on one of its corners]
Given a pixel with corner $p$, the isoline passing through $p$ solves the equation $R(x,y)=R(p)$. Given the bilinear interpolation, the line is unique and there is at most one line that intersects the pixel area. 
\end{lemma}

\begin{proof}
W.l.o.g. let us assume that the pixel is placed in $p=(0,0)$, so that the corner is the bottom left.
The isoline solves the equation  $R(x,y)=v_{0,0}+v_{1,0}x+v_{0,1}y+v_{1,1}xy = v_{0,0}$, where $v_{0,0}=a, v_{1,0}=b-a, v_{0,1}=c-a$ and $ v_{1,1}=d+a-b-c$. It follows that the function describing the isoline is $f(x)=-v_{1,0}/(v_{0,1}+v_{1,1}) x$. The function, except for its asymptotic point in $x=-v_{0,1}/v_{1,1}$ is continuous. Moreover if $f'(x)>0$, it follows that $v_{1,0}v_{0,1}<0$ and the line crossing (0,0) enters the pixel. If $f'(x)<0$, $v_{1,0}v_{0,1}>0$ and the function intersects the pixel only at the corner.
\qed
\end{proof}

\begin{lemma}[Mix point characterization]
The following statements are equivalent:

(i) A mix point $p$ exists at integer coordinate

(ii) each of the four pixels adjacent to $p$ contains an isoline with value $I(p)$ that reaches the pixel border at $p'\neq p$

(iii) $n4(p)$ neighborhood is such that $W=I(0,1)-R(p)$, $E=I(2,1)-R(p)$, $S=I(1,0)-R(p)$, $N=I(1,2)-R(p)$, $NS>0$, $EW>0$, $NE<0$
\end{lemma}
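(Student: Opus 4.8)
The plan is to prove the two equivalences (i) $\Leftrightarrow$ (ii) and (ii) $\Leftrightarrow$ (iii) separately; together they give the three‑way equivalence. First I would normalise by translating so that the corner under study is $p=(1,1)$, as in the statement. Then the four pixels incident to $p$ are the unit squares $[0,1]^2$, $[1,2]\times[0,1]$, $[0,1]\times[1,2]$ and $[1,2]^2$, and in each of them $p$ is a corner whose two edge‑neighbours are, respectively, $\{W,S\}$, $\{E,S\}$, $\{W,N\}$, $\{E,N\}$, keeping the shorthand of the statement for the signed differences $W=I(0,1)-R(p)$, $E=I(2,1)-R(p)$, $S=I(1,0)-R(p)$, $N=I(1,2)-R(p)$. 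By the standing hypotheses of the paper — $I$ has no equal $n8()$ neighbours, and (as already argued in the ``Mix points'' discussion) none of the four incident pixels is a degenerate split pixel — all of $W,E,S,N$ and all products formed from them are nonzero, so only strict inequalities can occur.

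For (ii) $\Leftrightarrow$ (iii) I would invoke the lemma ``there is at most one isoline that passes inside a pixel and on one of its corners'': in each incident pixel the unique isoline of value $R(p)$ through $p$ either enters the pixel interior, hence leaves it at a border point $p'\neq p$, or touches the pixel only at $p$; and, reading off the computation in that lemma's proof — where the quantities $v_{1,0}$ and $v_{0,1}$ are exactly the two edge‑neighbour differences of $p$ in that pixel — the former case occurs iff the product of those two differences is negative. Thus (ii) is equivalent to $EN<0$, $WN<0$, $WS<0$, $ES<0$ holding simultaneously. From $EN<0$ and $ES<0$ one gets $NS>0$; from $EN<0$ and $WN<0$ one gets $EW>0$; and $EN<0$ is $NE<0$. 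Conversely, from (iii), after normalising so that $N>0$ one is forced to have $S>0$, $E<0$, $W<0$, which makes all four products negative. This is a routine sign check.

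For (i) $\Leftrightarrow$ (ii) I would use that, since $p$ has integer coordinates, ``$p$ is a mix point'' is by definition ``$p$ is a saddle located at $p$'', and then appeal to the ``Saddle point and isolines'' lemma: this holds iff the level set $\{R=R(p)\}$ has two distinct branches crossing at $p$. I would then count the branches issuing from $p$. Each incident pixel contributes, by the same ``at most one isoline through a corner'' lemma, at most one branch (its isoline‑arc through $p$), and contributes one exactly when that isoline enters the interior; so the branch count $k$ equals the number of incident pixels whose isoline enters, and $0\le k\le 4$. On the other hand $R$ is continuous, so on a small circle centred at $p$ the function $R-R(p)$ changes sign an even number of times, whence $k$ is even. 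Therefore $k\in\{0,2,4\}$, where $k=0$ means $p$ is a local extremum, $k=2$ means $p$ is a regular point of a single isoline, and $k=4$ means two branches cross. Hence $p$ is a saddle iff $k=4$ iff the isoline enters all four incident pixels, i.e.\ iff (ii) holds.

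The step I expect to be the main obstacle is precisely this last branch count — turning ``the level set is singular at the corner $p$'' into ``all four incident pixels carry an interior‑entering isoline''. Doing it cleanly requires simultaneously using continuity of $R$ (to force the even count), the earlier lemma bounding each bilinear pixel's corner contribution by one and supplying the explicit sign criterion, and the earlier exclusion of degenerate corner split pixels; ruling out the borderline values $k=0$ and $k=2$ at a saddle is where care is needed. Routing (i) $\Leftrightarrow$ (ii) through the ``Saddle point and isolines'' lemma is what lets me avoid re‑verifying by hand the clockwise‑ordering clause of Definition~\ref{def:saddle} for the four alternating branches, which would otherwise be the most delicate part of the argument.
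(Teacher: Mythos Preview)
Your argument is correct, but it is organised differently from the paper's and the link to (i) goes through a different mechanism. The paper proves the cycle (i)$\to$(ii)$\to$(iii)$\to$(i): for (i)$\to$(ii) it works straight from Definition~\ref{def:saddle}, trimming the four monotone half-curves $p\to p_i$ to the four-pixel patch and using the intermediate value theorem between clockwise-consecutive $p_i$'s to produce one level-$R(p)$ point per sector, hence one isoline arc per pixel; for (iii)$\to$(i) it simply takes $\ell_1$ and $\ell_2$ to be the two pixel sides through $p$ (the segments to the four $n4$-neighbours), which are linearly interpolated and hence monotone with the required clockwise alternation. You instead prove two biconditionals: (ii)$\Leftrightarrow$(iii) by reading the sign criterion $v_{1,0}v_{0,1}<0$ out of the proof of the ``at most one isoline through a corner'' lemma and doing the four-product sign bookkeeping, and (i)$\Leftrightarrow$(ii) by invoking the ``Saddle point and isolines'' lemma and counting level-set branches at $p$, with a parity argument forcing $k\in\{0,2,4\}$. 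Your route is more modular---it treats the two earlier lemmas as black boxes and never touches Definition~\ref{def:saddle} directly---and the parity observation is an ingredient the paper does not use; the paper's route is more elementary (no parity step) and its (iii)$\to$(i) is a one-line explicit construction rather than an existence argument. Both land in the same place; your worry about the $k$-count is exactly the point where the paper instead just writes down $\ell_1,\ell_2$ by hand.
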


\begin{proof}
(i) $\rightarrow$ (ii)

By definition there are two lines $\ell_1$ and $\ell_2$ delimited respectively by $(p_1, p_2)$ and $(p_3,p_4)$. 
If the points $p_i$ are outside the area obtained by the union of the four pixels containing $p$, they can be trimmed and placed on the intersection of the lines and the area's perimeter. 
Since the function is continuous, it follows that there must be four distinct isolines starting from $p$ that interleave the increasing paths. 
In fact, for any consecutive pair (clockwise selected),e.g. $p_1,p_3$, there must be a point $\overline{p}$ along the segment $p_1,p_3$ such that $R(\overline{p})=R(p)$. Since all points are distinct, it follows that also  $\overline{p}$ is inside the pixel and it causes an isoline to be inside the pixel. Since each pixel can host at most an isoline with value $R(p)$, each pixel contains exactly an isoline which extends inside the pixel.

(ii) $\rightarrow$ (iii)
In the absence of split pixels and having a continuous function on each pixel, the isoline through $p$ divides the pixel into two areas that contain the isolines greater (lesser) than $I(p)$. In Figure~\ref{fig:mix}, we depict in red the greater isolines and in blue the lesser isolines. It follows that (I(0,1)-R(p)) has opposed sign to (I(1,2)-R(p)) and the property (iii) holds.

(iii) $\rightarrow$ (i)
W.l.o.g. let us assume that $(I(0,1)-R(p))>0$.  
We can assign $p_1=(1,0)$,  $p_2=(1,2)$,  $p_3=(0,1)$,  $p_3=(2,1)$ and create $\ell_1$ and $\ell_2$ accordingly. Since each side is linearly interpolated, we have four monotonic lines and a mix point in $p$.
 \qed
\end{proof}

\begin{figure}[ht]
\begin{center}
\includegraphics[width=0.20\textwidth]{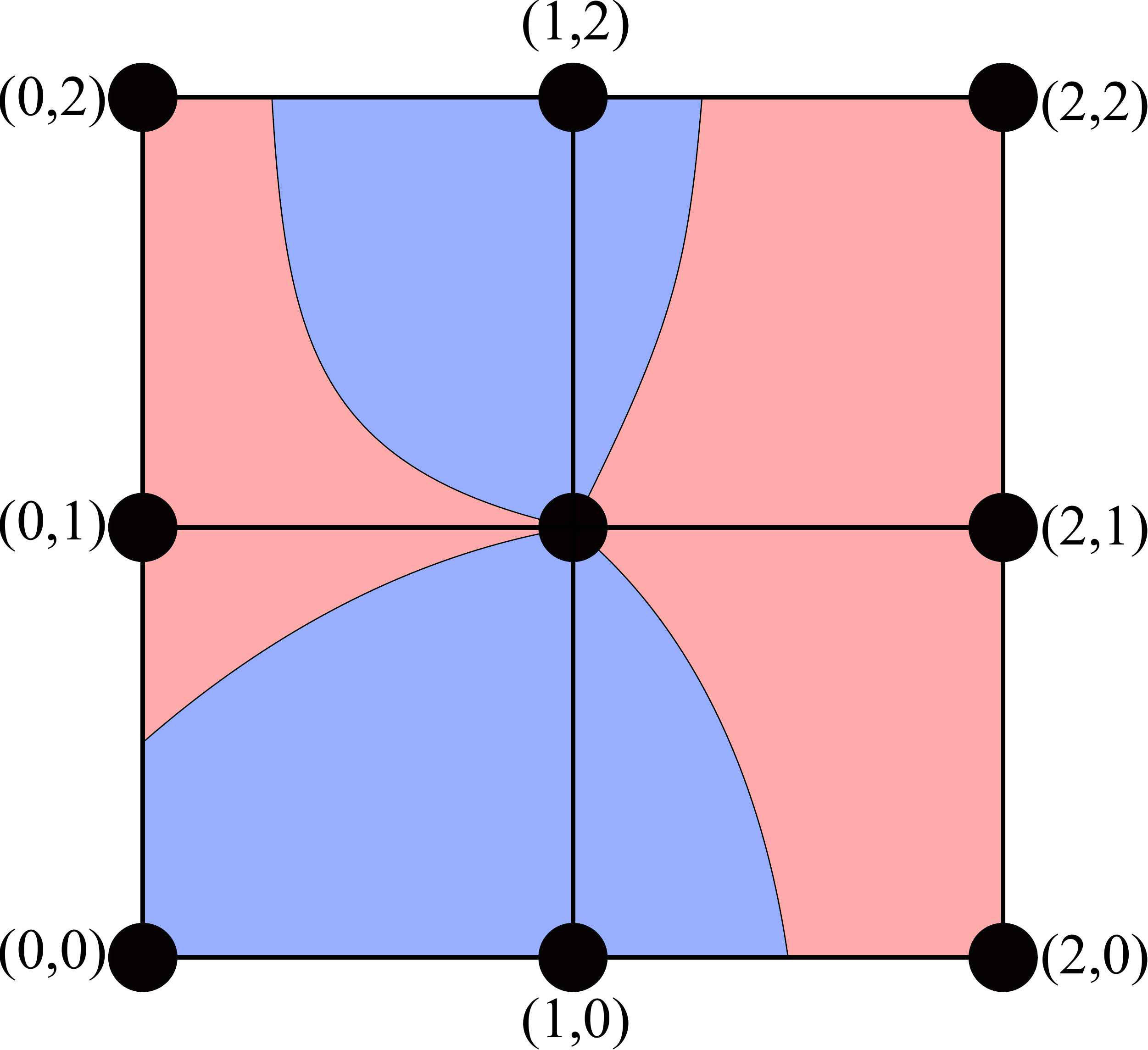}
\caption{Mix point relationships with isolines\label{fig:mix}}
\end{center}
\end{figure}

\begin{lemma}[Non minima joining]
Given a non minimal node $p$ with no incoming edges and the graph built by steps 1,2 and 3, there is a new edge that does not intersect a diagonal edge and that connects the node to a lower valued node.
\end{lemma}

\begin{proof}
Since $p$ is not a local minimum, it exists at least another point $p'\in n8(p)$ such that $p'\prec p$.
Let us consider the case where the segment $(p',p)$ is a diagonal crossing the pixel and the graph already contains the  other green diagonal ($p_1,p_2$) as shown in Figure~\ref{fig:connection} right.
Since $p_2$ was the best choice for $p_1$, it follows that $p\prec p_2$ and $p'\prec p_2$. Moreover, since we could consider a diagonal edge, it follows that the pixel does not contain a split point.
We have two cases: (blue) $p'\prec p_1$. Since there is no split pixel it follows that $p_1\prec p$. If $p_1\prec p'$ (red), since $p'\prec p$, it can not be that $p\prec p_1$, since all values are distinct. It follows that necessarily $p_1\prec p$. In both cases the edge $(p_1,p')$ is a valid candidate.
 The proof for the other diagonal orientation $(p_2,p_1)$ is symmetrical.
 \qed
\end{proof}

\begin{lemma}[DAG]
The graph is a directed acyclic graph. 
\end{lemma}

Each edge $(p_1,p_2)$ in the steepest graph connects two neighbors nodes such that $p_1\prec p_2$. Therefore, no cycles are allowed, since each edge connects different and ordered values nodes. \qed

\begin{lemma}[Planar graph]
The graph is planar
\end{lemma}

\begin{proof}
The nodes embedding in the plane is straightforward, since nodes are associated to discrete pixel points. It is sufficient to show that no edges cross each other. 
The only possible edges intersection can happen on the two diagonals of a pixel. By construction, steps 1 and 2 introduce no diagonals. Step 3 can not introduce two crossing diagonals over the same pixel, since the two different corners on a pixel could not identify different nodes as corresponding best next nodes, since they both include the other pixels corners in their $n8()$ neighborhood. Therefore, it can not be the case that crossing edges are selected.
Step 4 is performed incrementally and guarantees the absence of crossing diagonals.
 \qed
\end{proof}

\begin{lemma}[All saddles are connected]
The graph contains, for each saddle, at least a path that connects it to a local extrema.
\end{lemma}

\begin{proof}
Since split points are not represented by a node in the steepest graph, let us map the property to each of the four corners of a split pixel. In particular, we state that the two greater (lower) than split value nodes are connected to a local maxima (minima). 

The case starting from greater than split value nodes is trivial, since the discrete increasing steepest path reaches a local maxima.
It is possible to recursively expand, in a depth first search style, the reversed edges incoming to the node. If no steepest paths are able to reach a local minimum, the edge addition of step 4 allows to extend the search to other steepest paths. Eventually the additions reach a minimum, since the image is finite and every edge decreases the image value. 

In case of a mix point node, the arguments are equivalent, since there are two greater (lower) valued points in the $n4()$ neighborhood that are added at step 2.
 \qed
\end{proof}

\begin{lemma}[Correct connectivity]\label{lem:connect}
 A path starting from a local minimum and ending to a local maximum in the graph is a monotonic increasing path embedded in R.
 \end{lemma}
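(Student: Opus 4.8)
The plan is to turn the combinatorial path into an honest continuous curve on the bilinear surface and then read monotonicity off it. Write the path as $p_0,\dots,p_k$; each consecutive pair $(p_i,p_{i+1})$ is a directed edge, hence $p_i\prec p_{i+1}$, so under the standing convention that all corner values are distinct (equivalently, under the infinitesimal generic tilt modelled by $\prec$) we have $R(p_i)<R(p_{i+1})$ — in fact the endpoints being extrema plays no role here, any directed path in the DAG works and min--max paths are simply the maximal ones. Thus it is enough to build, for each edge, a continuous curve $\gamma_i$ from $p_i$ to $p_{i+1}$ that stays in the closed pixel carrying that edge and along which $R$ is strictly increasing: concatenating the $\gamma_i$ produces a continuous curve from $p_0$ to $p_k$ whose $R$-profile is strictly increasing (the pieces agree at the nodes and $R$ keeps climbing), and strict monotonicity of $R$ along it immediately makes the curve injective — so it is genuinely embedded — and forces it to meet each isoline $\{R=v\}$ in at most one point. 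The whole statement therefore reduces to the single-edge claim.

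For an edge along a pixel side ($n4()$ neighbours) there is nothing to do: $R$ restricted to a side is affine in the running coordinate and takes the ordered values $R(p_i)<R(p_{i+1})$ at the ends, so the straight segment is the required $\gamma_i$. The real case is a diagonal edge. By construction a diagonal edge is only inserted over a pixel that is \emph{not} a split pixel (diagonals crossing split pixels are deleted from $n()$, and from the set $n'$ used in step~4), and by planarity the two diagonals of a pixel are never both in the graph; so $p_i$ and $p_{i+1}$ are opposite corners of a non-split pixel and that is all we need to know about it. The structural lemma I would establish first is: \emph{on a non-split pixel $R$ is strictly monotone along at least one of the two coordinate axes}. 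Writing the bilinear patch as $R=v_{1,1}(x-c_x)(y-c_y)+K$, where $c=(c_x,c_y)$ is the unique zero of $\nabla R$ (the case $v_{1,1}=0$ being the affine, hence trivially axis-monotone, one), non-splitness says $c$ is not interior to the unit pixel, while the no-equal-corners hypothesis forces $c$ off the pixel's boundary as well (each of $c_x\in\{0,1\}$ or $c_y\in\{0,1\}$ would equate two adjacent corner values); hence one of $x-c_x,\ y-c_y$ has constant sign, bounded away from $0$ on the closed pixel, and the partial derivative of $R$ in the other variable is of constant nonzero sign there.

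Assume without loss of generality that $R$ is strictly increasing in $y$ on the pixel. Then $y\mapsto R(x,y)$ is, for each fixed $x$, a strictly increasing bijection onto $[R(x,0),R(x,1)]$, so the level set $\{R=v\}$ inside the pixel is the graph of a function $x\mapsto y(x,v)$ over the $x$-interval $I_v=\{x:R(x,0)\le v\le R(x,1)\}$ (an interval, being cut out by the two affine functions $R(\cdot,0)$ and $R(\cdot,1)$); and the band $B=\{(x,y):R(p_i)\le R(x,y)\le R(p_{i+1})\}$ has interval vertical slices over an $x$-interval, hence is connected and contains the two corners $p_i,p_{i+1}$, whose local $x$-coordinates are $0$ and $1$. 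From the connectedness of $B$ one extracts a continuous selection $v\mapsto x(v)$ on $[R(p_i),R(p_{i+1})]$ with $x(v)\in I_v$ and the correct values at the ends; then $\gamma_i(v):=\bigl(x(v),\,y(x(v),v)\bigr)$ is continuous, satisfies $R(\gamma_i(v))=v$ identically, and joins $p_i$ to $p_{i+1}$ — exactly the required monotone curve.

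The side-edge case, the concatenation, the injectivity/embedding remark and the ``at most one crossing per isoline'' consequence are all routine. I expect the two steps that need genuine care are: (i) the lemma that a non-split pixel is monotone along an axis, where the point is to use the distinct-values hypothesis to keep the zero of $\nabla R$ strictly off the \emph{closed} pixel, not merely out of its interior; and (ii) producing the continuous selection $x(v)$ inside $B$ — elementary once $B$ is known to be connected with interval fibres (a lower-semicontinuous interval-valued map over an interval, agreeing with the prescribed endpoints on the end fibres, admits a continuous section), but worth spelling out. A minor degeneracy to dispose of along the way is a pixel side that is itself a constant isoline segment, which the no-equal-corners convention (the $\prec$ order) rules out.
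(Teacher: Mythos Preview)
Your argument is correct and, in the diagonal-edge case, proceeds along a genuinely different line from the paper. Both proofs reduce to a single graph edge and dispose of side edges by the obvious affine interpolation. For a diagonal over a non-split pixel, the paper argues by dichotomy: if the isoline $\{R=R(p_1)\}$ does not enter the pixel's interior, the straight diagonal segment itself is monotone; if it does, the paper follows that isoline out to the pixel boundary and then walks along the perimeter to $p_2$. Your route is more structural: you prove that on any non-split pixel with distinct corner values the bilinear patch is strictly monotone in one coordinate direction (because the unique critical point of the patch must lie strictly outside the \emph{closed} pixel), and then thread a curve of prescribed $R$-value through the resulting foliation by level graphs via a continuous selection $v\mapsto x(v)$. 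What this buys you is a uniformly strictly increasing $R$-profile by construction and no case split; the paper's isoline leg is constant in $R$, so its concatenated path is only non-decreasing and would still need a small perturbation to meet the paper's own strict definition of a monotonic increasing path. What the paper's approach buys is concreteness---its curve sits on the isoline and on the pixel perimeter, both explicitly computable---whereas your selection $x(v)$ is an existence statement (an easy one here, since $I_v$ has piecewise-affine endpoints in $v$, so e.g.\ the linear interpolant between the two prescribed endpoint $x$-values, clamped to $I_v$, does the job). Your remark that the distinct-corners hypothesis is what pushes the critical point off the boundary of the pixel, not merely out of its interior, is precisely the point that makes your axis-monotonicity lemma work.
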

\begin{proof}
We inductively proof the statement. Starting from a minimum, this is a collapsed increasing path. Any edge extending the previous path, which already is a monotonic path in R, maintains the property. If the edge $(p_1,p_2)$ covers a pixel side, the bilinear interpolation trivially produces a monotonic increasing path along the corresponding segment in $R$. If the edge covers a diagonal, it follows that there is no split pixel. Moreover, $p_1\prec p_2$. If there is a isoline of value $R(p_1)$ inside the pixel, then the isolines lands on the pixel perimeter and from there, a monotonic increasing path can be found around the perimeter, until $p_2$ is reached. If there is no isoline of value $R(p_1)$ inside the pixel, the segment $(p_1,p_2)$ is a monotonic increasing path in $R$.
 \qed
 \end{proof}
 
\begin{lemma}
A monotonic region contains no self intersecting isolines.
 \end{lemma}

A monotonic region contains no saddles by construction. A non saddle point inside a pixel is traversed by a single isoline. A point on the pixel side can be traversed by two isolines (one on each neighboring pixel) that merge at the side. A non mix-point corner contains two isolines that connect at the corner. Note that three isolines on three out of four pixels containing the corner impose that the fourth pixel contains an isoline (by continuity of the four piecewise bilinear functions).
It follows that no intersecting isolines are present in a monotonic region.

\begin{lemma}
The non collapsed part of a monotonic region has exactly one minimum and one maximum in image values on its perimeter.
 \end{lemma}
\begin{proof}
The perimeter is composed of differently oriented graph edges. Since the region contains no saddles and $R$ is continuous, all isolines in the region connect two specific points on the perimeter.  It follows that there is a total ordering of two subsets of perimeter points. In particular there is a minimum and a maximum on the perimeter, that correspond to collapsed isolines.
 \qed
 \end{proof}

\end{document}